\documentclass[a4paper,11pt,fleqn]{article}
\usepackage[ruled,vlined]{algorithm2e}
\usepackage{amsmath,amsfonts,amsthm}
\usepackage{thm-restate,thmtools}
\usepackage{algorithmic}
\usepackage{nicefrac}
\usepackage[utf8]{inputenc}
\usepackage{color}
\usepackage{dsfont}
\usepackage{xcolor,colortbl}
\usepackage{bm,bbm}
\usepackage{soul}
\usepackage{mathtools}
\usepackage{nicefrac}  
\usepackage{booktabs}
\usepackage{adjustbox}
\usepackage{hyperref}
\usepackage{url}
\usepackage{cleveref}
\usepackage[flushleft]{threeparttable}
\usepackage{multirow}
\usepackage[shortlabels]{enumitem}
\usepackage{pifont}
\usepackage{microtype}      %
\usepackage{graphicx}
\usepackage{natbib}
\usepackage{doi}
\usepackage{authblk}
\usepackage{adjustbox}

\usepackage{amsmath,amssymb,amsthm}
\usepackage{mathrsfs} 
\usepackage{euscript}
\usepackage{charter}
\usepackage{array}
\usepackage[normalem]{ulem}

\definecolor{labelkey}{rgb}{0,0.08,0.45}
\definecolor{refkey}{rgb}{0,0.6,0.0}
\definecolor{Brown}{rgb}{0.45,0.0,0.05}
\definecolor{dgreen}{rgb}{0.00,0.49,0.00}
\definecolor{dblue}{rgb}{0,0.08,0.75}

\hypersetup{
colorlinks,
linkcolor=dgreen,
citecolor=dblue
}

\topmargin     0.0cm
\oddsidemargin -0.2cm 
\textwidth     16.8cm
\headheight    0.0cm
\textheight    22.4cm
\parindent     6mm
\parskip       1pt
\tolerance     1000

\newcommand{\R}{\mathbb{R}}
\newcommand{\N}{\mathbb{N}}
\newcommand{\norm}[1]{\lVert#1\rVert}
\newcommand{\EE}{\mathbb{E}}

\newcommand{\A}{\mathcal{A}}
\newcommand{\B}{\mathcal{B}}
\newcommand{\C}{\mathcal{C}}

\newcommand{\multito}{\rightrightarrows}

\newcommand{\given}{\,|\,}

\newcommand{\Ll}{C_\lambda}
\newcommand{\Lu}{C_\lambda(u)}

\newcommand{\LG}{C_{G,\lambda}}
\newcommand{\LT}{C_{T,\lambda}}

\newcommand{\LE}{C_{E,\lambda}}

\newcommand{\hz}{z_{t}}

\newcommand{\sz}{z}

\newcommand{\indic}{\mathds{1}}
\newcommand{\jvp}{(w^\prime(\lambda)^\top y)^{\widehat{}}}

\DeclareMathOperator*{\argmin}{arg\,min}

\DeclareMathOperator{\gap}{\it{e}}

\DeclareMathOperator{\Var}{Var}
\DeclareMathOperator{\fix}{fix}
\DeclareMathOperator{\imp}{imp}

\DeclareMathOperator{\prox}{Prox}
\DeclareMathOperator{\Conv}{co}
\DeclareMathOperator{\relu}{ReLU}

\newcommand{\D}[1]{D_{#1}}

\DeclarePairedDelimiter\ceil{\lceil}{\rceil}

\theoremstyle{plain}
\newtheorem{theorem}{Theorem}[section]

\newtheorem{lemma}[theorem]{Lemma}

\theoremstyle{definition}
\newtheorem{definition}[theorem]{Definition}
\newtheorem{assumption}[theorem]{Assumption}
\theoremstyle{remark}
\newtheorem{remark}[theorem]{Remark}

\title{ { \sffamily Nonsmooth Implicit Differentiation:\\ Deterministic and Stochastic Convergence Rates
} }

\author[1]{Riccardo Grazzi}
\author[1, 2]{Massimiliano Pontil}
\author[1, 3]{Saverio Salzo}

\affil[1]{\footnotesize CSML, Istituto Italiano di Tecnologia, Via Enrico Melen 83, 16152 Genova, Italy}
\affil[2]{\footnotesize Department of Computer Science, UCL, Malet Place, London WC1E 6BT, UK}
\affil[3]{\footnotesize DIAG, Sapienza University of Rome, Via Ariosto, 25, 00185 Roma, Italy}

\date{}

\begin{document}
\maketitle

\begin{abstract}
We study the problem of efficiently computing the derivative of the fixed-point of a parametric nondifferentiable contraction map. This problem has wide applications in machine learning, including hyperparameter optimization, meta-learning 
and data poisoning attacks. We analyze two popular approaches: iterative differentiation (ITD) and approximate implicit differentiation (AID). A key challenge behind the nonsmooth setting is that the chain rule does not hold anymore. 
We build upon the work by \citet{bolte2022automatic}, who prove linear convergence of nonsmooth ITD under a piecewise Lipschitz smooth assumption. 
In the deterministic case, we provide a linear rate for AID and an improved linear rate for ITD which closely match the ones for the smooth setting.  
We also introduce NSID, a new stochastic method to compute the implicit derivative when the contraction map is defined as the composition of an outer map and an inner map which is accessible only through a stochastic 
unbiased estimator. 
We establish rates for the convergence of NSID, encompassing  the best available rates in the smooth setting.
We present illustrative experiments confirming our analysis.
\end{abstract}

 \vspace{1ex}
\noindent
{\bf\small Keywords.} 
{\small 
Bilevel optimization, hyperparameter optimization, stochastic algorithms,
nonsmooth optimization, implicit differentiation, conservative derivatives.}\\[1ex]

\section{Introduction}
\label{se:intro}

In this paper, we study the problem of efficiently approximating a generalized derivative (or Jacobian) of the solution map 
of the parametric fixed point equation
\begin{equation}\label{eq:fixedpoint}
    w(\lambda) = \Phi(w(\lambda), \lambda)\quad(\lambda \in \R^m),
\end{equation}
when $\Phi$ is not differentiable, but only %
piecewise differentiable.
We address both the case that $\Phi$ can be explicitly evaluated, and the case that 
$\Phi$ has the composite form
\begin{equation}\label{eq:fixedpointstoch}
\begin{aligned}
     \Phi(w,\lambda) &= G(T(w,\lambda),\lambda)\\
     T(w,\lambda)& = \EE[\hat T_\xi (w(\lambda),\lambda)],
\end{aligned}
\end{equation}
where the external map $G$ can be evaluated, but the inner map $T$ 
is accessible only via a stochastic estimator $\hat T_\xi$, with $\xi$ a random variable.

A main motivation for computing the \textit{implicit} derivative of \eqref{eq:fixedpoint} is provided by 
bilevel 
optimization, which aims to minimize an upper level objective function of $w(\lambda)$.  
Important examples are given by hyperparameter optimization and meta-learning \citep{franceschi2018bilevel,lee2019meta}, where \eqref{eq:fixedpoint} expresses the optimality conditions of a lower-level minimization problem. %
Further examples include learning a surrogate model for data poisoning attacks \citep{xiao2015feature,munoz2017towards}, 
deep equilibrium models \citep{bai2019deep} or OptNet \citep{amos2017optnet}. All these problems may present nonsmooth mappings $\Phi$. For instance, consider hyperparameter optimization or data poisoning attacks for SVMs, or meta-learning for image classification, where $\Phi$ is evaluated through the forward pass of a neural net with RELU %
activations %
\citep{bertinetto2019meta,lee2019meta,rajeswaran2019meta}. In addition, when such settings are applied to large datasets, evaluating the map $\Phi$ would be too costly, but we can usually apply stochastic methods through the composite stochastic structure in \eqref{eq:fixedpointstoch}, where only $T$ involves a computation on the full training set (e.g., a gradient descent step).

Nowadays, automatic differentiation techniques \citep{griewank2008evaluating} popular for deep learning, can also be used to efficiently,
i.e. with a cost of the same order of that of approximating $w(\lambda)$,
approximate Jacobian-vector (or vector-Jacobian) products of $w(\lambda)$ by relying only on an implementation of an iterative solver for problem~\eqref{eq:fixedpoint}.
There are two main approaches to achieve this: ITerative Differentiation (ITD) (e.g.,\@ \citet{maclaurin2015gradient,franceschi2017forward}), which differentiates through the steps of the solver for~\eqref{eq:fixedpoint}, and Approximate Implicit Differentiation (AID) (e.g.,\@ \citet{pedregosa2016hyperparameter,lorraine2019optimizing}), which relies on approximately solving the linear system emerging from the implicit expression for the Jacobian-vector product. Despite the analysis of such methods has been usually done in the case that $\Phi$ is smooth, there are now several open source implementations relying on popular deep learning frameworks (e.g.,\@ \citet{grazzi2020iteration,blondel2022efficient,liu2021boml}), which practitioners can use even when $\Phi$ is not differentiable. However, when $\Phi$ is not differentiable despite existing algorithmic proposals \citep{ochs2015bilevel,frecon2018bilevel}, establishing theoretical convergence guarantees is challenging, since even if the solution map $w$ is almost everywhere differentiable and the  Clarke subgradient is well defined, the chain rule of differentiation, exploited by AID and ITD approaches, does not hold. 

Recently \citet{bolte2021conservative} introduced the notion of conservative derivatives as an effective tool to rigorously address automatic differentiation of neural networks with nondifferentiable activations (e.g., ReLU).
Moreover, if $\Phi(\cdot,\lambda)$ is a contraction and under the general assumption that $\Phi$ is
piecewise Lipschitz smooth with finite pieces, \citet{bolte2022automatic} provide an asymptotic linear convergence rate for deterministic ITD.\footnote{Therein,  referred to as piggyback automatic differentiation.} 
However, such rate is worse than that of the smooth case and we are not aware of any result of such type for the AID method and for the  stochastic setting of problem~\eqref{eq:fixedpointstoch}, even when $G(v,\lambda) = v$. In particular the compositional structure \eqref{eq:fixedpointstoch} allows us to cover e.g., proximal stochastic gradient methods, which are a common and practical example of nonsmooth optimization algorithms, but it adds additional challenges since we do not have access to an unbiased estimator of $\Phi$ as for the smooth stochastic case studied in \citep{grazzi2021convergence,grazzi2023bilevel}.

\paragraph{Contributions}
We present theoretical guarantees on AID and ITD for the approximation of the conservative derivative of the fixed point solution of \eqref{eq:fixedpoint}, building upon the framework of \citet{bolte2022automatic}. Specifically:
\begin{itemize}
\item We prove non-asymptotic linear convergence rates for deterministic ITD and AID which, from one hand extend the rates for the case where $\Phi$ is Lipschitz smooth given in \citep{grazzi2020iteration}, which are fully recovered as a special case, and on the other end, improve the result in \citep{bolte2022automatic} for nonsmooth ITD. The given bounds indicate that AID converges faster than ITD, which we verify empirically. We also identify cases in which this difference in performance in favor of AID might be large due to nondifferentiable regions. 
\item We propose the first nonsmooth stochastic AID approach with proven convergence rates, which we name \emph{nonsmooth stochastic implicit differentiation} (NSID). Notably, we prove that NSID can converge to a true conservative Jacobian-vector product with rate $O(1/k)$, where $k$ is the number of samples, provided that the fixed-point problem is solved with rate $O(1/k)$. 
\item Finally, we provide experiments on two bilevel optimization problems, i.e. hyperparameter optimization and adversarial poisoning attacks, confirming our theoretical findings.
\end{itemize}

\paragraph{Related Work}
When $\Phi$ is differentiable and under some regularity assumptions, approximation guarantees have been established for AID and ITD approaches in the deterministic setting \citep{pedregosa2016hyperparameter,grazzi2020iteration}, and for AID in the special case of the stochastic setting \eqref{eq:fixedpointstoch} where $G(v,\lambda) = v$ 
\citep{grazzi2021convergence,grazzi2023bilevel}. Furthermore, several works established convergence  rates and, in the stochastic setting, sample complexity results for bilevel optimization algorithms relying on AID and ITD approaches, see e.g., \citep{ghadimi2018approximation, ji2021bilevel,arbel2021amortized,chen2021closing}.\\
Aside from \citep{bolte2022automatic}, in the nonsmooth case, \citet{bertrand2020implicit,bertrand2022implicit} present deterministic and sparsity-aware nonsmooth ITD and AID procedures together with asymptotic linear convergence guarantees when $w(\lambda)$ is the solution of a composite minimization problem where one component has a sum structure. Contrary to this work and to \citep{bolte2022automatic}, their results rely on some differentiability assumptions on the algorithms, which are verified after a finite number of iterations. For bilevel optimization, some recent works have provided stochastic algorithms with convergence rates for the special case where the lower-level problem has linear \citep{khanduri2023linearly} or equality \citep{xiao2023alternating} constraints. 

\section{Preliminaries}
\label{sec:prel}

\paragraph{Notation}
If $U$ and $V$ are two nonempty sets, we denote by $F\colon U \multito V$ a \emph{set-valued mapping} which associates to an element of $U$ a subset of $V$. A \emph{selection} of $F$ is a single-valued function $f\colon U\to V$ such that, for every $x\in U$, $f(x) \in F(x)$. We denote with $\norm{\cdot}$ the Euclidean and operator norm when applied to vectors and matrices, respectively. Set inclusion is denoted by $\subset$.
We define Minkowski operations on sets of matrices as follows:
if $\A, \B \subset \R^{n \times p}$ and $\C \subset \R^{p \times d}$ then
\begin{eqnarray*}
  \A + \B &:=& \{A + B \,\vert\, A \in \A, B \in \B \},\\  
  \A\, \C &:=& \{AC \,\vert\, A \in \A, C \in \C \}\\
  \A^* &:=& \{A^* \,\vert\, A \in \A \}\ \quad \text{with}\ * \in \{\top, -1\}.
\end{eqnarray*}
We let $\Conv(\A)$ be the convex envelope of $\A$, and define $\norm{\A}_{\sup} = \sup\{\norm{A} \,\vert\, A \in \A\}$. 
It will be convenient to define for every $\A \subset \R^{n \times (p_1+p_2)}$ the map acting between sets of matrices, which we still denote by $\A$, such that for every $\mathcal{X} \subset \R^{p_1\times p_2}$
\begin{equation}\label{eq:matrixaffine}
    \A(\mathcal{X}) {: =} \A \, \begin{bmatrix} \mathcal{X} \\ I_{p_2}\end{bmatrix} {:=} \big\{ A_1 X + A_2 \,\big\vert\, [A_1, A_2] \in \A, X \in \mathcal{X}\big\},
\end{equation}
where $I_{p_2}$ is the identity matrix of dimensions $p_2\times p_2$.

For any integer $r \geq 1$ we set $[r] = \{1,\dots, r\}$. If $F\colon \R^{p_1+p_2}\to \R^d$ is differentiable, we denote by $F^\prime(x)\in \R^{d\times(p_1+p_2)}$ the derivative of $F$ (its Jacobian) at $x$ and by $\partial_1 F(x)\in \R^{d\times p_1}$ and $\partial_2 F(x)\in \R^{d\times p_2}$ the partial derivatives of $F$ with respect to the first and second block of variables respectively. 
Let $F: U \subset \R^d \to \R^p$, we say that $F$ is smooth (nonsmooth) if it is differentiable (not differentiable) and  \emph{Lipschitz smooth with constant $L$} or \emph{$L$-smooth} if it is smooth and $ \exists L> 0$  such that $\forall x,y \in  U$ $\norm{F'(x) - F'(y)} \leq L\norm{x-y}$.  
For a random vector $\xi \in \R^d$, we denote with $\EE[\xi]$ its expectation and with $\Var[\xi] = \EE\norm{\xi - \EE[\xi]}^2$ its variance.
In our assumptions we will consider the class of the so called \textit{definable} functions, which includes the large majority of functions used for machine learning applications (see \Cref{se:definable}).

\subsection{Conservative Derivatives}
We provide some definitions related to path differentiability and sets of matrices and vectors. They are mostly borrowed, possibly with slight modifications, from \citep{bolte2021conservative}, where additional details can be found.

\begin{definition}[Conservative Derivatives]
Let $U\subset \R^p$ be an open set and $F\colon U\subset \R^p \to \R^d$ be a locally Lipschitz continuous mapping.
We say that a set-valued mapping $\D{F}\colon U \multito \R^{d \times p}$ is a \emph{conservative derivative} of $F$, 
if $\D{F}$ has closed graph, nonempty compact values, and for every absolutely continuous curve $\gamma\colon[0,1] \to U \subset \R^p$ we have that, for almost every $t \in [0,1]$
\begin{equation}
\label{eq:CD}
    \frac{d}{dt} F(\gamma(t)) = V \gamma^\prime(t),\quad\forall\,V\in \D{F}(\gamma(t)).
\end{equation}
The function $F$ is called \emph{path differentiable}
if it admits
a conservative derivative.
\end{definition}

Conservative derivatives are extensively analyzed in \citep{bolte2021conservative}. Some key properties 
are that: (1) they are almost everywhere single-valued and equal to classical derivatives; (2) for path differentiable functions, the Clarke subgradient is the minimal  conservative derivative up to a convex envelope; (3) chain rule holds for conservative derivatives; (4) locally Lipschitz definable mappings admit conservative derivatives. We also point out that -- as it is usual for generalized derivatives -- conservative derivatives are unique only up to a set of Lebesgue measure zero. 
This accounts for the fact that there are multiple ways to express a path differentiable function as a composition of others but applying the chain rule produces conservative derivatives that can differ but are always valid.

Similarly to \citep{bolte2022automatic}, to address the fact that conservative derivatives are set-valued mappings, we will use the following quantity to measure the error in the conservative derivative approximation.
\begin{definition}[Excess] Let $\A$ and $\B$ be two bounded subsets of matrices or vectors. 
The \emph{excess}\footnote{$\gap$ is referred in \citep{bolte2022automatic} as gap, while the standard name is excess \citep[Section 1.5]{beer1993topologies}.} \emph{of $\A$ over $\B$} is
\begin{equation*} 
    \gap(\A, \B) := \sup_{A\in \A} \inf_{B \in B} \norm{A - B}.
\end{equation*}    
\end{definition}
Note that $\gap(\A,\B) = 0 \implies \A\subset \B$ and $\norm{\A}_{\sup} = \gap(\A,\{0\})$. The excess 
satisfies several properties similar to the ones of a distance, even though it is not symmetric (see \Cref{lm:gapprop}). 
Similarly to \citep{Scholtes2012} we give the following concept of piecewise continuity and smoothness (which is slightly more general than that given in \citep{bolte2021conservative}).
\begin{definition}
Let $F_1, \dots, F_r\colon U\subset \R^p\to \R^d$
be continuous mappings defined on
 a nonempty open set $U$. A \emph{continuous selection of $F_1, \dots, F_r$} is a continuous mapping $F\colon U\to \R^d$ such that
for every $x \in U\colon F(x) \in \{F_1(x), \dots, F_r(x)\}$. In such case the \emph{active index set mapping}
is the set-valued mapping $I_F\colon U \multito [r]$, with 
$I_F(x)=\{i \in [r] \,\vert\, F_i(x) = F(x)\}$. Moreover, if the $F_i$'s are differentiable we set $\D{F}^s\colon U\subset \R^p \multito \R^{d\times p}$ such that
\begin{equation}
\label{eq:sCD}
\D{F}^s(x) = \Conv(\{F^\prime_i(x) \,\vert\, i \in I_F(x)\}),
\end{equation}
where $F^\prime_i(x)$ is the classical derivative (Jacobian) of $F_i$ at $x$.
\end{definition}

\begin{theorem}\label{thm:20240129a}
Let  $F\colon U\subset\R^p \to \R^d$ be a continuous selection of 
definable and continuously differentiable mappings $F_1, \dots, F_r\colon U \to \R^d$.
Then $F$ is definable if and only if $I_F\colon\R^p\multito [r]$ is definable, and in such case
 $\D{F}^s$ is a conservative derivative of $F$.
\end{theorem}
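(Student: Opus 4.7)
The proof splits naturally into the definability equivalence and the verification that $\D{F}^s$ meets the three properties defining a conservative derivative. For the equivalence, both directions follow from finite Boolean operations on definable sets. If $F$ is definable, the graph of $I_F$ equals $\bigcup_{i=1}^r \{(x,i) \in U \times [r] : F_i(x) = F(x)\}$, a finite union of zero sets of the definable maps $F_i - F$, hence definable. Conversely, if $I_F$ is definable, its fibers $S_i := \{x : i \in I_F(x)\}$ are definable, and
\begin{equation*}
    \mathrm{graph}(F) \;=\; \bigcup_{i=1}^r \big(\mathrm{graph}(F_i) \cap (S_i \times \R^d)\big),
\end{equation*}
so $F$ is definable as a finite union of definable sets.

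For the conservative derivative conclusion, I would first recall that a continuous selection of $C^1$ mappings is locally Lipschitz \citep{Scholtes2012}, so $F\circ\gamma$ is AC whenever $\gamma$ is. The value $\D{F}^s(x)$ is the convex hull of at most $r$ matrices, hence nonempty and compact. For closed graph, the key ingredient is upper semicontinuity of $I_F$: if $x_n\to x$ and $i$ belongs to infinitely many $I_F(x_n)$, continuity of $F$ and $F_i$ gives $F(x)=\lim F(x_n)=\lim F_i(x_n)=F_i(x)$, so $i\in I_F(x)$; hence $I_F(x_n)\subset I_F(x)$ eventually. Any $V_n\in\D{F}^s(x_n)$ may then be written as $V_n=\sum_{i\in I_F(x)}\alpha^n_i F'_i(x_n)$ with $(\alpha^n_i)_i$ in the simplex over $I_F(x)$ (extending by zero for $i\notin I_F(x_n)$), and compactness of this simplex together with continuity of each $F'_i$ delivers $V\in\D{F}^s(x)$ on passing to a subsequential limit.

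The heart of the argument is the chain rule \eqref{eq:CD}. Fix an absolutely continuous curve $\gamma\colon[0,1]\to U$ and, for each $i\in[r]$, set $E_i:=\{t\in[0,1]:i\in I_F(\gamma(t))\}=\{t:(F-F_i)(\gamma(t))=0\}$, a closed subset of $[0,1]$. On $E_i$ the AC function $g_i:=(F-F_i)\circ\gamma$ vanishes identically. By the Lebesgue density theorem almost every $t\in E_i$ is a density point of $E_i$, and $g_i$ is differentiable almost everywhere; at any $t$ satisfying both, any sequence $s_n\in E_i$ with $s_n\to t$ yields $(g_i(s_n)-g_i(t))/(s_n-t)=0$, so $g_i'(t)=0$, equivalently $(F\circ\gamma)'(t)=F'_i(\gamma(t))\gamma'(t)$. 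Removing the union of the finitely many exceptional sets, we obtain a full-measure set of $t$'s on which this identity holds for every $i\in I_F(\gamma(t))$, and by convexity it extends to every $V\in\D{F}^s(\gamma(t))$. The main obstacle is precisely this density-point step: although $F$ is nonsmooth, on each closed stratum $E_i$ it coincides with the smooth $F_i$, and Lebesgue density converts this pointwise coincidence into the required derivative identity a.e., which is exactly what powers a chain rule for $\D{F}^s$.
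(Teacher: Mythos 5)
The paper does not actually contain a proof of this theorem. The definability equivalence is asserted parenthetically inside the proof of Theorem~\ref{thm:20240130a} ("since the $F_i$'s are definable, it is easy to see that $F$ is definable if and only if $I_F$ is definable"), and the conservative-derivative part is presented as a mild generalization of results in \citet{bolte2021conservative} without an argument. Your proof therefore supplies a missing argument rather than competes with one.

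Your proof is correct. The equivalence goes through by finite Boolean and projection operations on definable sets exactly as you describe: $\mathrm{graph}(I_F)=\bigcup_i \{x: F_i(x)=F(x)\}\times\{i\}$ in one direction, $\mathrm{graph}(F)=\bigcup_i (\mathrm{graph}(F_i)\cap(S_i\times\R^d))$ with $S_i$ a definable fiber of $I_F$ in the other. For the conservative-derivative claim, the three defining properties are verified cleanly. Compact nonempty values are immediate (nonempty because $I_F(x)\neq\emptyset$ for a continuous selection). Closed graph follows from upper semicontinuity of $I_F$ together with compactness of the simplex over $I_F(x)$ and continuity of the $F_i'$. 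The chain-rule identity \eqref{eq:CD} comes from stratifying $[0,1]$ by the closed sets $E_i=\{t:(F-F_i)(\gamma(t))=0\}$ and noting that the AC function $g_i=(F-F_i)\circ\gamma$, vanishing on $E_i$, has zero derivative at almost every $t\in E_i$, namely at density points where $g_i'$ exists. Combined with the classical chain rule for $F_i\circ\gamma$ (valid wherever $\gamma'$ exists, since $F_i\in C^1$) and after removing finitely many null exceptional sets, this gives $(F\circ\gamma)'(t)=F_i'(\gamma(t))\gamma'(t)$ simultaneously for all $i\in I_F(\gamma(t))$ at a.e.\ $t$, which extends to all of $\D{F}^s(\gamma(t))$ by convexity. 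One remark: your density-point argument never invokes definability, so the conservative-derivative conclusion in fact holds for any continuous selection of $C^1$ mappings; definability enters only the biconditional (and, later in the paper, the o-minimal finiteness argument in Theorem~\ref{thm:20240130a}). This is a harmless over-delivery and consistent with the way the theorem is phrased.
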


We can also define partial conservative derivatives. 
If $p=p_1+p_2$ and $F\colon U\subset \R^{p_1+p_2}\to \R^d$,
we have $\D{F}\colon U \multito \R^{d\times(p_1+p_2)}$
and we set $\D{F,1}\colon U\multito \R^{d\times p_1}$ and
$\D{F,2}\colon U\multito \R^{d\times p_2}$ such that for $j \in \{1,2\}$
\begin{equation*}
\D{F,j}(x) = \big\{A_j \,\vert\, [A_1,A_2] \in \D{F}(x)\big\}.
\end{equation*}
Finally, we denote by $F'(x)$ an arbitrary element of $\D{F}(x)$ and by $\partial_1 F(x) \in \R^{d \times p_1}$ and $\partial_2 F(x) \in \R^{d \times p_2}$ the first and second block component of $F'(x)$ respectively, 
which yield the classical (partial)  derivatives if $F$ is differentiable.
By building on \citep[Lemma 3]{bolte2022automatic}, we prove the following result (the proof is in Appendix~\ref{app:aux}).
\begin{lemma}%
\label{lm:glolip} 
Let $F\colon U\subset\R^p \to \R^d$ be
 a continuous definable selection of the definable Lipschitz smooth mappings
$F_1,\dots, F_r\colon U\to \R^d$. Let $L_i$
be the Lipschitz constant of $F'_i$ and set $L = \max_{1\leq i \leq r} L_i$.
Then for every $x \in U$, there exist $R_x > 0$ such that for every $x' \in U$ 
\begin{equation}\label{eq:glolip}
\gap(\D{F}^s(x'), \D{F}^s(x)) \leq L_x(x')\norm{x-x'},
\end{equation}
where 
\begin{equation*}
    L_x(x') := \begin{cases}
        L & \text{if }  \norm{x-x'} \leq R_x \\
        L + M_x/R_x &\text{otherwise}
    \end{cases}
    \quad\text{and}\quad M_x : = \max_{i\in [m]}\min_{j \in I_F(x)} \norm{F^\prime_i(x)-F^\prime_j(x)}.
\end{equation*}
\end{lemma}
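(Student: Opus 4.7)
The plan is to reduce the inequality to a pointwise comparison among the smooth pieces $F_1,\dots,F_r$, using continuity to control how $I_F(x')$ can differ from $I_F(x)$, and then apply the Lipschitz smoothness of each $F_i$ to the resulting convex combinations. The main subtlety is that $\D{F}^s(x')$ may involve indices not active at $x$ when $x'$ is far from $x$, which is precisely why the bound needs to degrade from $L$ to $L+M_x/R_x$ in the far regime.

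First, I would establish a local containment of active sets: there exists $R_x>0$ with $B(x,R_x)\subset U$ such that $I_F(x')\subset I_F(x)$ for every $x'\in B(x,R_x)$. The argument is purely topological: for each $i\notin I_F(x)$ the continuous function $\norm{F_i-F}$ is strictly positive at $x$, hence stays positive on an open neighborhood $V_i$ of $x$; one then sets $R_x$ small enough that $B(x,R_x)\subset U\cap \bigcap_{i\notin I_F(x)} V_i$, which is a finite intersection of open sets containing $x$.

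Second, in the close regime $\norm{x-x'}\leq R_x$, I would pick any $A\in \D{F}^s(x')$ and write it as $A=\sum_{i\in I_F(x')}\alpha_i F'_i(x')$ with $\alpha_i\geq 0$ summing to $1$. Because $I_F(x')\subset I_F(x)$, the matrix $B:=\sum_i \alpha_i F'_i(x)$ is a convex combination of derivatives indexed in $I_F(x)$, hence $B\in \D{F}^s(x)$. Applying the $L_i$-smoothness of each $F_i$ term by term gives
\[
\norm{A-B}\leq \sum_i \alpha_i \norm{F'_i(x')-F'_i(x)}\leq L\norm{x-x'},
\]
and taking the infimum over $B\in\D{F}^s(x)$ and then the supremum over $A\in\D{F}^s(x')$ yields $\gap(\D{F}^s(x'),\D{F}^s(x))\leq L\norm{x-x'}$.

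Third, in the far regime $\norm{x-x'}>R_x$, containment of active sets can fail, so I would bridge each $F'_i(x')$ with $i\in I_F(x')$ by some $F'_{j(i)}(x)$ with $j(i)\in I_F(x)$ attaining $\min_{j\in I_F(x)}\norm{F'_i(x)-F'_j(x)}$, whose value is bounded by $M_x$ by definition. For $A=\sum_i \alpha_i F'_i(x')\in\D{F}^s(x')$, set $B:=\sum_i \alpha_i F'_{j(i)}(x)\in\D{F}^s(x)$; a triangle inequality split into a smoothness part and an ``active-set jump'' part gives
\[
\norm{A-B}\leq \sum_i\alpha_i\bigl(\norm{F'_i(x')-F'_i(x)}+\norm{F'_i(x)-F'_{j(i)}(x)}\bigr)\leq L\norm{x-x'}+M_x.
\]
Since $\norm{x-x'}>R_x$ implies $M_x\leq (M_x/R_x)\norm{x-x'}$, combining yields the required bound $(L+M_x/R_x)\norm{x-x'}$, and taking the appropriate inf/sup concludes.

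The hard part is really the first step: showing that active indices at nearby points are already active at $x$. Once this is in place the rest is a routine convex-combination estimate; $M_x$ is tailored precisely so that the triangle inequality in the far regime pays only a single ``constant'' penalty, which can be absorbed into the Lipschitz form by trading against $R_x$. The definability assumption is not used here beyond ensuring that $\D{F}^s$ is the right object to analyse (via Theorem~\ref{thm:20240129a}); the proof itself relies only on continuity of the $F_i$'s and their individual smoothness constants.
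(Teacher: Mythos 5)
Your proof is correct, and it takes a genuinely different --- more elementary --- route than the paper in the crucial ``close'' regime. The paper obtains the radius $R_x$ from Theorem~\ref{thm:20240130a} (essentially Lemma~4 of \citep{bolte2022automatic}), whose proof goes through o-minimality: it shows that the definable set-valued map $\rho \mapsto I_F(B_\rho(x))$ is piecewise constant on $\left]0,+\infty\right[$, hence constant on some interval $\left]0,R_x\right]$. You replace this with a plain upper-semicontinuity argument for the active index set, which does the job without any appeal to definability: for each $i\notin I_F(x)$ the continuous function $\lVert F_i-F\rVert$ is positive at $x$ and therefore positive on an open neighborhood, and a finite intersection yields $R_x$ with $I_F(x')\subset I_F(x)$ on $B(x,R_x)$, after which the bound $L\lVert x-x'\rVert$ follows from termwise Lipschitz smoothness of each $F_i'$. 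This confirms your remark that definability plays no role in the lemma's proof beyond certifying (via Theorem~\ref{thm:20240129a}) that $\D{F}^s$ is a conservative derivative. The far regime is also handled differently: the paper routes the excess through the intermediate set $\A=\Conv\{F'_i(x)\,\vert\,i\in I_F(x')\}$, bounds $\gap(\D F^s(x'),\A)$ by $L\lVert x-x'\rVert$, and bounds $\gap(\A,\D{F}^s(x))$ by $M_x$ via a joint-convexity argument that pushes the maximum to simplex vertices; you instead bridge each $F'_i(x')$ directly to $F'_{j(i)}(x)$ with $j(i)\in I_F(x)$ realizing the minimum in the definition of $M_x$, and pay at most $M_x$ by a plain triangle inequality. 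Both routes arrive at the same intermediate bound $L\lVert x-x'\rVert+M_x$, which is then converted into the stated form via $M_x\le (M_x/R_x)\lVert x-x'\rVert$ when $\lVert x-x'\rVert>R_x$; your version is shorter and self-contained.
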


Note that in the smooth case ($r=1$), \eqref{eq:glolip} corresponds to global $L$-smoothness (since $M_x=0$), while in general it is weaker. In particular, the quantity $L + \frac{M_x}{R_x}$ is well defined even when $F$ is not differentiable at $x$, but blows up when $x$ approaches a point of non-differentiability, e.g.,~for $\relu(x)=\max(0, x)$, $\lim_{x \to 0^+} M_x/R_x = \infty$, since if $x \neq 0$ $M_x = 1$ and $R_x = \lvert x \rvert$, while for $x=0$, $M_x/R_x = 0$ since $M_x=0$ and $R_x > 0$ can be chosen arbitrarily. 

\section{Differentiating a Parametric Fixed Point}
\label{sec:3}

\paragraph{Instances of Parametric Fixed Point Equations}
A general class of problems that can be recast in the form \eqref{eq:fixedpoint} is that of the parametric monotone inclusion problem
\begin{equation}
\label{eq:20240201b}
    0 \in A_\lambda (w) + B_\lambda (w),
\end{equation}
where $A_\lambda\colon \R^d\multito \R^d$ and $B_\lambda\colon \R^d \to \R^d$ are multi-valued and single-valued maximal monotone operators respectively. These types of problems are at the core of convex analysis and can cover a number of optimizations problems including minimization problems as well as variational inequalities and saddle points problems. It is a standard fact (see \citep{Bauschke2017}) that \eqref{eq:20240201b} can be rewritten as the equation
\begin{equation*}
R_{\gamma A_{\lambda}}(w - \gamma B_\lambda (w)) = w \quad (\gamma>0),
\end{equation*}
where $R_{\gamma A_{\lambda}}$ is the resolvent of 
the operator $\gamma A_{\lambda}$. This gives a fixed-point equation of a composite form, and comparing with \eqref{eq:fixedpointstoch}, it is clear that we can also address situations in which $B_\lambda = \EE[\hat{B}_{\lambda}(\cdot, \xi)]$.
\citet{bolte2024differentiating} investigates conservative derivatives of the solution map of such monotone inclusion problems in nonsmooth settings.

A special case of \eqref{eq:20240201b} 
is the minimization problem 
\begin{equation}\label{eq:composite}
    \min_{w} \EE [\hat f_\lambda(w, \xi)] +  g_\lambda(w),
\end{equation}
where $f_\lambda = \EE \hat f_\lambda(\cdot,\xi)$ is  convex $L$-smooth, while $g_\lambda$ is convex lower semicontinuous extended-real valued. This can be cast into \eqref{eq:fixedpointstoch} by setting $\eta \in \left[0,2/L\right[$,  $\hat T_\xi(w,\lambda) = w - \eta \hat \nabla f_\lambda(w,\xi)$ and $G(w,\lambda) = \prox_{\eta g_\lambda}(w)$ with $\prox_{h}(x) =\argmin_{y}(h(x) + (1/2)\norm{x-y}^2)$ being the proximity operator of $h$. Several machine learning problems can be written in form \eqref{eq:composite} where $g_\lambda$ is nonsmooth, e.g., \@ LASSO, elastic net, (dual) SVM.

\paragraph{Main assumptions}
Referring to problem~\eqref{eq:fixedpoint}, when $\Phi$ is differentiable and $\norm{\partial_1\Phi(w(\lambda),\lambda)}\leq q<1$,
by differentiating \eqref{eq:fixedpoint} we have
\begin{equation}
\label{eq:20240201a}
\begin{aligned}
w^\prime(\lambda) & = \partial_1\Phi(w(\lambda),\lambda)w^\prime(\lambda) + \partial_2\Phi(w(\lambda),\lambda) \\
w^\prime(\lambda) &= (I- \partial_1\Phi(w(\lambda),\lambda))^{-1}\partial_2\Phi(w(\lambda),\lambda).
\end{aligned}
\end{equation}
The first relation above shows that $w^\prime(\lambda) \in \R^{d\times p}$ is a fixed point of the map
$X\mapsto \partial_1\Phi(w(\lambda),\lambda)X + \partial_2\Phi(w(\lambda),\lambda)$.
Here, dealing with the nonsmooth case, we will mimic the above formulas.
The crucial assumption of our analysis is the following.
\begin{assumption}\label{ass:lipsel} Let $O_\Lambda \subset \R^m$ be an open set and $\Lambda \subset O_\Lambda$ be a nonempty closed and convex set.
\begin{enumerate}[label={\rm (\roman*)}]
\item\label{ass:lipsel_i} $\Phi\colon \R^d\times O_\Lambda\to \R^d$ is definable and a continuous selection of the $L$-Lipschitz smooth definable mappings 
$\Phi_1,\dots, \Phi_r$ and we set
$\D{\Phi}\colon\R^d\times O_\Lambda\multito \R^{d\times(d+m)}$, %
\begin{equation}
\label{eq:product}
\D{\Phi}(u,\lambda) {=} \D{\Phi}^s(u,\lambda)
{=}\Conv(\{\Phi^\prime_i(u,\lambda) \,\vert\, i \in I_\Phi(u,\lambda)\}).
\end{equation}
\item\label{ass:lipsel_iii} For all $(u,\lambda) \in \R^p\times O_\Lambda$,
$\norm{\D{\Phi,1}(u,\lambda)}_{\sup} \leq q<1$.
\end{enumerate}
\end{assumption}

Theorem~\ref{thm:20240129a} ensures that $\D{\Phi}$, 
as defined in \eqref{eq:product}, is a conservative derivative of $\Phi$.
Moreover, recalling \eqref{eq:CD}, it is easy to see that  \Cref{ass:lipsel}\ref{ass:lipsel_iii}  ensures that $\Phi(\cdot, \lambda)$ is a $q$-contraction
and hence that there exists a unique fixed point of $\Phi(\cdot,\lambda)$ that we will denote by  $w(\lambda)$.
Finally, if $A \in \D{\Phi, 1} (u,\lambda)$, we have $\norm{A}<1$ and hence $I-A$ is invertible.
Thus, mimicking what happens  for the smooth case in \eqref{eq:20240201a} one defines
\begin{gather}
  \label{eq:Jimp}
\D{w}^{\imp} (\lambda) {=} \big\{ (I-A_1)^{-1}A_2 \,\vert\, [A_1, A_2] \in \D{\Phi}(w(\lambda),\lambda) \big\}\\
\D{w}^{\fix} \colon \lambda \multito \mathrm{fix}[\D{\Phi}(w(\lambda),\lambda)], \label{eq:Jfix}
\end{gather}
where 
$\mathrm{fix}[\D{\Phi}(u,\lambda)]$ is the unique fixed ``point'' 
of the map  $\mathcal{X} \mapsto \A(\mathcal{X})$, where $\A = \D{\Phi}(u,\lambda)$ (see equation \eqref{eq:matrixaffine}),
which acts between compact sets of $d\times m$ matrices.
In \citep{bolte2021nonsmooth} it is proved that if $\Phi$ is path differentiable and Assumption~\ref{ass:lipsel}\ref{ass:lipsel_iii} holds, 
the set-valued mappings $\D{w}^{\imp}$
and $\D{w}^{\fix}$ are both conservative derivatives of $w(\lambda)$ and 
 $\D{w}^{\text{imp}}(\lambda) \subset \D{w}^{\fix}(\lambda)$.

\Cref{ass:lipsel} yields the following lemma through a direct application of \Cref{lm:glolip}.
\begin{lemma}\label{lm:glolipphi}
Under \Cref{ass:lipsel}\ref{ass:lipsel_i}, for every $\lambda \in \Lambda$, there exist $R_\lambda > 0$ such that for every $u \in \R^d$
\begin{equation*}
\gap(\D{\Phi}(u,\lambda), \D{\Phi}(w(\lambda),\lambda)) \leq \Lu \norm{u -w(\lambda)},
\end{equation*}
where
\begin{equation}\label{eq:lipJphi}
    \Lu := \begin{cases}
        L & \text{if } \norm{u -w(\lambda)} \leq R_\lambda \\
        L +M_\lambda/R_\lambda & \text{otherwise}
    \end{cases}
\end{equation}
and $M_\lambda\! := \max\limits_{i\in[r]}\min\limits_{j \in I_\Phi(w(\lambda),\lambda)} 
\norm{\Phi^\prime_i(w(\lambda),\lambda)-\Phi^\prime_j(w(\lambda),\lambda)}$.
\end{lemma}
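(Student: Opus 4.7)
My plan is to obtain the result as a direct specialization of \Cref{lm:glolip} applied to the mapping $\Phi$ viewed as a function on $\R^d\times O_\Lambda$, evaluated at two points that differ only in the first block.

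First, I fix $\lambda \in \Lambda$. \Cref{ass:lipsel}\ref{ass:lipsel_iii} guarantees that $\Phi(\cdot,\lambda)$ is a $q$-contraction, so by Banach's fixed point theorem the point $w(\lambda)$ is well defined. Next, \Cref{ass:lipsel}\ref{ass:lipsel_i} asserts exactly the hypotheses needed to invoke \Cref{lm:glolip}: $\Phi$ is a continuous definable selection of the definable Lipschitz smooth mappings $\Phi_1,\dots,\Phi_r$, each with Lipschitz smoothness constant at most $L$, and by \eqref{eq:product} the conservative derivative $\D{\Phi}$ coincides with $\D{\Phi}^s$ on its full domain $\R^d\times O_\Lambda$.

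I then apply \Cref{lm:glolip} with $p=d+m$, $U = \R^d \times O_\Lambda$, and the base point $x := (w(\lambda),\lambda)$. This produces a radius $R_x>0$, which I rename $R_\lambda$, such that for every $x' \in \R^d\times O_\Lambda$
\begin{equation*}
    \gap(\D{\Phi}(x'), \D{\Phi}(x)) \leq L_x(x') \norm{x'-x}.
\end{equation*}
Taking in particular $x' = (u,\lambda)$ yields $\norm{x'-x} = \norm{u - w(\lambda)}$, and the constant $M_x$ from \Cref{lm:glolip} specializes to
\begin{equation*}
M_x = \max_{i\in[r]}\min_{j\in I_\Phi(w(\lambda),\lambda)} \norm{\Phi'_i(w(\lambda),\lambda) - \Phi'_j(w(\lambda),\lambda)},
\end{equation*}
which is precisely $M_\lambda$ in the statement. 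Consequently $L_x(x')$ reduces to the piecewise-defined $\Lu$ in \eqref{eq:lipJphi}, and substitution gives the desired inequality.

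There is no real obstacle: the whole content is in \Cref{lm:glolip}, and the only point worth flagging is the identification $\D{\Phi}=\D{\Phi}^s$, which is built into \Cref{ass:lipsel}\ref{ass:lipsel_i} via \eqref{eq:product}, and the observation that freezing the second block $\lambda$ when comparing $x$ and $x'$ makes the ambient distance collapse to $\norm{u-w(\lambda)}$, so the radius and constants depend only on $\lambda$ (hence the notation $R_\lambda$, $M_\lambda$, $\Lu$).
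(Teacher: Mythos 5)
Your proof is correct and follows exactly the route the paper takes: the paper itself just states that the lemma "yields... through a direct application of \Cref{lm:glolip}" with $x=(w(\lambda),\lambda)$ and $x'=(u,\lambda)$, which is precisely your specialization. The only minor point is that the well-definedness of $w(\lambda)$ tacitly requires Assumption~\ref{ass:lipsel}\ref{ass:lipsel_iii} in addition to \ref{ass:lipsel_i}, which you correctly flag.
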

Lemma~\ref{lm:glolipphi} can be used as a substitute for the Lipschitz smoothness of $\Phi$ with respect to the first variable, indeed note that in our analysis $\lambda$ (and hence $w(\lambda)$) is fixed. 

\begin{remark}\label{rm:sumphi}
Our theoretical analysis requires only that $\Phi$ is definable piecewise smooth and that the inequality in \Cref{lm:glolipphi} holds for some conservative derivatives of $\Phi$,
even if it is not computed according to \eqref{eq:product}. One such situation occurs for instance when $\Phi$ has the structure of a finite sum, that is, $\Phi = \sum_{i=1}^n \Phi^{(i)}$, where each $\Phi^{(i)}$ satisfies Assumption~\ref{ass:lipsel}\ref{ass:lipsel_i}  with corresponding conservative derivative $\D{\Phi^{(i)}}^s$. 
Then, it is clear that $\Phi$ is still definable and piecewise Lipschitz smooth. Moreover, using the properties of conservative derivatives (see Corollary~4 in \citep{bolte2020mathematical}), $\D{\Phi} = \sum_{i=1}^n \D{\Phi^{(i)}}^s$ is a conservative derivative of $\Phi$. Thus, using the property of the excess (see Lemma~\ref{lm:gapprop}\ref{eq:gapsum}) it directly follows that the inequality in \Cref{lm:glolipphi}, and hence our theory, still holds for such $\Phi$.
\end{remark}

\section{Deterministic Iterative and Approximate Implicit Differentiation}
\label{se:detitdaid}
We now formalize two deterministic methods 
for approximating the
conservative derivative of the solution map $w$.

\paragraph{Iterative Differentiation (ITD)} 
This method approximates $\D{w}^{\fix}(\lambda)$ through the following iterative procedure, starting from $w_0(\lambda) \in \R^d$, $\D{w_0}(\lambda) = \{0\}$, 
\begin{equation}\label{eq:ITD} 
\begin{array}{l}
\text{for}\;t=1,2\ldots\\
\left\lfloor
\begin{array}{l}
w_{t}(\lambda) = \Phi(w_{t-1}(\lambda), \lambda)\\[1ex]
\D{w_{t}}(\lambda) = 
\D{\Phi}(w_{t-1}(\lambda),\lambda)\, \begin{bmatrix} \D{w_{t-1}}(\lambda) \\ I_m\end{bmatrix},
\end{array}
\right.
\end{array}
\end{equation}
where we used the definition in \eqref{eq:matrixaffine}. Note that the iteration for $\D{w_{t}}(\lambda)$ is based on the chain rule and results in a conservative derivative of $w_t(\lambda)$.
This is the same set-valued iteration studied in \citep{bolte2022automatic}. We  note that if $\Phi(\cdot,\lambda)$ is a $q$-contraction, it holds $\norm{w_t(\lambda)-w(\lambda)} = O(q^t)$.

\paragraph{Approximate Implicit Differentiation with Fixed Point (AID-FP)} 
An alternative method for approximating the implicit conservative derivative is the following. Assume that
$w_t(\lambda)$ is generated by any algorithm converging to $w(\lambda)$ (for instance the one in \eqref{eq:ITD}), then, starting from $\D{w_t}^{0}(\lambda) = \{0\}$, define
\begin{equation}\label{eq:aidfp}
\begin{array}{l}
\text{for}\;k=1,2\ldots\\
\left\lfloor
\begin{array}{l}
\D{w_{t}}^{k}(\lambda) = \D{\Phi}(w_{t}(\lambda),\lambda)\, \begin{bmatrix} \D{w_{t}}^{k-1}(\lambda) \\ I_m\end{bmatrix}.
\end{array}
\right.
\end{array}
\end{equation}

\paragraph{Efficient Implementation} 
In practice we do not compute the full set-valued iterations in \eqref{eq:ITD} and \eqref{eq:aidfp}, but rather we select just one element at each iteration.
Moreover, if we let $x \in \R^m$ and $ y\in \R^d$, the ITD method can exploit automatic differentiation to efficiently compute an element of the conservative Jacobian-vector products $\D{w_{t}}(\lambda)^\top y$ (in reverse mode) and $\D{w_{t}}(\lambda) x$ (in forward mode). Similarly AID can efficiently compute an element in $\D{w_t}^{k}(\lambda)^\top y$.
Thanks to Automatic Differentiation, if $k=t$ the standard implementation of both AID-FP and ITD has a cost in time of the same order of that of computing $w_t(\lambda)$. However, while AID-FP only uses $w_t(\lambda)$, ITD has a larger $\Theta(t)$ memory cost, since it needs to store the entire optimization trajectory $(w_i(\lambda))_{0 \leq i \leq t}$.

\paragraph{Convergence Guarantees} In the Lipschitz smooth case \citet{grazzi2020iteration} proved non-asymptotic linear convergence rates for both methods, revealing that AID-FP is slightly faster than ITD.  We now extend this analysis to nonsmooth ITD and AID-FP, focusing on the convergence of the set-valued iterations in \eqref{eq:ITD} and \eqref{eq:aidfp}. Thanks to \Cref{lm:glolipphi} and the properties of the excess, the proof (in Appendix~\ref{app:iterAID}) can proceed similarly %
to the one for the smooth case.

\begin{theorem}[nonsmooth ITD and AID-FP Rates]\label{th:itdaidrates} 
Let \Cref{ass:lipsel} hold. For every $\lambda \in \Lambda$, let  $R_\lambda$ and $M_\lambda$ be the quantities defined in \Cref{lm:glolipphi} and $B_\lambda := \norm{\D{\Phi,2}(w(\lambda),\lambda)}_{\sup}$. For every $t,k \in \N$, let $\Delta_t = \norm{w_t(\lambda) - w(\lambda)}$, $\delta_\lambda(t) : = \indic\{\Delta_t > R_\lambda \}$ and $\bar{\delta}_\lambda(t) = t^{-1}\sum_{i=0}^{t-1}\delta_\lambda(i) $.  Then the following hold. 
\begin{enumerate}[label={\rm (\roman*)}]
\item
 The ITD iteration in \eqref{eq:ITD} satisfies
\begin{equation}\label{eq:itdbound}
\begin{aligned}
    \gap(\D{w_t}&(\lambda), \D{w}^{\fix} (\lambda)) \leq  \frac{B_\lambda}{1-q} \, q^t  
 +  \frac{B_\lambda+1}{1-q} \Big(L + \frac{M_{\lambda}}{R_{\lambda}} \bar{\delta}_\lambda(t) \Big) \Delta_0 \, t \,q^{t-1}.
\end{aligned}
\end{equation}
\item The AID-FP iteration in \eqref{eq:aidfp} satisfies
\begin{equation}\label{eq:aidfpbound}
\begin{aligned}
    \gap(\D{w_t}^{k}&(\lambda), \D{w}^{\fix} (\lambda)) \leq \frac{B_\lambda}{1-q} \,  q^k 
+ \frac{B_\lambda+1}{1-q} \Big(L + \frac{M_{\lambda}}{R_{\lambda}} \delta_\lambda(t) \Big)\frac{1- q^k}{1-q} \Delta_t.
\end{aligned}
\end{equation}
\end{enumerate}
Moreover, if $w_t(\lambda) = \Phi(w_{t-1}(\lambda), \lambda)$, then $\Delta_t \leq q\Delta_{t-1}\leq q^t \Delta_0$ and there exists $\tau_\lambda \in \N$ such that $\delta_\lambda(t) = \indic\{ t < \tau_\lambda \}$ and thus $\delta_\lambda(t) \leq \bar{\delta}_\lambda(t) \leq 1$.
\end{theorem}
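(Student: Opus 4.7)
The plan is to treat both iterations uniformly as repeated applications of an affine set-valued operator $\A(\mathcal{X}) = A_1\mathcal{X} + A_2$ (with $[A_1,A_2]\in\A$) and to combine two ingredients: the nonexpansiveness of such an operator with respect to the excess, which comes from $\norm{\D{\Phi,1}}_{\sup}\leq q$, and the Lipschitz-type bound $\gap(\D{\Phi}(u,\lambda),\D{\Phi}(w(\lambda),\lambda))\leq \Lu\,\norm{u-w(\lambda)}$ provided by Lemma~\ref{lm:glolipphi}. Set $\A_t := \D{\Phi}(w_t(\lambda),\lambda)$, $\A^* := \D{\Phi}(w(\lambda),\lambda)$ and $\mathcal{W}^* := \D{w}^{\fix}(\lambda) = \fix[\A^*]$. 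As a preliminary step I would establish $\norm{\mathcal{W}^*}_{\sup}\leq B_\lambda/(1-q)$: the identity $\mathcal{W}^*=\A^*(\mathcal{W}^*)$ implies that every $W\in\mathcal{W}^*$ has the form $W=A_1W'+A_2$ with $[A_1,A_2]\in\A^*$ and $W'\in\mathcal{W}^*$, so $\norm{W}\leq q\norm{\mathcal{W}^*}_{\sup}+B_\lambda$, and taking the sup yields the claim. This also gives the initialization bound $\gap(\{0\},\mathcal{W}^*)\leq \norm{\mathcal{W}^*}_{\sup}\leq B_\lambda/(1-q)$.

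The central step is the one-step inequality, for any compact $\mathcal{X}\subset\R^{d\times m}$,
\begin{equation*}
\gap(\A_t(\mathcal{X}),\mathcal{W}^*)\leq q\,\gap(\mathcal{X},\mathcal{W}^*)+\tfrac{B_\lambda+1}{1-q}\,\Lwt\,\Delta_t.
\end{equation*}
I would prove it by inserting $\A_t(\mathcal{W}^*)$ and applying the triangle inequality of the excess (Lemma~\ref{lm:gapprop}). The first piece $\gap(\A_t(\mathcal{X}),\A_t(\mathcal{W}^*))$ contracts by $q$: for each $[A_1,A_2]\in\A_t$ and $X\in\mathcal{X}$, match $X$ with its closest $W\in\mathcal{W}^*$ and use $\norm{A_1}\leq q$. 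The second piece $\gap(\A_t(\mathcal{W}^*),\A^*(\mathcal{W}^*))$ is bounded by $(\norm{\mathcal{W}^*}_{\sup}+1)\gap(\A_t,\A^*)$: for each $W$ keep it unchanged and pair $[A_1,A_2]\in\A_t$ with its closest $[A_1^*,A_2^*]\in\A^*$, obtaining $\norm{(A_1-A_1^*)W+(A_2-A_2^*)}\leq\norm{[A_1-A_1^*,A_2-A_2^*]}(\norm{W}+1)$. Lemma~\ref{lm:glolipphi} furnishes $\gap(\A_t,\A^*)\leq\Lwt\Delta_t$, and the loose bound $\norm{\mathcal{W}^*}_{\sup}+1\leq (B_\lambda+1)/(1-q)$ (using $q\in[0,1)$) closes the estimate.

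From here the remaining work is bookkeeping. For AID-FP the operator $\A_t$ is constant through the inner loop, so iterating the one-step inequality is a geometric progression with ratio $q$ and constant additive term; telescoping from the initialization bound above yields \eqref{eq:aidfpbound}. For ITD the inequality is applied with the time-varying operator $\A_{i}$ and perturbation $C_\lambda(w_i)\Delta_i$ at step $i+1$, so unrolling produces the sum $\sum_{i=0}^{t-1}q^{t-1-i}C_\lambda(w_i)\Delta_i$. Under the assumption $w_t=\Phi(w_{t-1},\lambda)$, the contraction $\Delta_i\leq q^i\Delta_0$ collapses this to $q^{t-1}\Delta_0\sum_{i=0}^{t-1}C_\lambda(w_i) = q^{t-1}\Delta_0\,t\,(L+(M_\lambda/R_\lambda)\bar{\delta}_\lambda(t))$ by definition of $\bar{\delta}_\lambda$, giving \eqref{eq:itdbound}. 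The trailing statement is then immediate: $\Delta_t\leq q\Delta_{t-1}$ implies monotonicity, so there is a first $\tau_\lambda$ with $\Delta_{\tau_\lambda}\leq R_\lambda$ and $\delta_\lambda(t)=\indic\{t<\tau_\lambda\}$, hence $\delta_\lambda(t)\leq\bar{\delta}_\lambda(t)\leq 1$.

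The main technical obstacle I expect is the passage from the matrix-set perturbation bound on $\D{\Phi}$ to the set-action inequality $\gap(\A_t(\mathcal{W}^*),\A^*(\mathcal{W}^*))\leq (\norm{\mathcal{W}^*}_{\sup}+1)\gap(\A_t,\A^*)$: the decoupling between the perturbation in $[A_1,A_2]$ and the choice of $W\in\mathcal{W}^*$ is precisely what produces the factor $(B_\lambda+1)$ appearing in both bounds, and the rest reduces to the excess arithmetic (triangle inequality and affine-contraction) collected in Lemma~\ref{lm:gapprop} together with the summation identity for $\bar{\delta}_\lambda(t)$.
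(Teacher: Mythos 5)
Your proposal is correct and follows essentially the same route as the paper's proof: the same excess triangle inequality splitting $\gap(\A_t(\mathcal{X}),\mathcal{W}^*)$ into a contraction piece and a perturbation piece via $\A_t(\mathcal{W}^*)$, the same use of Lemma~\ref{lm:glolipphi} and the affine-action inequalities of Lemma~\ref{lm:gapprop}, the same bound $\norm{\D{w}^{\fix}(\lambda)}_{\sup}\leq B_\lambda/(1-q)$ from the fixed-point identity, and the same unrolling (geometric sum for AID-FP, telescoping with $\Delta_i\leq q^i\Delta_0$ for ITD). The only cosmetic difference is that you absorb the loose factor $(B_\lambda+1)/(1-q)$ already into the one-step inequality, whereas the paper keeps $\norm{\D{w}^{\fix}(\lambda)}_{\sup}+1$ exact through the recursion and majorizes at the end.
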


 To compare the two rates in \Cref{th:itdaidrates}, let $t=k$ and  $w_t(\lambda) = \Phi(w_{t-1}(\lambda), \lambda)$, so that both AID-FP and ITD have time complexity of the order of computing $w_t(\lambda)$. In that situation, since $1-q^k =1-q^t < q^{-1}(1-q)t$ and $\delta_\lambda(t) \leq \bar{\delta}_\lambda(t)$, the upper bound of  AID-FP is always lower than that of ITD. Moreover, if we let $\kappa = (1-q)^{-1}$ to play a similar role to the condition number, we observe that both methods converge linearly: AID-FP as $O(\kappa^2 e^{-t/\kappa})$, while ITD slightly slower as $O(\kappa te^{-t/\kappa})$.
 When $t \geq \tau_\lambda$, $\delta_\lambda(t) = 0$ while $\bar{\delta}_\lambda(t) = \tau_\lambda/t$,
which might cause a wide difference between the two bounds if $M_\lambda / R_\lambda$ is large, and such 
ratio can get arbitrarily large the closer $(w(\lambda), \lambda)$ is to regions where $\Phi$ is not differentiable. 
Finally, if we replace \Cref{lm:glolipphi} with the $L$-smoothness of $\Phi$, 
we essentially recover the same bounds reported by \citet{grazzi2020iteration}, where the terms $\delta_\lambda, \bar{\delta}_\lambda$ do not appear. 

The work by \citep{bolte2022automatic} also reports a rate for nonsmooth ITD of $O((\sqrt{q} +\epsilon)^t)$ for arbitrary $\epsilon >0$. However, this rate does not match the best available rate for smooth ITD \citep{grazzi2020iteration}. \Cref{th:itdaidrates} (in \eqref{eq:itdbound}) fills this gap since it achieves\footnote{For any $\epsilon \in [0, 1-q]$, $\exists C > 0$ such that $tq^{t-1} \leq C (q+\epsilon)^t$.} an improved rate of $O((q + \epsilon)^t)$. Moreover, our rate is more explicit, since it does not involve any arbitrary $\epsilon$.

We conclude the section by noting that \Cref{th:itdaidrates}  ensures that the sequence constructed by  selecting one element at each iteration in \eqref{eq:ITD} and \eqref{eq:aidfp}, is guaranteed to converge, up to a subsequence, to the set $\D{w}^{\fix} (\lambda)$.

\section{Nonsmooth Stochastic Implicit Differentiation}
\label{se:sid}
In this section we study the stochastic fixed point formulation in \eqref{eq:fixedpointstoch} and present an algorithm that, given a random vector $y \in \R^d$ and an approximate solution $w_t(\lambda)$, efficiently approximates an element of $\D{w}^{\imp}(\lambda)^\top y$ accessing only $\hat T_\xi$, $G$ and fixed selections of their conservative derivatives.
Similarly to deterministic AID, here we assume that $w_t(\lambda)$ is generated by a stochastic algorithm which converges in mean square to $w(\lambda)$. Several algorithms can ensure such convergence guarantees for the composite minimization problems in~\eqref{eq:composite} (e.g, \citet{rosasco2020convergence} provide a proximal stochastic gradient algorithm with rate $O(1/t)$) and composite monotone inclusions \citep{rosasco2014stochastic}. \\
We recall that for a path differentiable function $F \colon U \subset \R^{p_1 + p_2}\to \R^d$, we denote by $F'$ an arbitrary selection of $\D{F}$ and by $\partial_1 F(x) \in \R^{d \times p_1}$ and $\partial_2 F(x) \in \R^{d \times p_2}$ the first and second block component of $F'(x)$ respectively, so that we can write $F'(x) = [\partial_1 F(x), \partial_2 F(x)]$. 

We consider the following assumptions
\begin{assumption}\label{ass:contlip}\  
\vspace{-.2truecm}
\begin{enumerate}[label={\rm (\roman*)}]
    \item\label{ass:lipgt} $T$ and $G$ satisfy Assumption~\ref{ass:lipsel}\ref{ass:lipsel_i} individually, with constant $L_T$ and $L_G$ respectively. 
    Let $T^\prime$ and $G^\prime$ be selections of the conservative derivatives $\D{T}$ and $\D{G}$ respectively. 
    Also, $\Phi(u,\lambda) = G(T(u,\lambda),\lambda)$.
    \vspace{-.1truecm}\item\label{ass:contgt} For every $(u,\lambda)\in \R^d\times\Lambda$, $\norm{\D{T,1}(u,\lambda)}_{\sup}\leq 1$ and $\norm{\D{G,1}(u,\lambda)}_{\sup} \leq 1$ and either $T$ or $G$ satisfies Assumption~\ref{ass:lipsel}\ref{ass:lipsel_iii}.
    \item $y\in \R^d$ is a random vector.
\end{enumerate}
\end{assumption}
\begin{assumption}\label{ass:variance}\ 
The random variable $\xi$ takes  values in $\Xi$ and for every $x \in \Xi$
\begin{enumerate}[label={\rm (\roman*)}]
\item $\hat{T}_x\colon \R^d\times O_{\Lambda} \to \R^d$ and $\EE[\hat{T}_\xi(u,\lambda)] = T(u,\lambda)$.
\item 
$\hat{T}_x$ is path differentiable and $\hat T_x'$ is a selection of its conservative derivative $\D{\hat T_x}$   
and there exist $\sigma_1, \sigma_2, \sigma'_1, \sigma'_2 \geq 0$ such that for every $u \in \R^d$, $\lambda\in \Lambda$
\begin{gather*}
\EE[\hat T'_\xi(u,\lambda)] =  T'(u,\lambda)
\in \D{T}(u,\lambda), \\ \Var[\hat T_\xi(u,\lambda)] \leq \sigma_1 + \sigma_2\norm{u - T(u,\lambda)}^2,\quad 
\Var[\partial_1 \hat T_\xi(u,\lambda)] \leq \sigma'_1, \quad \Var[\partial_2 \hat T_\xi(u,\lambda)] \leq \sigma'_2.
\end{gather*}
where $\hat T'_x (u,\lambda) = [\partial_1 \hat T_x(u,\lambda),\partial_2 \hat T_x(u,\lambda)]$.
\end{enumerate}
\end{assumption}

\begin{remark}
The above assumptions  can be satisfied in the following situations:
(1) $G$ is nonsmooth, e.g., some proximity operator or the projection on some simple constraints, while $T$ and $\hat T_x$ are smooth (e.g., one step of gradient descent of a twice differentiable loss); (2) in view of \Cref{rm:sumphi}, when $T = \frac{1}{n}\sum_{i=1}^n \hat T_i$, 
$T' = \frac{1}{n}\sum_{i=1}^n \hat T'_i$ with $\hat T' \in \D{\hat T_i}^s$   and $\xi$ is uniformly distributed on $[n]$. 
\end{remark}

\Cref{ass:contlip} ensures that $\D{\Phi}$ obtained via the chain rule for conservative derivatives in \citep{bolte2021conservative} (see \Cref{app:SID}) is a conservative derivative of
$\Phi$ and that $\norm{\D{\Phi,1}(u,\lambda)}_{\sup} \leq q<1$.
Thus, $w(\lambda)$ is well defined and it has conservative derivatives $\D{w}^{\imp}$ and $\D{w}^{\fix}$. 
\Cref{ass:variance} is a nonsmooth generalization of the corresponding one in \citep{grazzi2021convergence,grazzi2023bilevel}. 
Finally, recalling \eqref{eq:Jimp}, if we set $\partial_2 \Phi(u,\lambda) = \partial_1 G(T(u,\lambda),\lambda) \partial_2 T(u,\lambda) + \partial_2 G(T(u,\lambda),\lambda)$ then
\begin{equation}
\label{eq:20240202a}
    \partial_2\Phi (w(\lambda),\lambda)^\top v(w(\lambda),\lambda) \in \D{w}^{\imp} (\lambda)^\top y
\end{equation}
where, for every $u \in \R^d$, $v(u,\lambda)$ is a solution of the linear system
\begin{equation}
\label{eq:20240202b}
    (I-\partial_1 T(u,\lambda)^\top \partial_1 G(T(u,\lambda), \lambda)^\top) v = y.
\end{equation}

\paragraph{Algorithm and convergence guarantees}
Our method is inspired by \eqref{eq:20240202a} and \eqref{eq:20240202b} but it uses
mini-batch estimators of $T$ and $\partial_2\Phi$. To that purpose we assume to have 
two independent sets of samples
$\bm{\xi}^{(1)}=(\xi^{(1)}_{j})_{1\leq j\leq J}$ and 
$\bm{\xi}^{(2)}=(\xi^{(2)}_{i})_{1\leq i\leq k}$, being
 i.i.d. copies of the random variable $\xi$.
Moreover, we define the path differentiable functions
\begin{align*}
     \bar{T}(u,\lambda){=} \frac{1}{J}\sum_{j=1}^{J} \hat T_{\xi^{(1)}_j}(u,\lambda), \quad
     \bar{\Phi}(u,\lambda) = G(\bar T(u,\lambda), \lambda).
\end{align*}
In fact our approach first replaces the linear system \eqref{eq:20240202b} with 
\begin{equation}
\label{eq:20240315a}
(I-\partial_1 T(w_t(\lambda),\lambda)^\top \partial_1 G(\bar{T}(w_t(\lambda),\lambda), \lambda)^\top) v = y,
\end{equation}
where the solution is in turn approximated by a stochastic sequence $(v_k)_{k \in \N}$,
which has access only to $\hat{T}_x, G$, and $w_t(\lambda)$.
Second, it outputs $\partial_2 \bar{\Phi}(w_t(\lambda), \lambda)^\top v_k$, where for any $u \in \R^d$, $\lambda \in O_\Lambda$,
\begin{align}\label{eq:partialphibar}
    \partial_2 \bar{\Phi}(u,\lambda) &{=} \partial_1 G(\bar{T}(u,\lambda),\lambda) \partial_2 \bar{T}(u,\lambda) + \partial_2 G(\bar{T}(u,\lambda),\lambda),
\end{align}
with $\bar{T}'(u,\lambda) {:=} [\partial_1 \bar{T}(u,\lambda), \partial_2 \bar{T}(u,\lambda)] {=} \frac{1}{J} \sum_{j=1}^J \hat T'_{\xi_j^{(1)}}(u,\lambda)$, which thanks to the chain rule is an element of a partial conservative derivative of $\bar{\Phi}$ (see also \Cref{app:SID}).

We now provide a general bound for the mean square error 
of an estimator of an element of the Jacobian vector product $\D{w}^{\imp}(\lambda)^\top y$,
which is agnostic with respect to the algorithms solving the fixed point equation \eqref{eq:fixedpoint}
and the linear system \eqref{eq:20240315a}. The proof (in Appendix~\ref{app:SID}) uses similar techniques as the one for the smooth case in \citep{grazzi2021convergence,grazzi2023bilevel}.
\begin{assumption}\label{ass:new}\ 
\vspace{-0.5ex} Let $\rho_\lambda\colon \N \to \R_+$, $\sigma_\lambda\colon \N \to \R_+$ 
be such that $\lim_{t\to+\infty}\rho_\lambda(t)= 0$, $\lim_{k\to+\infty}\sigma_\lambda(k)= 0$.
\begin{enumerate}[label={\rm (\roman*)}]
\item\label{ass:new_i} $(w_t(\lambda))_{t \in \N}$ is a sequence of random vectors in $\R^d$ and
\begin{equation*}
\EE[\norm{w_t(\lambda)- w(\lambda)}^2] \leq \rho_\lambda(t),
\end{equation*}
\item\label{ass:new_ii} For every $(u_1,u_2) \in \R^d\times \R^d$, $(v_k(u_1,u_2))_{k \in \N}$ is a sequence of random vectors in $\R^d$
which is independent on $(w_t(\lambda))_{t \in \N}$ and 
such that 
\begin{equation*}
\EE[\norm{v_k(u_1,u_2)- \bar{v}(u_1, u_2)}^2\,\vert\, y] \leq \norm{y}^2\sigma_\lambda(k),
\end{equation*}
where 
$\bar{v}(u_1,u_2)$ is the unique fixed point of the affine mapping
$v\mapsto \partial_1 T(u_1,\lambda)^\top \partial_1 G(u_2, \lambda)^\top v + y$.
\item\label{ass:new_iii} The r.v.~$y$ satisfies $\EE[\norm{y}^2\,\vert\, w_t(\lambda)] \leq b^2$ a.s.
\end{enumerate}
\end{assumption}

\begin{theorem}\label{th:nsidrate} Under \Cref{ass:contlip}, \ref{ass:variance}, and \ref{ass:new}, let $\kappa=(1-q)^{-1}$.
We define the estimator
\begin{equation*}
\jvp := \partial_2 \bar{\Phi}(w_t(\lambda),\lambda)^\top 
v_k\big(w_t(\lambda), \bar{T}(w_t(\lambda),\lambda)\big).
\end{equation*}
Then for every $t,k,J \in \N$, 
we have
\begin{align*}
    \EE\big[\!\gap\!\big(\jvp,& \D{w}^{\imp}(\lambda)^\top y\big)^2\big] =  b^2\times O\left( \sigma_\lambda(k)+ \kappa^4 \left(J^{-1} + \rho_\lambda(t)\right)\right).
\end{align*}
\end{theorem}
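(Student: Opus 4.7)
Since by \eqref{eq:20240202a} the element $J^\star := \partial_2\Phi(w(\lambda),\lambda)^\top v(w(\lambda),\lambda)$ lies in $\D{w}^{\imp}(\lambda)^\top y$, one has $\gap(\{\jvp\},\D{w}^{\imp}(\lambda)^\top y) \leq \norm{\jvp - J^\star}$, so it is enough to bound $\EE[\norm{\jvp - J^\star}^2]$. My plan is to telescope through two natural intermediate quantities: $\bar v_t := \bar v(w_t(\lambda),\bar T(w_t(\lambda),\lambda))$, the exact fixed point of the mini-batch linear system in the sense of \Cref{ass:new}\ref{ass:new_ii}, and $v^\star_t := v(w_t(\lambda),\lambda)$, the exact solution of \eqref{eq:20240202b} at $u=w_t(\lambda)$. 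I then write
\begin{align*}
\jvp - J^\star &= \underbrace{\partial_2\bar\Phi(w_t(\lambda),\lambda)^\top(v_k - \bar v_t)}_{E_1} + \underbrace{\partial_2\bar\Phi(w_t(\lambda),\lambda)^\top\bar v_t - \partial_2\Phi(w_t(\lambda),\lambda)^\top v^\star_t}_{E_2} \\
&\quad + \underbrace{\partial_2\Phi(w_t(\lambda),\lambda)^\top v^\star_t - \partial_2\Phi(w(\lambda),\lambda)^\top v^\star}_{E_3},
\end{align*}
and reduce the task to bounding $\EE\norm{E_i}^2$, $i=1,2,3$, separately, since $\norm{E_1+E_2+E_3}^2\leq 3\sum_i\norm{E_i}^2$.

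\textbf{Bounding the three pieces.} The first term $E_1$ is the linear-solver error: using the uniform bound on $\norm{\partial_2\bar\Phi}_{\sup}$ that follows from \Cref{ass:contlip}\ref{ass:contgt} together with the variance bound on $\hat T'_\xi$ in \Cref{ass:variance}, I condition on $(w_t(\lambda),\bm\xi^{(1)},y)$, invoke the independence of $v_k$ from $w_t$ and \Cref{ass:new}\ref{ass:new_ii}, and then take full expectation with \Cref{ass:new}\ref{ass:new_iii} to obtain $\EE\norm{E_1}^2 = O(b^2\sigma_\lambda(k))$. For $E_2$, setting $M = \partial_1 T(w_t,\lambda)^\top \partial_1 G(T(w_t,\lambda),\lambda)^\top$ and $\bar M = \partial_1 T(w_t,\lambda)^\top\partial_1 G(\bar T(w_t,\lambda),\lambda)^\top$, the fixed-point perturbation identity yields $\bar v_t - v^\star_t = (I-\bar M^\top)^{-1}(\bar M - M)^\top v^\star_t$, whose norm is at most $\kappa^2\norm{\bar M - M}\norm{y}$; the $L_G$-smoothness of $\partial_1 G$ combined with $\EE\norm{\bar T(w_t,\lambda) - T(w_t,\lambda)}^2 = O(1/J)$ from \Cref{ass:variance}, together with an analogous treatment of $\partial_2\bar\Phi - \partial_2\Phi$ using the variance bounds on $\partial_1\hat T_\xi$ and $\partial_2\hat T_\xi$, delivers $\EE\norm{E_2}^2 = O(b^2\kappa^4 J^{-1})$. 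For $E_3$, applying \Cref{lm:glolipphi} to $\Phi$ allows me to choose, within $\D\Phi(w(\lambda),\lambda)$, an element close to the selection used at $w_t(\lambda)$; propagating this perturbation of $\partial_1\Phi$ through the Neumann series defining $v$ produces an extra $\kappa^2$ amplification, so that squaring and using \Cref{ass:new}\ref{ass:new_i} gives $\EE\norm{E_3}^2 = O(b^2\kappa^4\rho_\lambda(t))$.

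\textbf{Main difficulty.} The delicate part is the careful accounting of three overlapping sources of randomness: the iterate $w_t(\lambda)$, the mini-batch $\bm\xi^{(1)}$ that enters \emph{both} the front factor $\partial_2\bar\Phi$ and the coefficients of the perturbed linear system, and the inner sample $\bm\xi^{(2)}$ defining $v_k$. This is handled by tower-conditioning on $(w_t,\bm\xi^{(1)})$ before invoking \Cref{ass:new}\ref{ass:new_ii} or the variance bounds of \Cref{ass:variance}; in particular, the unbiasedness $\EE[\bar T(u,\lambda)]=T(u,\lambda)$ and $\EE[\bar T'(u,\lambda)]=T'(u,\lambda)$ is only usable after such conditioning. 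A secondary subtlety, specific to the nonsmooth setting, is that \Cref{lm:glolipphi} provides an excess bound rather than a pointwise Lipschitz estimate for a fixed selection, so $J^\star$ must be realized by \emph{choosing} within $\D\Phi(w(\lambda),\lambda)$ the element closest to the selection dictated by the chain rule at $w_t(\lambda)$. Adding the three bounds gives the claimed $b^2\bigl(\sigma_\lambda(k) + \kappa^4(J^{-1} + \rho_\lambda(t))\bigr)$ rate.
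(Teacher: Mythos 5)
Your telescoping plan is natural and would be correct in the smooth case, but the middle piece $E_2$ does not close in the nonsmooth setting, and that is exactly where the difficulty lies. To bound $E_2$ you need to compare $\partial_1 G(\bar T(w_t,\lambda),\lambda)$ with $\partial_1 G(T(w_t,\lambda),\lambda)$ and you invoke the ``$L_G$-smoothness of $\partial_1 G$''. However $\partial_1 G$ is a fixed \emph{selection} of the conservative derivative $\D{G,1}$: it is only piecewise Lipschitz, and it may jump by a non-vanishing amount as its argument moves from $\bar T(w_t,\lambda)$ to $T(w_t,\lambda)$, no matter how close these are. The only Lipschitz-type control available here is the excess bound of \Cref{lm:lipgt}, and crucially that bound has the \emph{set} $\D{G}(T(w(\lambda),\lambda),\lambda)$ (evaluated at the solution, not at $w_t$) as its second argument: it controls $\gap\big(G'(u,\lambda),\D{G}(T(w(\lambda),\lambda),\lambda)\big)$, not $\norm{G'(u,\lambda)-G'(u',\lambda)}$ and not a distance to $\D{G}(T(w_t(\lambda),\lambda),\lambda)$. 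The same objection applies to your estimate of $\partial_2\bar\Phi-\partial_2\Phi$.

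The ``secondary subtlety'' you flag at the end is actually fatal to the opening setup and not a footnote: there is likewise no estimate linking $T'(w_t(\lambda),\lambda)$ to the particular selection value $T'(w(\lambda),\lambda)$, so the fixed target $J^\star=\partial_2\Phi(w(\lambda),\lambda)^\top v(w(\lambda),\lambda)$ in $E_3$ is out of reach. The target element of $\D{w}^{\imp}(\lambda)^\top y$ must be chosen \emph{after} seeing $w_t(\lambda)$ and $\bm\xi^{(1)}$. The paper does exactly this: it projects the selections used at $w_t$ onto the sets at the solution, taking $B=\argmin_{B'\in\D{G}(T(z),\lambda)}\norm{G'(\bar T(z_t),\lambda)-B'}$ and $C=\argmin_{C'\in\D{T}(z)}\norm{T'(z_t)-C'}$, sets $A_1=B_1C_1$, $A_2=B_1C_2+B_2\in\D{\Phi}(z)$, and bounds $\norm{\jvp-A_2^\top(I-A_1)^{-\top}y}$ by the product split $\norm{\partial_2\bar\Phi(z_t)-A_2}\norm{v_k}+\norm{\D{\Phi,2}(z)}_{\sup}\norm{v_k-(I-A_1^\top)^{-1}y}$. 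This never passes through $T(w_t(\lambda),\lambda)$ as an evaluation point for $G'$ and never compares two evaluations of a selection at nearby points, which is what makes the nonsmooth argument go through. Once you adopt that adaptive target, your $E_2$ and $E_3$ collapse into the paper's two terms, and the remaining ingredients of your plan (conditioning on $(w_t,\bm\xi^{(1)})$ before using \Cref{ass:new}\ref{ass:new_ii} and the variance bounds, the $\kappa^2$ amplification from the matrix-inverse perturbation identity, the $O(1/J)$ mini-batch variance) are correct and match the paper's proof.
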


We preset the full procedure, named nonsmooth stochastic implicit differentiation (NSID), in \Cref{alg:nsid},
where the sequence $v_k$ considered in Assumption~\ref{ass:new}\ref{ass:new_ii} is generated by a simple stochastic fixed-point iteration algorithm (described in \citep{grazzi2021convergence} and recalled in \Cref{app:SID}) with step sizes $(\eta_i)_{1 \leq i \leq k}$.

\begin{algorithm}[ht]
   \caption{NSID}
   \label{alg:nsid}
\begin{algorithmic}[1]
   \STATE {\bfseries Input:} $k,J \in \N$,  $w_t(\lambda), y \in \R^d$, $\bm{\xi}^{(1)}, \bm{\xi}^{(2)}$
   \STATE $\bar T_t(\lambda) \gets \bar{T}(w_t(\lambda),\lambda)$ $\qquad$ {\small\textit{(using $\bm{\xi}^{(1)}$)}}
   \STATE $\hat \Psi\colon (v,x) \mapsto  \partial_1 \hat T_x(w_t(\lambda),\lambda)^\top \partial_1 G(\bar T_t(\lambda), \lambda)^\top v + y $
    \FOR{$i=1$ {\bfseries to} $k$}
    \STATE $v_i \gets (1- \eta_i)v_{i-1} + \eta_i \hat \Psi(v_{i-1}, \xi^{(2)}_{i})$
    \ENDFOR
    \STATE {\bfseries Return} $\jvp :=\partial_2 \bar{\Phi}(w_t(\lambda),\lambda)^\top v_k$
\end{algorithmic}
\end{algorithm}

Note that all steps can be efficiently implemented via automatic differentiation by using only vector-valued function evaluations and conservative Jacobian-vector products without the expensive computation of the full matrix derivatives. Also, using a fixed selection for the conservative derivative of $\hat T_x$ and $G$ corresponds to the standard implementation.

If $G(\cdot,\lambda)$ is the identity and $T$ is smooth, %
NSID reduces to the 
same procedure given in \citep{grazzi2023bilevel},
which also provide the bound $
O\left(\sigma_\lambda(k)+\kappa^2 J^{-1}+\kappa^4\rho_\lambda(t)\right)
$ in Theorem 7. Compared to the bound given in \Cref{th:nsidrate}, we note that the only difference is in the constant in front of the term $J^{-1}$, which we believe  may be related to the term $G$. Indeed handling a general $G$ provides an additional challenge since we do not have access anymore to an unbiased estimator of $\Phi$. However, we could overcome this issue by using different samples sequences for the two factors occuring in $\hat \Psi$. Incidentally, one of those sequences can be the one used to compute a mini-batch estimator of $\partial_2 \Phi$. Ultimately, this does not call for any additional samples compared to the smooth version, but it could worsen some constants in the bound.

Finally, we specialize the result of Theorem~\ref{th:nsidrate} to \Cref{alg:nsid}. The proof is in Appendix~\ref{app:SID}.

\begin{theorem}\label{cor:nsidrate}
Under \Cref{ass:contlip}, \ref{ass:variance}, and \ref{ass:new}\ref{ass:new_i}\ref{ass:new_iii}, let $\jvp$ be generated by Algorithm~\ref{alg:nsid} with $\eta_i = \Theta(i^{-1})$
and assume that $\rho_\lambda(t) = O(\kappa^{\alpha} t^{-1})$, with $\alpha>0$. Then
\begin{equation*}
\begin{split}
\EE\big[\!\gap\!\big(\jvp, \D{w}^{\imp}(\lambda)^\top y\big)^2\big] = 
O\left( \frac{\kappa^5}{k} {+}  \frac{\kappa^4}{J} {+} \frac{\kappa^{4+\alpha}}{t}\right).
\end{split}
\end{equation*}
Hence if $J=O(t)$, $k=O(t)$, the mean square error is  $\leq \epsilon$ after $O(\kappa^{5+\alpha} \epsilon^{-1})$ samples.
\end{theorem}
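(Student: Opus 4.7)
The plan is to reduce this corollary to \Cref{th:nsidrate} by instantiating the abstract error functions with rates specific to \Cref{alg:nsid}. The hypothesis already supplies $\rho_\lambda(t) = O(\kappa^{\alpha} t^{-1})$, so the remaining analytical task is to establish that, for the inner loop of \Cref{alg:nsid} driven by the step sizes $\eta_i = \Theta(i^{-1})$, the function $\sigma_\lambda(k)$ required in \Cref{ass:new}\ref{ass:new_ii} can be taken to be $O(\kappa^5/k)$. Once this single estimate is in place, substituting it together with the assumed rate of $\rho_\lambda$ into \Cref{th:nsidrate} and applying $\EE[\|y\|^2 \,|\, w_t(\lambda)] \leq b^2$ immediately gives
\[
\EE\big[\gap(\jvp, \D{w}^{\imp}(\lambda)^\top y)^2\big] = O\Big(\tfrac{\kappa^5}{k} + \tfrac{\kappa^4}{J} + \tfrac{\kappa^{4+\alpha}}{t}\Big).
\]

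To bound $\sigma_\lambda(k)$, I would condition on $\bm{\xi}^{(1)}$ and on $w_t(\lambda)$, so that $u_1 := w_t(\lambda)$ and $u_2 := \bar T(w_t(\lambda),\lambda)$ are fixed and the inner recursion $v_i = (1-\eta_i) v_{i-1} + \eta_i \hat\Psi(v_{i-1}, \xi^{(2)}_i)$ becomes a stochastic Krasnoselski--Mann iteration for the affine operator $\bar\Psi(v) := \partial_1 T(u_1,\lambda)^\top \partial_1 G(u_2,\lambda)^\top v + y$. By \Cref{ass:contlip}\ref{ass:contgt}, $\bar\Psi$ is a $q$-contraction, so its unique fixed point $\bar v(u_1,u_2)$ satisfies $\|\bar v\| \leq \kappa \|y\|$. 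The noise term $\hat\Psi(v,\xi) - \bar\Psi(v)$ has conditional variance controlled by $\Var[\partial_1 \hat T_\xi] \,\|\partial_1 G(u_2,\lambda)\|^2 \,\|v\|^2 \leq \sigma'_1 \|v\|^2$ via \Cref{ass:variance}(ii). Feeding these two ingredients into the standard analysis of stochastic fixed-point iterations with harmonic step sizes (as in \citep{grazzi2021convergence}) yields a mean square error of order $\kappa^5 \|y\|^2/k$ for $\EE[\|v_k - \bar v\|^2 \,|\, y]$. The exponent $5$ tracks the slow contraction under $\eta_i = \Theta(1/i)$, the dependence $\|\bar v\| \leq \kappa \|y\|$ entering the noise floor, and the accumulated noise sum $\sum_{i} \eta_i^2 \prod_{j>i}(1 - (1-q)\eta_j)$.

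For the sample complexity, \Cref{alg:nsid} consumes $J$ samples to build $\bar T_t(\lambda)$ and $k$ samples in the inner loop, while the outer solver producing $w_t(\lambda)$ uses $t$ samples. Setting $J = \Theta(t)$ and $k = \Theta(t)$ collapses the three terms in the bound to $O((\kappa^5 + \kappa^4 + \kappa^{4+\alpha})/t)$, which since $\kappa \geq 1$ is absorbed into $O(\kappa^{5+\alpha}/t)$. Imposing $\kappa^{5+\alpha}/t \leq \epsilon$ gives $t = O(\kappa^{5+\alpha}\epsilon^{-1})$, and the total number of samples $J + k + t = O(t)$ matches the stated complexity.

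The main obstacle is the quantitative analysis of the inner recursion: one must condition correctly on $\bm\xi^{(1)}$ so that \Cref{ass:variance} transfers to the compound operator $\hat\Psi$, and one must track the constants in the recurrence $\EE\|v_i - \bar v\|^2$ precisely enough to certify the exponent $5$ in the $\kappa^5/k$ rate. Since the structural form of the stochastic iteration is identical to the smooth case treated in \citep{grazzi2021convergence,grazzi2023bilevel}, the argument transfers up to constants, the only new ingredient being that \Cref{ass:contlip}\ref{ass:contgt} supplies contractivity of $\bar\Psi$ without any differentiability of $T$ or $G$ individually.
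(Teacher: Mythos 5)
Your proposal is correct and follows essentially the same route as the paper: reduce to \Cref{th:nsidrate} by identifying $\sigma_\lambda(k)$, show the inner stochastic fixed-point iteration with $\eta_i=\Theta(i^{-1})$ achieves $\sigma_\lambda(k)=O(\kappa^5/k)$ via the stochastic Krasnoselski--Mann analysis of \citep{grazzi2021convergence}, and then substitute together with $\rho_\lambda(t)$ and the bound on $\EE[\|y\|^2\,|\,w_t(\lambda)]$. The paper packages the inner-loop rate as a standalone lemma (its \Cref{lm:linsysrate}), where the one technical step your sketch glosses over is converting the raw variance bound $\sigma_1'\|v\|^2$ into the form $\hat\sigma_1 + \hat\sigma_2\|\Psi(v)-v\|^2$ demanded by the cited lemma, via $(1-q)\|v\|\leq\|\Psi(v)-v\|+\|y\|$; this is where two of the five powers of $\kappa$ arise, with the remaining three coming from $\gamma$, $\beta^2$, and $\|\bar v\|\leq\kappa\|y\|$ as you indicate.
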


Note that the sample complexity $O(\epsilon^{-1})$ matches the performance of SGD for minimizing strongly convex and Lipschitz smooth functions \citep{bottou2018optimization}, which are a special cases of Problem \eqref{eq:fixedpointstoch}. Furthermore it is the same one that the SID algorithm by \citet{grazzi2021convergence,grazzi2023bilevel} attains when $G(v,\lambda) = v$ and $\Phi$ is Lipschitz smooth. A limitation is the choice of step-sizes $(\eta_i)$, problematic in practice.

\section{Application to Bilevel Optimization}
\label{se:bilevel}
In this section, we consider the following bilevel problem with the fixed point problem in \eqref{eq:fixedpoint} at the lower level 
\begin{equation}
\label{eq:bilevel}
    \min_{\lambda \in \Lambda}\{ E(w(\lambda),\lambda) : ~
    w(\lambda) = \Phi(w(\lambda), \lambda)\},
\end{equation}
where $E\colon \R^d \times O_\Lambda \to \R$. We will show how we can use AID-FP, ITD and NSID to approximate an element of the conservative derivative of the bilevel objective $ f(\lambda) : = E(w(\lambda),\lambda)$ and retain the same convergence rates.

In addition to the requirement that $\Phi$ satisfies Assumption~\ref{ass:lipsel}, we also make the hypothesis that $E$ satisfies the first item of same assumption with corresponding conservative derivative $\D{E} = \D{E}^s$. Therefore, applying the usual chain rule, we have that
for $* \in \{\imp, \fix\}$
\begin{align*}
    \D{f}^{*}(\lambda) &{:=} \D{E}(w(\lambda),\lambda) \, \begin{bmatrix} \D{w}^{*}(\lambda) \\ I_m\end{bmatrix}
\end{align*}
is a conservative derivative for $f$.
We also let $f_t(\lambda) : = E(w_t(\lambda), \lambda)$, where $w_t(\lambda)$ is an approximate solution for the fixed point problem.

\paragraph{Deterministic Case} The approximate derivatives
\begin{equation*}\label{eq:aiditdhg}
 \begin{aligned}
    \textbf{(BITD)}\quad \D{f_t}(\lambda) &{: =} \D{E}(w_t(\lambda),\lambda)\, \begin{bmatrix} \D{w_{t}}(\lambda) \\ I_m\end{bmatrix} \\
    \textbf{(BAID-FP)}\quad  \D{f_t}^k(\lambda) &{: =} \D{E}(w_t(\lambda),\lambda)\, \begin{bmatrix} \D{w_{t}}^{k}(\lambda) \\ I_m\end{bmatrix} 
\end{aligned}   
\end{equation*}
converge to $\D{f}^{\fix}(\lambda)$ with the same rate as ITD and AID (\Cref{th:aiditdhgrates}).

\paragraph{Stochastic Case~} We study the bilevel problem
\begin{equation}\label{eq:bilevelstoch}
\begin{gathered}
    \min_{\lambda \in \Lambda} f(\lambda) : = \EE [\hat E_\zeta(w(\lambda),\lambda)], \\
    w(\lambda) = G\big(\EE[\hat T_\xi(w(\lambda), \lambda)],\lambda\big).
\end{gathered}
\end{equation}
where $\zeta$ is a random variable.
We consider \Cref{alg:nsid-bilevel}, which additionally computes $\bar E'(w_t(\lambda),\lambda) := J_1^{-1} \sum_{j=1}^{J_1} \hat E_{\zeta^{(1)}_{j}}'(w_t(\lambda),\lambda)$, a minibatch gradient estimator of $E' \in \D{E}$, using the sequence $\bm{\zeta}^{(1)} = (\zeta^{(1)})_{1\leq j\leq J_1}$ of i.i.d. copies of $\zeta$.

\begin{algorithm}[ht]
   \caption{NSID-Bilevel}
   \label{alg:nsid-bilevel}
\begin{algorithmic}[1]
   \STATE {\bfseries Input:} $k,J_1, J_2 \in \N$ ,  $w_t(\lambda) \in \R^d$, $\bm{\xi}^{(1)}, \bm{\xi}^{(2)}, \bm{\zeta}^{(1)}$
   \STATE Compute $\bar E'(w_t(\lambda),\lambda)$ $\qquad$ {\small\textit{(using $\bm{\zeta}^{(1)}$)}}
   \STATE $y \gets \partial_1\bar{E}(w_t(\lambda),\lambda)^\top$
   \STATE $r(w_t(\lambda),\lambda) \gets \mathrm{NSID}(k, J_2, w_t(\lambda), y, \bm{\xi}^{(1)}, \bm{\xi}^{(2)})$
   \STATE {\bfseries Return} $\hat{\nabla} f(\lambda)^\top:= r(w_t(\lambda),\lambda)^\top + \partial_2 \bar E(w_t(\lambda),\lambda)$
\end{algorithmic}
\end{algorithm}
With additional mild assumptions on the variance of $\hat E$ and when $E(\cdot,\lambda)$ is Lipschitz, we recover the same convergence rates as NSID, but this time to $\D{f}^{\imp}(\lambda)$ (\Cref{th:nsidhgrates}).

\paragraph{On the convergence of the bilevel problem}
Despite these encouraging results and the fact that in the smooth case several works provide convergence rates to a stationary point of the gradient of $f$ 
\citep[and others]{ji2021bilevel,arbel2021amortized,grazzi2023bilevel}, proving such type of results or even asymptotic convergence (without rates) in our nonsmooth case is more challenging and we leave it for future work. One crucial issue is that in the analysis, the constant defined in \Cref{lm:glolip}, which we use in place of that of Lipschitz smoothness, cannot be properly controlled on the whole $\Lambda$ as required in the smooth case: it becomes arbitrarily large when $(w(\lambda), \lambda)$ approaches nondifferentiable regions of $\Phi$.

\section{Experiments}
\label{se:exp}
The experiments aim to achieve two primary goals. Firstly, we aim to empirically demonstrate the practical manifestation of distinct behaviors between AID and ITD, as outlined in the theoretical findings of Section \ref{se:detitdaid}. Emphasis is placed on aspects specific to the nonsmooth analysis. Secondly, we intend to evaluate the empirical performance of our stochastic method NSID presented in Algorithm \ref{alg:nsid}. We implement NSID by relying on PyTorch automatic differentiation for the computation of Jacobian-vector products. For AID and ITD, we use the existing PyTorch implementations\footnote{\footnotesize \url{https://github.com/prolearner/hypertorch}}.
We provide the code to reproduce our experiments at \url{https://github.com/prolearner/nonsmooth_implicit_diff}

\begin{figure}[t]
\begin{center}
\includegraphics[width=.4\columnwidth]{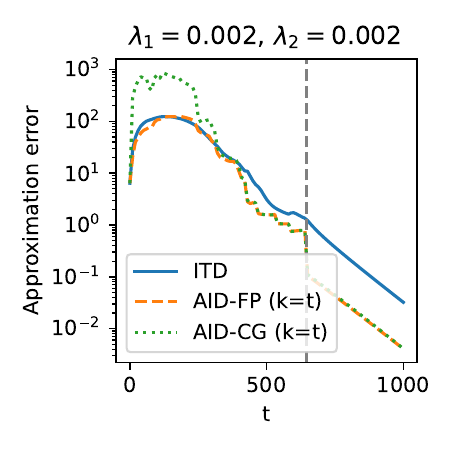}
\hspace{-.3cm}
\includegraphics[width=.4\columnwidth]{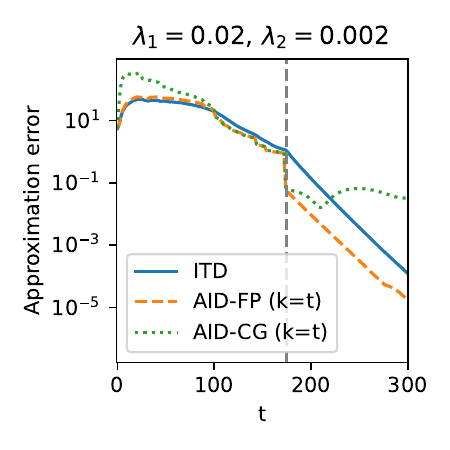}
\vspace{-.3cm}
\caption{AID vs ITD for synthetic elastic-net. $t$ corresponds to the number of steps to find an approximate fixed point and the dashed vertical line is the step where the support is identified. AID-FP converges faster than ITD; %
note that after support identification there is a wide gap between the methods, as anticipated by our theoretical bounds. AID-CG does not converge in plot on the right, probably due to  sensitivity to numerical errors.}
\label{fig:aidvsitd}
\end{center}
\end{figure}

\paragraph{Experimental Setup} We consider two problems where we are interested in approximating an element of the conservative Jacobian-vector product of the solution map $\D{w}^{\fix}(\lambda)^\top y$ for $y\in \R^d$. With a focus on bilevel optimization, we set $y$ as the gradient of the validation loss in $w_t(\lambda)$, as explained in \Cref{se:bilevel}, while to compute the approximation error we use the procedure described in \Cref{se:computingapproxerror}.\\[1ex]
{\it Elastic Net~} 
Let $(X, y) \in \R^{n \times d} \times \R^n$ be a training regression dataset.  
 The elastic net solution $w(\lambda)$ is the minimizer of the objective function 
$\frac{1}{n}\norm{X w - y}^2 + \lambda_1\norm{w}_1 + \frac{\lambda_2}{2}\norm{w}^2_2 $, where $\lambda = (\lambda_1,\lambda_2)$ are the regularization hyperparameters.\\[1ex]
{\it Data Poisoning~} 
We consider a data poisoning scenario similar to the one in \citep{xiao2015feature}, where an attacker would like to corrupt part of the training dataset by adding noise in order to decrease the accuracy of an elastic-net regularized logistic regression model after training. In particular, let $c$ be the number of classes and $(\tilde{X}, \tilde{y}) \in \R^{n' \times d} \times [c]^{n'}$ be the examples to corrupt while $(X, y) \in \R^{n \times d} \times [c]^n$ are the clean ones. Let also $\Gamma \in \R^{n' \times d}$ represent the noise and define the 
data poisoning elastic net solution as $w(\Gamma) = \argmin_{w\in \R^d} f(\Gamma, w)  + \lambda_1\norm{w}_1+ \frac{\lambda_2}{2}\norm{w}^2_2 $,
where $f(\Gamma, w) = \ell(X w, y)/2 + \ell((\tilde{X}+\Gamma)w, \tilde{y})/2$ and $\ell$ is the cross-entropy loss. A strategy to find $\Gamma$ would be by approximating an element of the conservative Jacobian-vector product $D_{w}(\Gamma)^\top y$ where $y$ is the gradient of the cross-entropy loss on an hold out set. This setting is of particular interest, since $\Gamma$ is high dimensional and hence zero-order methods like grid or random search are less appropriate.
For both settings and all considered methods, we find an approximate solution $w_t(\lambda)$ always by iterating the contraction map which describes the iterates of the deterministic iterative soft-thresholding algorithm (see e.g., \citep{combettes2005signal}). Although this may be  inefficient in the stochastic setup, it yields a fairer comparison, since both the stochastic and deterministic algorithms will have the same $w_t(\lambda)$ as input. Additional details are in the appendix.

\paragraph{AID and ITD} We consider the Elastic Net scenario and construct a synthetic supervised linear regression problem with $100$ training examples and $100$ features, of which $30$ are informative. As the fixed point map $\Phi$ we use one step of iterative soft-thresholding. The appropriate choice for the step-size guarantees that $\Phi$ is a contraction, in our case we set it equal to $2/(L+\mu + 2\lambda_2)$, where $L$ and $\mu$ are the largest and smallest eigenvalues values of $n^{-1}X^\top X$. \\
We compare ITD, AID-FP, and AID-CG a variant of AID which uses conjugate gradient to solve the linear system \citep{grazzi2020iteration}, where the vector $y$ for the Jacobian-vector product is the gradient of the square loss on a validation set, computed on the $t$-the iterate ($\nabla E(w_t(\lambda),\lambda)$ where $E$ is defined in \eqref{eq:bilevel}). In \Cref{fig:aidvsitd} we can see two runs, each one for two particular choices of $\lambda$ which highlight a wide gap in performance after support identification, i.e.\@ when  both $w_t(\lambda)$ and $w(\lambda)$ have the same non-zero elements. This was predicted by \Cref{th:itdaidrates}, since support identification coincides with $\norm{w_t(\lambda)- w(\lambda)} \leq R_\lambda$.

\paragraph{Stochastic Methods} 
We compare our stochastic method NSID (\Cref{alg:nsid}) against AID-FP and the algorithm SID in \citep{grazzi2023bilevel}. In particular, for NSID $\hat T_x$ corresponds to one step of gradient descent on a minibatch of training points, while $G$ is soft-thresholding. We implement SID by setting in NSID $G(u, \lambda) = u$ and using $\hat \Phi_\xi(u,\lambda) = G(\hat T_\xi(u,\lambda), \lambda)$ in place of $\hat{T}_\xi$. Note that although the theoretical convergence guarantee for SID do not hold  due to $\hat{\Phi}_\xi$ being biased, the performance of SID still effectively measures the impact of such bias in practice. \\
We consider both the elastic net and the data poisoning setups; see the appendix for more information.  
The results are shown in \Cref{fig:hdstoch}. For elastic net, each run corresponds to a different sampling of the covariance matrix, training points, true solution vector and minibatches used by the stochastic algorithms. For Data poisoning, each run corresponds to different sampling of the noise $\Gamma$ (sampled from a normal and then each component projected in $[-.1,.1]$) and the mini-batches used by the stochastic algorithms. For AID-FP, each epoch corresponds to one iteration, since it uses the entire dataset, while for NSID and SID the number of epochs is equal to $(k+J)(n'+n)/b$, where $b$ is the minibatch size, which we set to $10\%$ of the training set, i.e.\@ $b=(n'+ n)/10$. Note that for each point in the plots for NSID and SID, we need to start the algorithm from scratch since we increase both $k$ and $J$ simultaneously. In particular we set $k=J$ for elastic net and $J=\ceil{k/20}$ for data poisoning.

\begin{figure}[t]
    \centering
\includegraphics[width=.4\columnwidth]{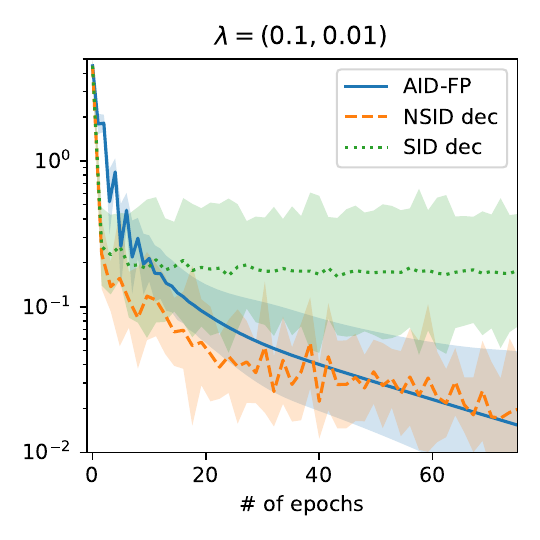}
\hspace{-.3cm}
\includegraphics[width=.4\columnwidth]{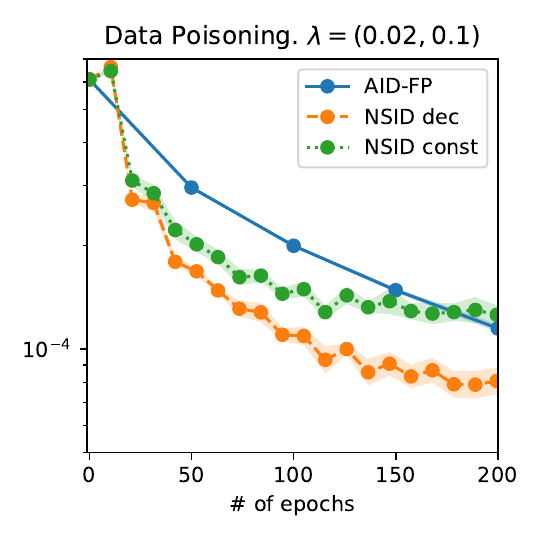}
\vspace{-.3cm}
    \caption{Stochastic implicit differentiation for elastic net (left) and data poisoning (right) with constant (const) and decreasing (dec) step sizes. Mean (solid line) and the geometric standard deviation (shaded region) of the approximation error over 10 runs. SID does not converge on elastic net for this specific choice of $\lambda$ and diverges in data poisoning (hence we do not report it), while NSID converges faster (at the beginning) than the deterministic AID-FP. Note that decreasing step-sizes provide a favorable choice.}
    \label{fig:hdstoch}
\end{figure}

\section{Conclusions}
\label{se:conclusions}
We established convergence guarantees for nonsmooth 
implicit differentiation methods. Leveraging the foundation laid by \citep{bolte2022automatic}, we developed tools facilitating the translation of results from the smooth case. This allowed us to 
provided non-asymptotic linear convergence rates for AID-FP and ITD, focusing on 
deviations from their smooth analogs. Additionally, we introduced NSID, a principled stochastic algorithm. Numerical experiments
underscored the distinctive behaviors of AID-FP and ITD, along with the good performance of NSID, which may be useful in large scale bilevel optimization problems in the future. 
Despite our results, establishing rates for solving nonsmooth bilevel problems is still challenging and we leave it for future work.

\section*{Acknowledgements~} This work was supported in part from the PNRR MUR Project PE000013 CUP J53C22003010006 “Future Artificial Intelligence Research (FAIR)“, funded by the European Union – NextGenerationEU, and EU Project ELSA under grant agreement No. 101070617.

\bibliography{ref.bib}
\bibliographystyle{icml2024}

\newpage
\appendix

\onecolumn

\begin{center}
{\Large \bf Appendices} 
\end{center}

This supplementary material is organized as follows. In App.~\ref{se:definable} we recall the notion of definable mappings. App.~\ref{app:aux} gives some auxiliary results and proof of lemmas in the main body. In App.~\ref{app:iterAID} we present the proof of \Cref{th:itdaidrates}. App.~\ref{app:SID} gives the proof of ~\Cref{th:nsidrate,cor:nsidrate}. In App.~\ref{app:BO} we address bilevel optimization. Finally, App.~\ref{app:exps} contains more information on the numerical experiments.

\section{Definable Mappings}\label{se:definable}

The concept of definable sets and functions is part of the so called tame geometry. Here we give just a very brief account (additional details can be found in \citep{bolte2021conservative}). An \emph{$o$-minimal structure on $(\R,+,\cdot)$} (`o' stands for 'ordinal') is a collection of sets $\mathcal{O}=(\mathcal{O}_p)_{p \in \N}$ such that, for each $p \in \N$,
\begin{enumerate}[label={\rm (\roman*)}]
\item $\mathcal{O}_p$ is a \emph{Boolean algebra}, meaning a nonempty family of subset of $\R^p$ which is stable by complementations and finite unions and intersections. Moreover, it contains 
the algebraic sets, that is, the sets of zeros of polynomial functions in $p$ variables.
\item $\mathcal{O}_1$ is made exactly of finite unions of intervals.
\item $A \in \mathcal{O}_p \Rightarrow\ A\times \R, \R\times A \in \mathcal{O}_{p+1}$
\item if $\pi_p\colon \R^{p+1}\to \R^p$ is the canonical projection onto the first $p$
components, then $A \in \mathcal{O}_{p+1}\ \Rightarrow\ \pi_{p}(A) \in \mathcal{O}_p$;
\end{enumerate}
Subsets of $\R^p$ which belongs to an $o$-minimal structure $\mathcal{O}$ are called \emph{definable in $\mathcal{O}$} and set-valued mappings $F\colon \R^d\multito \R^p$ are said \emph{definable in $\mathcal{O}$} if their graphs 
(as a subset of $\R^{d+p}$) is definable in $\mathcal{O}$.

There are several examples of $o$-minimal structures. The smallest one is that of real semialgebraic sets, meaning 
finite unions of sets which are solutions of a system of polynomial equations and inequalities. Here we consider  
the larger class of $\log-\exp$ structure, which additionally contains the graph of the exponential function and includes most of the functions considered in machine learning, including deep learning. 
So, in this paper definable is meant to be definable in the $\log-\exp$ $o$-minimal structure.

\section{Auxiliary Lemmas}
\label{app:aux}
\begin{lemma}[Properties of the excess]\label{lm:gapprop}
Let $\A, \B, \A^\prime, \B^\prime\subset \R^{n\times p}$ and $\C \subset \R^{d \times n}$, $\mathcal{D} \subset \R^{p \times d}$
be nonempty sets of matrices.
The following hold true: 
\begin{enumerate}[label={\rm (\roman*)}]
\item\label{eq:gapti} $\gap(\A, \C) \leq \gap(\A, \B) + \gap(\B, \C)$
\item\label{eq:gapsum} $\gap(\A + \A^\prime, \B + \B^\prime) \leq \gap(\A,  \B) + \gap(\A^\prime, \B^\prime)$
\item\label{eq:gapmul} $\gap(\C\hspace{0.1ex} \A, \C\hspace{0.1ex} \B)  \leq \norm{\C}_{\sup}\gap(\A, \B)$ and
$\gap(\A\hspace{0.1ex} \mathcal{D}, \B\hspace{0.1ex} \mathcal{D})  \leq \norm{\mathcal{D}}_{\sup}\gap(\A, \B)$  
\item\label{eq:gapincl} If $\B\subset \B^\prime$, then $\gap(\A,\B^\prime) \leq \gap(\A,\B)$.
\item\label{eq:gapinv} Suppose that $n=p$ and that all the elements in $\A$ and $\B$ are invertible. Then
\begin{equation*}
\gap(\A^{-1},\B^{-1}) \leq \norm{\A^{-1}}_{\sup}\norm{\B^{-1}}_{\sup} \gap(\A, \B).
\end{equation*}
\item\label{lm:gapprop_iv} 
Suppose that $p=p_1+p_2$ and set, for $k=1,2$ 
$\mathrm{pr}_1\colon \R^{n\times p} \to \R^{n\times p_k}$ be the canonical projections and
\begin{align*}
\A_k &= \mathrm{pr}_k(\A)= \{A_k \in \R^{n\times p_k}\,\vert\, [A_1, A_2] \in \A\}, \quad
\B_k = \mathrm{pr}_k(\B)= \{B_k \in \R^{n\times p_k}\,\vert\, [B_1, B_2] \in \A\}.
\end{align*}
Then $\gap(\A_k,\B_k) \leq \gap(\A,\B)$.

\item\label{lm:affineapp} Suppose that $p=p_1+p_2$.
Then, for all $\mathcal{X},\mathcal{Y}\subset \R^{p_1\times p_2}$, we have
\begin{gather*}
    \norm{\A(\mathcal{X})}_{\sup} \leq \norm{\A_1}_{\sup}\norm{\mathcal{X}}_{\sup} + \norm{\A_2}_{\sup}, \\
    \gap(\A(\mathcal{X}), \A(\mathcal{\mathcal{Y}})) \leq \norm{\A_1}_{\sup} \gap(\mathcal{X}, \mathcal{Y}), \qquad
    \gap(\A(\mathcal{X}), \B(\mathcal{X})) \leq (1+\norm{\mathcal{X}}_{\sup} )\gap(\A,\B)
\end{gather*}
where we recall that $\A(\mathcal{X}) = \{A_1 X +A_2 \given [A_1,A_2] \in \A, X \in \mathcal{X}\}$.

\end{enumerate}
\end{lemma}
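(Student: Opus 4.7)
The proof is a collection of routine but careful manipulations of the definition $\gap(\A,\B) = \sup_{A\in\A}\inf_{B\in\B}\norm{A-B}$. For each item my strategy is to fix $A$ in the ``outer'' set, produce a near-optimal competitor in the ``inner'' set via an $\epsilon$-approximate infimum, then take the supremum and let $\epsilon\to 0$. Item \ref{eq:gapti} follows this template: given $A\in\A$, pick $B_\epsilon\in\B$ with $\norm{A-B_\epsilon}\leq \inf_{B\in\B}\norm{A-B}+\epsilon$ and then $C_\epsilon\in\C$ with $\norm{B_\epsilon-C_\epsilon}\leq \inf_{C\in\C}\norm{B_\epsilon-C}+\epsilon$; the ordinary triangle inequality plus the definitions yield $\inf_C\norm{A-C}\leq \gap(\A,\B)+\gap(\B,\C)+2\epsilon$. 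Item \ref{eq:gapsum} is similar, using $\norm{(A+A')-(B+B')}\leq \norm{A-B}+\norm{A'-B'}$ and choosing $B,B'$ independently. Item \ref{eq:gapincl} is immediate since enlarging the feasible set in the infimum can only decrease it.

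For items \ref{eq:gapmul} and \ref{eq:gapinv} the main idea is that only \emph{diagonal} competitors need be considered. For \ref{eq:gapmul}, given $CA\in\C\A$, use $CB\in\C\A$ as a competitor and bound $\norm{CA-CB}\leq \norm{C}\norm{A-B}\leq \norm{\C}_{\sup}\norm{A-B}$; the sup over $(C,A)$ and inf over $B$ decouple into $\norm{\C}_{\sup}\gap(\A,\B)$. For \ref{eq:gapinv}, the identity $A^{-1}-B^{-1}=A^{-1}(B-A)B^{-1}$ combined with submultiplicativity gives $\norm{A^{-1}-B^{-1}}\leq \norm{\A^{-1}}_{\sup}\norm{\B^{-1}}_{\sup}\norm{A-B}$, and the conclusion follows by taking $\inf_B$ and then $\sup_A$.

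Item \ref{lm:gapprop_iv} uses that, for the operator/Frobenius norm on block matrices, $\norm{A_k-B_k}\leq \norm{[A_1-B_1,A_2-B_2]}$, so that for any $A=[A_1,A_2]\in\A$, $\inf_{B_k\in\B_k}\norm{A_k-B_k}\leq \inf_{[B_1,B_2]\in\B}\norm{A-B}$, from which taking $\sup_{A\in\A}$ yields the bound. For item \ref{lm:affineapp}, the first inequality is the triangle/submultiplicativity chain $\norm{A_1X+A_2}\leq \norm{A_1}\norm{X}+\norm{A_2}$ after which one takes the joint supremum. For the second, use the \emph{same} $[A_1,A_2]$ but vary $Y\in\mathcal{Y}$: $\norm{(A_1X+A_2)-(A_1Y+A_2)}\leq \norm{A_1}\norm{X-Y}\leq \norm{\A_1}_{\sup}\norm{X-Y}$; the $\sup_{A,X}\inf_{Y}$ decouples into $\norm{\A_1}_{\sup}\gap(\mathcal{X},\mathcal{Y})$. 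For the third inequality, use the competitor $B_1X+B_2$ with $[B_1,B_2]\in\B$: $\norm{(A_1-B_1)X+(A_2-B_2)}\leq \norm{A_1-B_1}\norm{X}+\norm{A_2-B_2}\leq (1+\norm{X})\norm{[A_1-B_1,A_2-B_2]}$, where the last step uses $\max(\norm{A_1-B_1},\norm{A_2-B_2})\leq \norm{[A_1-B_1,A_2-B_2]}$.

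The only mildly subtle step is the block-norm inequality $\norm{A_k}\leq \norm{[A_1,A_2]}$ invoked in \ref{lm:gapprop_iv} and in the last part of \ref{lm:affineapp}: it holds because one can test the operator norm on vectors supported in a single block. Apart from this observation, every item reduces to a direct application of the definition together with standard norm inequalities, so no single step should present real difficulty; the main care is in bookkeeping the order of inf/sup and in picking near-optimal competitors so that the appropriate $\sup$-$\inf$ decouples.
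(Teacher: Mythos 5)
Your proposal is correct and follows essentially the same approach as the paper's own proof: apply the definition of $\gap$ item by item, pick diagonal (or same-$X$/same-$A$) competitors, and invoke the standard norm inequalities, including the block bound $\norm{A_k}\leq\norm{[A_1,A_2]}$ for items \ref{lm:gapprop_iv} and \ref{lm:affineapp}. The only cosmetic difference is in item \ref{eq:gapti}, where you introduce explicit $\epsilon$-near minimizers while the paper rearranges the triangle inequality directly and takes $\inf$/$\sup$ in stages; the content is identical.
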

\begin{proof}
In the following when $A$ is a matrix and $\B$ is a set of matrices we set
$d(A, \B) := \inf_{B \in \B} \norm{A - B}$, which is the distance from $A$ to the set $\B$.

\ref{eq:gapti}: Let $A\in \A$ and $B \in \B$. Then
\begin{align*}
(\forall\,C\in \C)\quad
&d(A,\C) \leq \norm{A-C} \leq \norm{A - B} + \norm{B- C}\\
&\implies 
d(A,\C) - \norm{A - B} \leq \norm{B- C}.
\end{align*}
Thus
\begin{equation*}
d(A,\C) - \norm{A - B} \leq d(B,\C) \leq \gap(\B,\C)
\end{equation*}
and hence
\begin{equation*}
(\forall\,B \in \B)\quad d(A, \C) - \gap(\B,\C) \leq \norm{A-B}.
\end{equation*}
So, $d(A, \C) - \gap(\B,\C) \leq d(A,\B)\leq \gap(\A,\B)\implies d(A, \C) \leq \gap(\B,\C) + d(\A,\B)$.
Taking the sup in $A \in \A$ the statement follows.

\ref{eq:gapsum}: Let $A\in \A, A^\prime \in \A$. Then, 
\begin{align*}
(\forall\, B\in \B)
(\forall\, B^\prime\in \B^\prime)\quad
d(A+A^\prime, \B+\B^\prime) &\leq \norm{(A+A^\prime)-(B+B^\prime)}\\ 
&\leq \norm{A-B} + \norm{A^\prime-B^\prime}.
\end{align*}
Thus,
\begin{equation*}
d(A+A^\prime, \B+\B^\prime) \leq d(A, \B) + d(A^\prime, \B^\prime) \leq \gap(\A,\B) + \gap(\A^\prime,\B^\prime).
\end{equation*}
Since $A$ and $A^\prime$ are arbitrary in $\A$ and $\A^\prime$ respectively, the statement follows.

\ref{eq:gapmul}: Let $A\in \A$, $B \in \B$ and $C \in C$. Then
\begin{equation*}
d(C A, \C\B) \leq \norm{C A - C B} \leq \norm{C} \norm{A - B} \leq \norm{\C}_{\sup} \norm{A- B}.
\end{equation*}
Taking the infimum over $B \in \B$ we get
\begin{equation*}
d(C A, \C\B) \leq \norm{\C}_{\sup} \inf_{B \in \B}\norm{A- B} \leq \norm{\C}_{\sup} \gap(\A, \B).
\end{equation*}
Now, taking the supremum over $C\in \C$ and $A \in \A$, the statement follows. A similar proof can be applied for the other case.

\ref{eq:gapinv}: Let $A \in \A$ and $B \in \B$. Then $A^{-1} - B^{-1} = A^{-1} (B- A) B^{-1}$
and hence
\begin{equation*}
\norm{A^{-1} - B^{-1}} \leq \norm{A^{-1}} \norm{A- B} \norm{B^{-1}} \leq \norm{\A^{-1}}_{\sup}\norm{\B^{-1}}_{\sup}
\norm{A- B}.
\end{equation*}
Thus
\begin{align*}
\inf_{B \in \B}\norm{A^{-1} - B^{-1}} &\leq \norm{\A^{-1}}_{\sup}\norm{\B^{-1}}_{\sup}
\inf_{B \in \B}\norm{A- B}\\ 
&\leq\norm{\A^{-1}}_{\sup}\norm{\B^{-1}}_{\sup}\gap(\A,\B).
\end{align*}
Taking the supremum in $A \in \A$, the statement follows.

\ref{lm:gapprop_iv}: We first note that if $A = [A_1, A_2] \in \R^{d\times (p_1+p_2)}$ we have
\begin{equation*}
\norm{A_1} = \sup_{\norm{x}\leq 1} \norm{A_1 x} = \sup_{\norm{(x,0)}\leq 1} \Big\lVert [A_1 A_2]\begin{bmatrix} x\\0\end{bmatrix}\Big\rVert \leq  \norm{A}
\end{equation*}
and similarly $\norm{A_2}\leq \norm{A}$. Now let $A_1 \in \A_1$ and $B = [B_1 B_2] \in \B$. Then there exists 
$A_2$ such that $A=[A_1 A_2] \in \A$ and hence
\begin{equation*}
d(A_1, \B_1) \leq \norm{A_1 - B_1} \leq \norm{A - B}.
\end{equation*}
Since the above inequality holds for every $B \in \B$ we have
\begin{equation*}
d(A_1, \B_1) \leq \inf_{B \in \B} \norm{A - B} \leq \gap(\A,\B)
\end{equation*}
which in turns holds for every $A_1 \in \A_1$. Thus, taking the supremum in $A_1 \in \A_1$ the statement follows with $k=1$.
The other case is proved in the same manner.

\ref{lm:affineapp}:
For the first inequality we have
\begin{align*}
    \norm{\A(\mathcal{X})}_{\sup} &= \sup_{A \in \A, X\in \mathcal{X}} \norm{A_1 X + A_2} \\
    &\leq \sup_{A \in \A, X\in \mathcal{X}} \left(\norm{A_1} \norm{X} + \norm{A_2}\right) \\
    &\leq \sup_{A \in \A}\norm{A_1} \sup_{A \in \mathcal{X}}\norm{X} + \sup_{A' \in \A}\norm{A'_2} = \norm{\A_1}_{\sup}\norm{\mathcal{X}}_{\sup} + \norm{\A_2}_{\sup}.
\end{align*}
For the second inequality we have
\begin{align*}
    \gap(\A(\mathcal{X}), \A(\mathcal{Y})) &= \sup_{A \in \A, X \in \mathcal{X}} \inf_{A' \in \A, Y \in \mathcal{Y}} \norm{A_1 X - A_2 - A'_1 Y + A'_2} \\
    &\leq \sup_{A \in \A, X \in \mathcal{X}} \inf_{Y \in \mathcal{Y}} \norm{A_1 (X - Y)} \\
    &\leq \sup_{A \in \A} \norm{A_1} \sup_{X \in \mathcal{X}} \inf_{Y \in \mathcal{Y}} \norm{X - Y} 
    = \norm{\A_1}_{\sup}\gap(\mathcal{X},\mathcal{Y}).
\end{align*}
For the third inequality  we have
\begin{align*}
    \gap(\A(\mathcal{X}), \B(\mathcal{X})) &= \sup_{A \in \A, X \in \mathcal{X}} \inf_{B \in \B, X' \in \mathcal{X}} \norm{A_1 X - A_2 - B_1 X' + B_2} \\
    &\leq \sup_{A \in \A, X \in \mathcal{X}} \inf_{B \in \B} \norm{(A_1 -B_1)X - A_2 + B_2} \\
    &\leq \sup_{A \in \A, X \in \mathcal{X}} \inf_{B \in \B} \left(\norm{A_1 -B_1}\norm{X} +\norm{A_2 - B_2} \right)\\
    &\leq \sup_{A \in \A, X \in \mathcal{X}} \inf_{B \in \B} \left(\norm{A -B}\norm{X} +\norm{A - B} \right)\\
    &\leq \sup_{X \in \mathcal{X}} (1 + \norm{X}) \sup_{A \in \A} \inf_{B \in \B} \norm{A - B} 
    =(1+\norm{\mathcal{X}}_{\sup})\gap(\A,\B).
\end{align*}

The proof is complete.
\end{proof}

We now recall the following result from \citep{bolte2022automatic} (Lemma~4 in the Appendices),
which is stated in a slightly more general form.
\begin{theorem}\label{thm:20240130a}
Let $F\colon U\subset\R^p\to \R^d$ be a continuous selection of the definable 
Lipschitz smooth mappings $F_1, \dots, F_r \colon U\subset\R^p\to \R^d$.
Let $L_i$ be the Lipschitz constant of $F^\prime_i$ and set $L = \max_{1 \leq i \leq r} L_i$.
Then, for any $x \in U$ there exists $R_{x}>0$ such that
\begin{equation*}
\forall\, x^\prime \in U\ \text{with}\ \norm{x^\prime- x} \leq R_{x}\colon\ \gap(\D{F}^s(x^\prime), \D{F}^s(x)) \leq L \norm{x^\prime - x}.
\end{equation*}
\end{theorem}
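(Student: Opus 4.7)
\textbf{Proof plan for Theorem~\ref{thm:20240130a}.}
My plan is to reduce the bound on the excess to the statement that the active index set mapping $I_F$ is locally ``nonincreasing'' at $x$: there exists $R_x>0$ such that $I_F(x')\subset I_F(x)$ whenever $\norm{x'-x}\leq R_x$. Once this is in hand, the Lipschitz bound on $\D{F}^s$ follows by a straightforward convex combination argument using the individual Lipschitz constants $L_i$, and definability plays no role in the proof (one only uses continuity of the $F_i$ and $F$, and the Lipschitz smoothness of each $F_i$).

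\textbf{Step 1: local inclusion of active sets.} For each $i\notin I_F(x)$, one has $F_i(x)\neq F(x)$, so $\delta_i:=\tfrac12\norm{F_i(x)-F(x)}>0$. Since $F$ and each $F_i$ are continuous at $x$, there is $r_i>0$ such that whenever $\norm{x'-x}\leq r_i$,
\begin{equation*}
\norm{F_i(x')-F_i(x)}\leq \tfrac12\delta_i,\qquad \norm{F(x')-F(x)}\leq \tfrac12\delta_i.
\end{equation*}
A triangle inequality then gives $\norm{F_i(x')-F(x')}\geq 2\delta_i-\delta_i=\delta_i>0$, whence $i\notin I_F(x')$. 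Because the complement of $I_F(x)$ in $[r]$ is finite, taking $R_x:=\min_{i\notin I_F(x)} r_i>0$ yields $I_F(x')\subset I_F(x)$ for all $x'\in U$ with $\norm{x'-x}\leq R_x$ (shrinking $R_x$ further if needed to remain inside $U$).

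\textbf{Step 2: matching convex combinations.} Fix such an $x'$ and pick $A'\in \D{F}^s(x')$. By \eqref{eq:sCD}, there exist $(\alpha_i)_{i\in I_F(x')}$ with $\alpha_i\geq 0$, $\sum_i\alpha_i=1$, such that $A'=\sum_{i\in I_F(x')}\alpha_i F'_i(x')$. Since $I_F(x')\subset I_F(x)$, the convex combination
\begin{equation*}
A:=\sum_{i\in I_F(x')}\alpha_i F'_i(x)
\end{equation*}
belongs to $\D{F}^s(x)$. Then, using Lipschitz smoothness of each $F_i$ and $L_i\leq L$,
\begin{equation*}
\norm{A'-A}\leq \sum_{i\in I_F(x')}\alpha_i\,\norm{F'_i(x')-F'_i(x)}\leq \sum_{i\in I_F(x')}\alpha_i L_i\,\norm{x'-x}\leq L\,\norm{x'-x}.
\end{equation*}
Taking the infimum over $\D{F}^s(x)$ and then the supremum over $A'\in \D{F}^s(x')$ delivers exactly $\gap(\D{F}^s(x'),\D{F}^s(x))\leq L\,\norm{x'-x}$.

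\textbf{Expected obstacle.} Step 2 is essentially mechanical once Step 1 is in place, so the only nontrivial issue is Step 1, and even there the argument is routine because the index set $[r]$ is finite. The reason that only a \emph{local} Lipschitz bound is obtained (rather than the global one of Lemma~\ref{lm:glolip}) is that the inclusion $I_F(x')\subset I_F(x)$ can fail once $x'$ is far from $x$: then the roles of $F'_i(x')$ and $F'_i(x)$ need no longer be paired by a common index, and the additive correction $M_x/R_x$ appearing in Lemma~\ref{lm:glolip} becomes unavoidable. Consequently, the present statement is the correct and sharpest local version, from which the near-regime of Lemma~\ref{lm:glolip} is then deduced.
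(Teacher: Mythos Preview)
Your proof is correct, but it takes a genuinely different and more elementary route than the paper's. The paper, following \citet{bolte2022automatic}, establishes the key local inclusion $I_F(x')\subset I_F(x)$ through an o-minimality argument: it introduces the set-valued map $g\colon\rho\mapsto I_F(B_\rho(x))$, shows that $g$ is definable as a composition of the definable maps $\rho\mapsto B_\rho(x)$ and $I_F$, and then uses that definable set-valued maps on the real line are piecewise constant to conclude that $g(\rho)$ is constant for $\rho\in\left]0,R_x\right]$; the remainder is deferred to Lemma~4 of \citet{bolte2022automatic}. Your Step~1 obtains the same inclusion directly from continuity of $F$ and of the $F_i$'s together with the finiteness of $[r]$, bypassing definability entirely. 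In particular, your argument reveals that the definability hypothesis is not actually needed for this statement---it is required elsewhere (e.g., in Theorem~\ref{thm:20240129a} to guarantee that $\D{F}^s$ is a conservative derivative), but not here. Your Step~2 is the standard convex-combination matching, which is also what the referenced lemma in \citet{bolte2022automatic} does. The paper's approach buys uniformity with the o-minimal framework used throughout the work; yours buys simplicity and makes transparent which hypotheses are genuinely doing work in this local Lipschitz estimate.
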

\begin{proof}
Similarly to \citep{bolte2022automatic} we define
\begin{equation*}
g\colon \left]0,+\infty\right[\multito [r]\quad\text{such that}\ g(\rho) = I_F(B_\rho(x)),
\end{equation*}
where $B_\rho(x)$ is the closed ball of radius $\rho>0$ centered at $x$. Now, we note that $g$
is the composition of the maps
\begin{equation*}
\varphi\colon \left]0,+\infty\right[\multito \R^p\colon \rho\multito B_\rho(x),
\quad\text{and}\quad I_F\colon \R^p \multito [r].
\end{equation*}
The first one is clearly semialgebraic and hence definable and the second map
 is definable by definition (since the $F_i$'s are definable, it is easy to see that $F$ is definable if and only if $I_F$ is definable). Thus, being $g$ composition of definable set-valued mappings it is definable. Then, for every $I\subset [r]$, we have that the set
 $[g=I] = \{\rho \in \left]0,+\infty\right[ \,\vert\, g(\rho)=I\}$ is definable and setting $\mathcal{J} = \{ g(\rho) \,\vert\, \rho \in \left]0,+\infty\right[\}\subset 2^{[r]}$, we have that $([g=I])_{I \in \mathcal{J}}$, is a finite partition of $\left]0,+\infty\right[$ made of definable sets of the real line.
Thus, each one of them must be finite unions of disjoints intervals, which shows that $g$ is piecewise constant. It follows that
there exists $R_x>0$ and $I\subset [r]$ such that for every $\rho \in \left]0,R_x\right]$ $g(\rho)=I$. The proof continues as in 
Lemma~4 in \citep{bolte2022automatic}.
\end{proof}

\begin{proof}[\proofname{} of \Cref{lm:glolip}]
Let $x \in U$. Let  $\Delta_r = \{\alpha \in \R_+^r \,\vert\, \sum_{i=1}^r \alpha_i = 1\}$ be the unit simplex of $\R^r$
and $\Delta^x_r = \{\alpha \in \Delta_r \,\vert\, \forall\, i \in [r]\setminus I_F(x)\colon \alpha_i=0\}$ 
(which is essentially the unit simplex of $\R^{I_F(x)}$). Set $\A = \mathrm{co}(\{\partial F_i(x) \,\vert\, i \in I_F(x^\prime)\})$. Then, using the property of the excess in \Cref{lm:gapprop}\ref{eq:gapti}
\begin{equation*}
\gap(\D{F}^s(x^\prime), \D{F}^s(x))
\leq \underbrace{\gap(\D{F}^s(x^\prime), \A)}_{(1)} + \underbrace{\gap(\A, \D{F}^s(x))}_{(2)}.
\end{equation*}
We will bound the two terms $(1)$ and $(2)$ separately.
We recall that
\begin{equation*}
\D{F}^s(x^\prime) = \Conv(\{F^\prime_i(x^\prime) \,\vert\, i \in I_F(x^\prime)\})
\quad\text{and}\quad
\D{F}^s(x) = \Conv(\{F^\prime_i(x) \,\vert\, i \in I_F(x)\}).
\end{equation*}
Then
\begin{align*}
(1) &= \sup_{\alpha \in \Delta_r^{x^\prime}} \inf_{\beta \in \Delta_r^{x^\prime}} 
\Big\lVert \sum_{i \in I(x^\prime)} \alpha_i F^\prime_i(x^\prime) - \sum_{i \in I(x^\prime)} \beta_i F^\prime_i(x)\Big\rVert \\[1ex]
&\leq \sup_{\alpha \in \Delta_r^{x^\prime}}
\Big\lVert \sum_{i \in I(x^\prime)} \alpha_i ( F^\prime_i(x^\prime)- F^\prime_i(x))\Big\rVert\\[1ex]
&\leq \sup_{\alpha \in \Delta_r^{x^\prime}}\sum_{i \in I(x^\prime)} \alpha_i \norm{ F^\prime_i(x^\prime)- F^\prime_i(x)}
\leq \sup_{\alpha \in \Delta_r^{x^\prime}}\sum_{i \in I(x^\prime)} \alpha_i L \norm{x- x^\prime} = L \norm{x- x^\prime}.
\end{align*}
Moreover,
\begin{align*}
(2) &= \sup_{\alpha \in \Delta_r^{x^\prime}} \inf_{\beta \in \Delta_r^{x}} 
\Big\lVert \sum_{i \in I(x^\prime)} \alpha_i F^\prime_i(x) - \sum_{i \in I(x)} \beta_i F^\prime_i(x)\Big\rVert
= \sup_{\alpha \in \Delta_r^{x^\prime}} \inf_{\beta \in \Delta_r^{x}} 
\Big\lVert \sum_{i=1}^r (\alpha_i - \beta_i) F^\prime_i(x)\Big\rVert\\[1ex]
&\leq \sup_{\alpha \in \Delta_r} \inf_{\beta \in \Delta_r^{x}} 
\Big\lVert \sum_{i=1}^r (\alpha_i - \beta_i) F^\prime_i(x)\Big\rVert=:(*).
\end{align*}
Now we note that
\begin{equation*}
\varphi(\alpha, \beta) = \Big\lVert \sum_{i=1}^r (\alpha_i - \beta_i) F^\prime_i(x)\Big\rVert + \iota_{\Delta_r}(\alpha) + \iota_{\Delta_r^x}(\beta)
\end{equation*}
is jointly convex, hence $\alpha\mapsto \inf_{\beta} \varphi(\alpha, \beta)$ is convex and its maximum
is achieved at the vertices of $\Delta_r$. Thus, if we set $e_i = (\delta_j^i)_{1 \leq j \leq r}$ the canonical basis of $\R^r$,
we have
\begin{align*}
(*) &= \max_{1 \leq i \leq r} \inf_{\beta \in \Delta_r^x} \Big\lVert \sum_{j=1}^r (\delta^i_j - \beta_j) F^\prime_j(x)\Big\rVert
 = \max_{1 \leq i \leq r} \inf_{\beta \in \Delta_r^x} \Big\lVert  F^\prime_i(x)- \sum_{j=1}^r\beta_j F^\prime_j(x)\Big\rVert\\[1ex]
& \leq \max_{1 \leq i \leq r} \inf_{j \in I(x)} \Big\lVert F^\prime_i(x)- F^\prime_j(x)\Big\rVert = M_x.
\end{align*}
In the end
\begin{equation*}
\gap(\D{F}^s(x^\prime), \D{F}^s(x)) \leq M_x + L \norm{x^\prime- x}.
\end{equation*}
Now, let $R_x>0$ be as in Theorem~\ref{thm:20240130a}. Then
if $\norm{x^\prime - x}>R_x$ we have $\norm{x^\prime-x}/R_x>1$ and hence
\begin{equation*}
\gap(\D{F}^s(x^\prime), \D{F}^s(x)) \leq \frac{M_x}{R_x}\norm{x^\prime - x} + L \norm{x^\prime- x} 
= \bigg( \frac{M_x}{R_x}+ L \bigg)\norm{x^\prime - x},
\end{equation*}
otherwise, if $\norm{x - x^\prime} \leq R_x$, then by Theorem~\ref{thm:20240130a}, we have
\begin{equation*}
\gap(\D{F}^s(x^\prime), \D{F}^s(x)) \leq L \norm{x^\prime - x} \leq
\bigg( \frac{M_x}{R_x}+ L \bigg)\norm{x^\prime - x}.
\end{equation*}
The statement follows.
\end{proof}

\begin{lemma}\label{lm:Jimpfixbound}
Under \Cref{ass:lipsel}\ref{ass:lipsel_iii}, for every $(u,\lambda) \in \R^p\times \Lambda$, 
\begin{align*}
    \norm{(I-\D{\Phi,1}(u,\lambda))^{-1}}_{\sup} \leq \frac{1}{1-q}, \qquad 
    \norm{\D{w}^{\imp} (\lambda)}_{\sup} \leq \norm{\D{w}^{\fix} (
\lambda)}_{\sup} \leq \frac{\norm{\D{\Phi, 2}(w(\lambda),\lambda)}_{\sup}}{1-q}.
\end{align*}
\end{lemma}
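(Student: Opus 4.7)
The three bounds are essentially consequences of a Neumann--series argument and the fixed--point characterization \eqref{eq:Jfix} combined with the submultiplicativity properties of the excess / sup-norm recorded in \Cref{lm:gapprop}\ref{lm:affineapp}. I would treat the three inequalities in turn.

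For the first bound, fix $(u,\lambda)$ and take an arbitrary $A \in \D{\Phi,1}(u,\lambda)$. By \Cref{ass:lipsel}\ref{ass:lipsel_iii}, $\norm{A}\le q<1$, so $I-A$ is invertible and the Neumann series gives
\begin{equation*}
(I-A)^{-1} = \sum_{k\ge 0} A^k, \qquad \norm{(I-A)^{-1}} \le \sum_{k\ge 0} \norm{A}^k \le \frac{1}{1-q}.
\end{equation*}
Taking the supremum over $A \in \D{\Phi,1}(u,\lambda)$ yields the claim. (This is where one also sees that the set $\{(I-A_1)^{-1}A_2 : [A_1,A_2]\in\D{\Phi}(w(\lambda),\lambda)\}$ defining $\D{w}^{\imp}$ is nonempty.)

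The first inequality in the second chain is immediate from the set inclusion $\D{w}^{\imp}(\lambda)\subset \D{w}^{\fix}(\lambda)$, stated just after \eqref{eq:Jfix}, together with the monotonicity of $\norm{\cdot}_{\sup}$ with respect to set inclusion. For the last inequality I would exploit the fact that, by definition \eqref{eq:Jfix}, $\mathcal{W}:=\D{w}^{\fix}(\lambda)$ is the unique fixed set of $\mathcal{X}\mapsto \A(\mathcal{X})$ with $\A=\D{\Phi}(w(\lambda),\lambda)$, i.e.\ $\mathcal{W}=\A(\mathcal{W})$. Applying the first estimate of \Cref{lm:gapprop}\ref{lm:affineapp} with $\A_1=\D{\Phi,1}(w(\lambda),\lambda)$ and $\A_2=\D{\Phi,2}(w(\lambda),\lambda)$ gives
\begin{equation*}
\norm{\mathcal{W}}_{\sup} \;=\; \norm{\A(\mathcal{W})}_{\sup} \;\le\; \norm{\A_1}_{\sup}\,\norm{\mathcal{W}}_{\sup} + \norm{\A_2}_{\sup} \;\le\; q\,\norm{\mathcal{W}}_{\sup} + \norm{\D{\Phi,2}(w(\lambda),\lambda)}_{\sup},
\end{equation*}
and rearranging (using $q<1$) yields the desired bound $\norm{\mathcal{W}}_{\sup}\le \norm{\D{\Phi,2}(w(\lambda),\lambda)}_{\sup}/(1-q)$.

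I do not anticipate a real obstacle: the only mildly delicate point is justifying that $\mathcal{W}$ is bounded \emph{a priori} so that the fixed-point rearrangement is legitimate. This is guaranteed by the fact that $\D{w}^{\fix}(\lambda)$ is a compact set (the conservative derivative has nonempty compact values), so $\norm{\mathcal{W}}_{\sup}<+\infty$ and the subtraction $(1-q)\norm{\mathcal{W}}_{\sup}$ on the left is meaningful.
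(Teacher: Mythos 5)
Your argument coincides with the paper's proof in all three steps: Neumann series plus a supremum over $\D{\Phi,1}(u,\lambda)$, the inclusion $\D{w}^{\imp}(\lambda)\subset\D{w}^{\fix}(\lambda)$, and the fixed-set identity $\D{w}^{\fix}(\lambda)=\D{\Phi}(w(\lambda),\lambda)(\D{w}^{\fix}(\lambda))$ combined with \Cref{lm:gapprop}\ref{lm:affineapp} and a rearrangement. Your extra remark that $\norm{\D{w}^{\fix}(\lambda)}_{\sup}<\infty$ (from compactness of the values) is needed to justify the rearrangement is a correct and welcome point that the paper leaves implicit.
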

\begin{proof}
As for the first inequality, we recall that for any matrix $A$ such that $\norm{A} \leq q < 1$, we have 
$(I - A)^{-1} = \sum_{n=0}^{\infty} A^{n}$ and hence $\norm{I-A} \leq \sum_{n=0}^{+\infty} \norm{A}^n \leq 
\sum_{n=0}^{+\infty} q^n = 1/(1-q)$. Thus,
if we let $\A = \D{\Phi}(u,\lambda)$ we have that
\begin{align*}
 \norm{(I - \A_1)^{-1}}_{\sup} = \sup_{A_1 \in \A_1} \norm{(I- A_1)^{-1}} \leq \frac{1}{1-q}.
\end{align*}
The second inequality holds since $\D{w}^{\imp} (\lambda) \subset \D{w}^{\fix} (\lambda)$. For the last inequality we note that if we let $\B = \D{\Phi}(w(\lambda),\lambda)$, it follows from the definition of $\D{w}^{\fix}$ that
\begin{equation*}
\D{w}^{\fix}(\lambda) = \B (\D{w}^{\fix}(\lambda)).
\end{equation*}
Thus, applying \Cref{lm:gapprop}\ref{lm:affineapp}
and recalling that $\norm{\D{\Phi,1}(w(\lambda),\lambda)}_{\sup} \leq q < 1$ 
we have
\begin{align*}
    \norm{\D{w}^{\fix} (\lambda)}_{\sup} &\leq \norm{\D{\Phi, 1}(w(\lambda),\lambda)}_{\sup}\norm{\D{w}^{\fix} (\lambda)}_{\sup} + \norm{\D{\Phi, 2}(w(\lambda),\lambda)}_{\sup} \\
    &\leq q\norm{\D{w}^{\fix} (\lambda)}_{\sup} + \norm{\D{\Phi, 2}(w(\lambda),\lambda)}_{\sup}
\end{align*}
which implies the last inequality, after rearranging the terms.
\end{proof}

\section{Iterative and Approximate Implicit Differentiation}
\label{app:iterAID}

Note that if $\kappa = 1/(1-q)$, then $q^t = \exp(-\log(1/q)t) \leq \exp(-t/\kappa)$.

\begin{proof}[\proofname{} of \Cref{th:itdaidrates}] 
Let $\lambda \in \Lambda$ and $t \in \N$, $t\geq 1$. 
For the sake of brevity, we set 
\begin{align*}
b_{\lambda,t} &= \big(\norm{\D{w}^{\fix}(\lambda)}_{\sup} + 1\big)C_\lambda(w_t(\lambda)), \\
 \A_t &= \D{\Phi}(w_{t}(\lambda),\lambda), \quad \A_{t,1} = \D{\Phi, 1}(w_{t}(\lambda),\lambda), \quad \B = \D{\Phi}(w(\lambda),\lambda), \end{align*}
where $C_\lambda$ is defined in Lemma~\ref{lm:glolipphi}.
We recall that
\begin{align*}
    \D{w_{t}}(\lambda) &= \A_{t-1}( \D{w_{t-1}}(\lambda) ), \qquad
    \D{w}^{\fix}(\lambda) =\B(\D{w}^{\fix}(\lambda)).
\end{align*}
We also recall that $\delta_\lambda(t) = \indic\{\norm{w_t(\lambda) - w(\lambda)} > R_\lambda\}\in \{0,1\}$ and hence
\begin{equation*}
    C_\lambda(w_t(\lambda)) = L + \frac{M_\lambda}{R_\lambda} \delta_\lambda(t).
\end{equation*}

ITD \eqref{eq:itdbound}:
Let $\Delta^\prime_t : = \gap(\D{w_t}(\lambda), \D{w}^{\fix}(\lambda))$. 
Using the properties in \Cref{lm:gapprop}\ref{eq:gapti}\ref{lm:affineapp} we have
\begin{align*}
    \Delta'_t &= \gap(\A_{t-1}( \D{w_{t-1}}(\lambda) ) , \B( \D{w}^{\fix}(\lambda) )) \\
    &\leq \gap(\A_{t-1}( \D{w_{t-1}}(\lambda) ) , \A_{t-1}( \D{w}^{\fix}(\lambda) )) 
    + \gap(\A_{t-1}( \D{w}^{\fix}(\lambda) ) , \B( \D{w}^{\fix}(\lambda) ))  \\
    &\leq \norm{\A_{t-1,1}}_{\sup} \Delta^\prime_{t-1} +(1+\norm{\D{w}^{\fix} (\lambda)}_{\sup}) \gap(\A_{t-1},\B)  \\
    &\leq q \Delta'_{t-1} + b_{\lambda,t-1} \Delta_{t-1},
\end{align*}
where for the last inequality we used that for any $u \in \R^d$, $\norm{\D{\Phi,1}(u,\lambda)} < q$ and \Cref{lm:glolipphi}. By unrolling the recursive inequality and using the inequality $\Delta_i \leq q^i\Delta_0$ we obtain
\begin{align*}
     \Delta'_t &\leq q^t \Delta'_0 +  \sum_{i=0}^{t-1} q^{t-1-i} b_{\lambda,i}\Delta_i 
     \leq q^t \Delta'_{0} +  q^{t-1} \Delta_0 \sum_{i=0}^{t-1} b_{\lambda,i}   \\
     & \leq q^t \norm{\D{w}^{\fix}(\lambda)}_{\sup} + \Delta_0  q^{t-1} t (\norm{\D{w}^{\fix}(\lambda)}_{\sup} + 1)(L+M_\lambda R^{-1}_\lambda  \bar{\delta}_\lambda(t)),
\end{align*}
where, in the last inequality, we used $\Delta'_0 \leq  \norm{\D{w}^{\fix}(\lambda)}_{\sup}$ and the definitions of $\bar{\delta}_\lambda(t)$ and $C_\lambda(w_t(\lambda))$. Applying \Cref{lm:Jimpfixbound}, factoring out $t$ and using the definition of $B_\lambda$ gives the final result.

AID-FP \eqref{eq:aidfpbound}: In this case we have
\begin{equation*}
    \D{w_{t}}^{k}(\lambda) = \A_t( \D{w_t}^{k-1}(\lambda)).
\end{equation*}
Set $\Delta'_k : = \gap(\D{w_{t}}^{k}(\lambda), \D{w}^{\fix}(\lambda))$. Then
using again \Cref{lm:gapprop}\ref{eq:gapti}\ref{lm:affineapp} we have
\begin{align*}
    \Delta'_k &= \gap(\A_t (\D{w_t}^{k-1}(\lambda)), \B (\D{w}^{\fix}(\lambda))) \\
    &\leq \gap(\A_t (\D{w_t}^{k-1}(\lambda)), \A_t (\D{w}^{\fix}(\lambda))) + \gap(\A_t (\D{w}^{\fix}(\lambda)), \B (\D{w}^{\fix}(\lambda)))  \\
    &\leq \norm{\A_{t,1}}_{\sup} \Delta'_{k-1} +(1+\norm{\D{w}^{\fix}(\lambda)}_{\sup}) \gap(\A_t,\B) \\
    &\leq q \Delta'_{k-1} + b_{\lambda,t} \Delta_t,
\end{align*}
where for the last inequality we used Assumption~\ref{ass:lipsel}\ref{ass:lipsel_iii} and \Cref{lm:glolipphi}. By unrolling the inequality recursion we obtain
\begin{align*}
     \Delta'_k \leq q^k \Delta'_0 + b_{\lambda,t} \Delta_t \sum_{i=0}^{k-1} q^i = q^k \norm{\D{w}^{\fix}(\lambda)}_{\sup} + b_{\lambda,t} \frac{1-q^k}{1-q}\Delta_t .
\end{align*}
Applying \Cref{lm:Jimpfixbound} and using the definition of $b_{\lambda,t}$, $\Ll$ and $\delta_\lambda$ gives the final result. 

For the final comment, if $w_t(\lambda) = \Phi(w_{t-1}(\lambda), \lambda)$, due to the contraction property of $\Phi$, $\Delta_t = q \Delta_{t-1} < \Delta_{t-1}$ and there exist $\tau_\lambda \in \{0,\dots,t'_\lambda\}$ with $t'_\lambda := \ceil{\log (\Delta_0/R_\lambda)/\log(1/q)}$, such that $\norm{w_{\tau_\lambda}(\lambda)- w(\lambda)} \leq R_\lambda$, and if $\tau_\lambda \neq 0$, $\norm{w_{\tau_\lambda-1}(\lambda)- w(\lambda)} > R_\lambda$. Thus, for every $i \in \N$ $\delta_\lambda(w_i) = \indic\{ i < \tau_\lambda \}$ and therefore for every $t$,  $\delta_\lambda(w_t) \leq \bar{\delta}_\lambda(t) \leq 1$.
\end{proof}

\section{Stochastic Implicit Differentiation}
\label{app:SID}

For simplicity let for every $u \in \R^d,\lambda \in O_\Lambda$
\begin{equation*}
    \Psi(u, \lambda) = (T(u,\lambda), \lambda), \qquad \D{\Psi}(u,\lambda) = \left\{\begin{bmatrix}
        C_1 & C_2 \\
         0 & I_m
    \end{bmatrix} \,\Big\vert\, [C_1, C_2] \in \D{T}(u,\lambda)  \right\} 
\end{equation*}
From Lemma 3 in \citep{bolte2021conservative} and \Cref{ass:contlip}\ref{ass:lipgt} it follows that $D_{\Psi}$ is a conservative derivative of $\Psi$. 
Moreover, we can write $\Phi(u,\lambda) = G(\Psi(u,\lambda))$ and thanks to the chain rule of conservative derivatives we have that
\begin{equation}\label{eq:dphi}
    \D{\Phi}(u,\lambda) := \D{G}(\Psi(u,\lambda))\D{\Psi}(u,\lambda) = \D{G}(T(u,\lambda), \lambda) \begin{bmatrix}
        \D{T}(u,\lambda) \\
        0 \ \  I_m
    \end{bmatrix}
\end{equation}
is a conservative derivative for $\Phi$. Furthermore, if \Cref{ass:contlip}\ref{ass:contgt} is satisfied, then $\norm{\D{\Phi,1}(u,\lambda)}_{\sup} \leq q < 1$ and $\D{w}^{\fix}$ and $\D{w}^{\imp}$ in \eqref{eq:Jfix} and \eqref{eq:Jimp} are well defined and conservative derivatives of $w$.
Similarly, a conservative derivative of $\bar{\Phi}$ can be obtained as
\begin{equation}\label{eq:derphibar}
    \D{\bar{\Phi}}(u,\lambda) : = \D{G}(\bar{T}(u,\lambda), \lambda) \begin{bmatrix}
        \D{\bar{T}}(u,\lambda) \\
        0 \ \  I_m
    \end{bmatrix}, \quad with \quad \D{\bar{T}}(u,\lambda) = \frac{1}{J}\sum_{j=1}^{J} \D{\hat{T}_{\xi^{(1)}_j}}(u,\lambda).
\end{equation}
Note that $\partial_2 \bar{\Phi}(u,\lambda)$ as defined in \eqref{eq:partialphibar} is an element of $\D{\bar{\Phi},2}$.

The following result is similar to \Cref{lm:glolipphi} and follows directly from \Cref{lm:glolip}.
The only difference is that the constants are majorized so to be independent on $u$. This is done only to simplify the analysis.

\begin{lemma}\label{lm:lipgt}
Under \Cref{ass:contlip}, for every $\lambda \in \Lambda$, there exist $R_{G,\lambda}, R_{T,\lambda} > 0$ such that for every $u \in \R^d$ and
\begin{align*}
\gap(\D{G}(u,\lambda),\D{G}(T(w(\lambda),\lambda),\lambda) &\leq \LG\norm{u- T(w(\lambda),\lambda)} \\    \gap(\D{T}(u,\lambda),\D{T}(w(\lambda),\lambda)) &\leq \LT\norm{u- w(\lambda)},
\end{align*}
where $\LG:= L_G +M_{G,\lambda}/R_{G,\lambda}$, $\LT := L_T +M_{T,\lambda}/R_{T,\lambda}$, with
\begin{align*}
 M_{T,\lambda} &:= \max_{i\in \{1,\dots,r\}}\min_{j \in I_T(w(\lambda),\lambda)} \norm{T'_i(w(\lambda),\lambda)- T'_j(w(\lambda),\lambda)}   \\
 M_{G,\lambda} &:= \max_{i\in \{1,\dots,r\}}\min_{j \in I_G(T(w(\lambda),\lambda),\lambda)} \norm{G'_i(T(w(\lambda),\lambda),\lambda)- G'_j(T(w(\lambda),\lambda),\lambda)}
\end{align*}
and
$L_T, L_G$ satisfying \Cref{ass:lipsel}\ref{ass:lipsel_i}.
\end{lemma}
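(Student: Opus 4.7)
The plan is to recognize that \Cref{lm:lipgt} is a direct specialization of \Cref{lm:glolip} applied to $T$ and $G$ separately, at two specific base points, with the resulting two-case Lipschitz-type constant replaced by its (uniform) worst-case value.

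First I would verify the hypotheses. By \Cref{ass:contlip}\ref{ass:lipgt}, both $T$ and $G$ satisfy \Cref{ass:lipsel}\ref{ass:lipsel_i}: they are continuous definable selections of definable Lipschitz-smooth mappings with constants $L_T$ and $L_G$ respectively. Hence \Cref{lm:glolip} applies to each of them. Apply it to $F = T$ at the base point $x = (w(\lambda),\lambda)$. For any $u \in \R^d$, taking $x' = (u,\lambda)$ we have $\norm{x' - x} = \norm{u - w(\lambda)}$, and the active-set quantity $M_x$ from \Cref{lm:glolip} coincides with $M_{T,\lambda}$ as defined in the statement. This produces a radius $R_{T,\lambda} > 0$ together with the inequality
\begin{equation*}
\gap(\D{T}(u,\lambda), \D{T}(w(\lambda),\lambda)) \leq L_{T,x}(x') \norm{u - w(\lambda)},
\end{equation*}
where $L_{T,x}(x') = L_T$ if $\norm{u - w(\lambda)} \leq R_{T,\lambda}$ and $L_{T,x}(x') = L_T + M_{T,\lambda}/R_{T,\lambda}$ otherwise.

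Next I would repeat the argument with $F = G$ at the base point $x = (T(w(\lambda),\lambda),\lambda)$, taking $x' = (u,\lambda)$, which yields $\norm{x' - x} = \norm{u - T(w(\lambda),\lambda)}$ and makes the active-set quantity coincide with $M_{G,\lambda}$. This gives a radius $R_{G,\lambda} > 0$ and the analogous two-case bound for $G$.

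Finally, to produce a single uniform Lipschitz-type constant on each of the two inequalities (as the lemma requires), I would simply upper bound $L_{T,x}(x')$ by its larger value $\LT = L_T + M_{T,\lambda}/R_{T,\lambda}$, which is valid in both cases because $L_T \leq \LT$; the same observation applies to $G$ with constant $\LG$. No real obstacle arises: the result is essentially a bookkeeping specialization of \Cref{lm:glolip}, and the only minor point is the trade-off of taking the larger constant uniformly (in exchange for dropping the dependence of the constant on $u$), which simplifies the downstream analysis in \Cref{app:SID}.
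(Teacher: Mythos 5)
Your proposal is correct and matches the paper's approach exactly: the paper itself notes that the lemma "follows directly from \Cref{lm:glolip}" with the only change being that the two-case constant is majorized by its larger value so that it is uniform in $u$. Your specialization of $F$, the base points $x$, and the increments $x'$, together with the observation that $L_T \leq \LT$ and $L_G \leq \LG$, is precisely the intended argument.
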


We now present the proof of \Cref{th:nsidrate}.

\begin{proof}[\proofname{} of \Cref{th:nsidrate}]
Set, for the sake of brevity, $\hz=(w_t(\lambda),\lambda)$
and $\sz = (w(\lambda),\lambda)$. We also set $v_k = v_k(w_t(\lambda), \bar{T}_t(\lambda))$, $\bar{v} =\bar{v}(w_t(\lambda), \bar{T}_t(\lambda))$, and
$a_\lambda = \norm{w(\lambda) -T(\sz)}$.
From Assumption~\ref{ass:variance} on the variance of $\hat T$ and since $T(\cdot,\lambda)$ is $1$-Lipschitz we have
\begin{align}\label{eq:varhatt}
\nonumber    \Var[\hat T_\xi(\hz)\,\vert\, w_t(\lambda),y] &\leq \sigma_1 +\sigma_2 \norm{w_t(\lambda) \mp w(\lambda) - T(\hz) \pm T(\sz)}^2\\
    & \leq   \sigma_1 +3\sigma_2(2\Delta_t^2 + a_\lambda^2) =: b_{\lambda}(\Delta_t^2).
\end{align}
Now, recall that $G'(\bar{T}(z_t),\lambda) = [\partial_1 G(\bar{T}(z_t),\lambda), \partial_2 G(\bar{T}(z_t),\lambda)]$, $T'(z_t) = [\partial_1 T(z_t), \partial_2 T(z_t)]$ and set
\begin{align*}
    B &:= [B_1, B_2] = \argmin_{B' \in \D{G}(T(z),\lambda)} \norm{G'(\bar{T}(z_t),\lambda)- B'} \\
    C &:= [C_1, C_2] =  \argmin_{C' \in \D{T}(z)} \norm{T'(z_t)- C'} 
\end{align*} 
with $B_1, C_1 \in \R^{d\times d}$, $B_2, C_2 \in \R^{d\times m}$, which is valid since the $\argmin$ is over compact convex sets. 
Then, recalling the definition of excess and applying \Cref{lm:lipgt} we have that for $j=1,2$
\begin{equation}\label{eq:boundab}
\begin{aligned}
    \norm{\partial_j G(\bar{T}(z_t),\lambda)- B_j} &\leq \norm{G'(\bar{T}(z_t),\lambda)- B} = \gap(G'(\bar{T}(z_t),\lambda), \D{G}(T(z),\lambda)) \leq \LG \norm{\bar{T}(z_t) - T(z)}, \\
    \norm{\partial_j T(z_t)- C_j} &\leq \norm{T'(z_t)- C} = \gap(T'(z_t), \D{G}(z)) \leq \LG \norm{z_t - z}.
\end{aligned}
\end{equation}
Let also $A := [A_1, A_2]$ with $A_1 := B_1C_1$ and $A_2:= B_1C_2 + B_2$.
Since $B \in \D{G}(T(\sz),\lambda)$, $C \in \D{T}(\sz)$ we have that $A \in D_{\Phi}(z)$ (from the definition of $D_{\Phi}$ in \eqref{eq:dphi}) and consequently that $(I-A_1)^{-1}A_2 \in \D{w}^{\imp}(\lambda)$. Hence, recalling the definition of excess we can write 
\begin{equation*}
    \gap(\partial_2 \bar{\Phi}(\hz)^\top v_k, \D{w}^{\imp}(\lambda)^\top y) \leq \norm{\partial_2 \bar{\Phi}(\hz)^\top v_k - A_2^\top(I-A_1)^{-\top}y}.
\end{equation*}
To prove the result, it is therefore sufficient to appropriately control the distance to a particular element of $\D{w}^{\imp}(\lambda)^\top y$, namely $A_2^\top(I-A_1)^{-\top}y$, which is a random variable depending on $y, w_t(\lambda), \bm{\xi}^{(1)}$ (from the definition of $B$ and $C$).
We have the following error decomposition
\begin{equation*}
 \begin{aligned}
    \gap(\partial_2 \bar{\Phi}(\hz)^\top v_k, \D{w}^{\imp}(\lambda)^\top y)
    &\leq \norm{\partial_2 \bar{\Phi}(\hz)^\top v_k - A_2^\top  v_k} + \norm{A_2^\top  v_k - A_2^\top (I- A_1^\top)^{-1} y})\\
    &\leq \norm{\partial_2 \bar{\Phi}(\hz) - A_2}\norm{v_k}
    + \norm{\D{\Phi,2}(z)}_{\sup}
    \norm{v_k- (I- A_1^\top)^{-1} y},
\end{aligned}   
\end{equation*}
where we used that $A_2 \in \D{\Phi,2}(z)$.
Hence, squaring and taking the conditional expectation of both sides yields
\begin{equation}\label{eq:errordecom_i}
\begin{aligned}
    \EE [\gap(\partial_2 \bar{\Phi}(\hz)^\top v_k, \D{w}^{\imp}(\lambda)^\top &y)^2]\,\vert\, w_t(\lambda),y, \bm{\xi}^{(1)}] \leq  \underbrace{2\EE[\norm{v_k}^2\,\vert\, w_t(\lambda),y, \bm{\xi}^{(1)}]\cdot \norm{\partial_2 \bar{\Phi}(\hz) - A_2}^2}_{(1)} \\[1ex]
    &\quad+\underbrace{2\norm{\D{\Phi,2}(z)}_{\sup}^2\EE [\norm{v_k- (I- A_1^\top)^{-1} y}^2 \,\vert\, w_t(\lambda),y,\bm{\xi}^{(1)}]}_{(2)}.
\end{aligned}    
\end{equation}

\paragraph{Bound for term (1) of \eqref{eq:errordecom_i}} We have that
\begin{align*}
    \EE[\norm{v_k}^2 \,\vert\, w_t(\lambda),y, \bm{\xi}^{(1)}] &\leq 2\EE[\norm{v_k - \bar v}^2 + \norm{\bar v}^2\,\vert\, w_t(\lambda),y, \bm{\xi}^{(1)}] \\[0.8ex]
    &\leq 2\big(\EE[\norm{v_k - \bar v}^2\,\vert\, w_t(\lambda),y, \bm{\xi}^{(1)}] + \norm{y}^2/(1-q)^2 \big) \\[0.8ex]
    &\leq 2 \norm{y}^2 \big(\sigma_\lambda(k) + 1/(1-q)^2 \big).
\end{align*}
where in the second last inequality we used \Cref{ass:new}\ref{ass:new_ii}. Hence 
\begin{equation*}
(1) \leq 4 \norm{y}^2 \big(\sigma_\lambda(k) + 1/(1-q)^2 \big) 
\norm{\partial_2 \bar{\Phi}(\hz) - A_2}^2.
\end{equation*}
Now recall that
\begin{align*}
    \partial_2 \bar{\Phi}(\hz) &= \partial_1 G(\bar{T}(\hz),\lambda) \partial_2 \bar{T}(\hz) +  \partial_2 G(\bar{T}(\hz),\lambda), \quad
    A_2 = B_1C_2 + B_2,
\end{align*}
therefore we have
\begin{align*}
\norm{\partial_2 \bar{\Phi}(\hz)- A_2}
&\leq \norm{\partial_1 G(\bar{T}(\hz),\lambda) \partial_2 \bar{T}(\hz) - \partial_1 G(\bar{T}(\hz),\lambda)C_2 }\\[0.8ex]
&\qquad + \norm{\partial_1 G(\bar{T}(\hz),\lambda)C_2 - B_1C_2}
+\norm{\partial_2 G(\bar{T}(\hz),\lambda)- B_2}\\[0.8ex]
&\leq \norm{\partial_1 G(\bar{T}(\hz),\lambda)}\norm{\partial_2 \bar{T}(\hz)- C_2}\\[0.8ex]
&\qquad+ \norm{C_2}\norm{\partial_1 G(\bar{T}(\hz),\lambda) - B_1}
+\norm{\partial_2 G(\bar{T}(\hz),\lambda)- B_2}\\[0.8ex]
&\leq \norm{\partial_2 \bar{T}(\hz) - \partial_2 T(\hz)}+ \norm{ \partial_2 T(\hz) - C_2}\\[0.8ex]
&\qquad+ \norm{B_2}\norm{\partial_1 G(\bar{T}(\hz),\lambda)- B_1}
+\norm{\partial_2 G(\bar{T}(\hz),\lambda)- B_2}\\[-0.5ex]
&\overset{(*)}{\leq} \norm{\partial_2 \bar{T}(\hz) - \partial_2 T(\hz)}+ C_{T,\lambda}\norm{w_t(\lambda)-w(\lambda)}\\[0.8ex]
&\qquad+ C_{G,\lambda}(1+\norm{C_2}) (\norm{\bar{T}(\hz) - T(\hz)}+\norm{T(\hz) - T(\sz)})\\[0.8ex]
&\leq \norm{\partial_2 \bar{T}(\hz) - \partial_2 T(\hz)}+ [C_{T,\lambda}+C_{G,\lambda}(1+\norm{\D{T, 2}(\sz)}_{\sup})]\Delta_t\\[0.8ex]
&\qquad+ C_{G,\lambda}(1+\norm{\D{T, 2}(\sz)}_{\sup}) \norm{\bar{T}(\hz) - T(\hz)},
\end{align*}
where in ${\mathrm{(*)}}$ we used \eqref{eq:boundab} and in the last inequality the the fact that  $C_2 \in \D{T, 2}(\sz)$.
Hence, from Assumption~\ref{ass:variance} and \eqref{eq:varhatt}, we have
\begin{align*}
\EE\big[\norm{\partial_2 \bar{\Phi}(\hz)- C_2\big}^2\,\big\vert\, w_t(\lambda),y\big]
&\leq 3 \Var[\partial_2 \bar{T}(\hz)\,\vert\, w_t(\lambda),y]+ 3 [C_{T,\lambda}+C_{G,\lambda}(1+\norm{\D{T, 2}(\sz)}_{\sup})]^2\Delta_t^2\\[0.8ex]
&\qquad + 3C^2_{G,\lambda}(1+\norm{\D{T, 2}(\sz)}_{\sup})^2 \Var[\bar{T}(\hz)\,\vert\, w_t(\lambda),y]\\[0.8ex]
&\leq \frac{3\sigma_2^\prime}{J} +3 [C_{T,\lambda}+C_{G,\lambda}(1+\norm{\D{T, 2}(\sz)}_{\sup})]^2\Delta_t^2\\[0.8ex]
&\qquad +  3C^2_{G,\lambda}(1+\norm{\D{T, 2}(\sz)}_{\sup})^2 \frac{b_{\lambda}(\Delta_t^2)}{J}.
\end{align*}
In the end we have
\begin{equation*}
\EE[(1) \,\vert\, w_t(\lambda), y] \leq 12 \norm{y}^2\bigg(\sigma_\lambda(k) + \frac{1}{(1-q)^2} \bigg) 
\bigg(\frac{\sigma_2^\prime}{J} + [C_{T,\lambda}+C_{G,\lambda}M_{T,\lambda}]^2\Delta_t^2
+ C^2_{G,\lambda}M_{T,\lambda}^2 \frac{b_{\lambda}(\Delta_t^2)}{J}
 \bigg),
\end{equation*}
where we set $M_{T,\lambda} = 1+ \norm{\D{T,2}(w(\lambda),\lambda)}_{\sup}$.

\paragraph{Bound for term (2) of \eqref{eq:errordecom_i}} We have
\begin{equation*}
\norm{v_k- (I- A_1^\top)^{-1} y} \leq \norm{v_k- \bar{v}} + \norm{\bar v- (I- A_1)^{-\top} y}.
\end{equation*}
Let $\hat{B}_1 = \partial_1 G(\bar{T}(\hz),\lambda)$ and $\hat{C}_1 = \partial_1 T(\hz)$ and $\hat{A}_1 = \hat{B}_1 \hat{C}_1$ and recall that $\bar v = (I- \hat{A}_1^\top)^{-1}y$ and $A_1 \in \D{\Phi,1}(\sz)$.
Noting that $\max\{\norm{B_1}, \norm{C_1}, \norm{\hat{B}_1}, \norm{\hat{C}_1})\} \leq 1$, $\max\{\norm{\hat{A}_1}, \norm{A_1}\} \leq q$ and hence $\max\{\norm{(I- \hat{A}_1^\top)^{-1}}, \norm{(I-A_1^\top)^{-1}}\} \leq 1/(1-q)$, we obtain
\begin{align*}
\norm{v_k&- (I - A_1^\top)^{-1} y}\\[0.8ex] 
&\leq \norm{v_k- \bar{v}} + \norm{y} \norm{(I - \hat{A}_1^\top)^{-1}}\norm{(I - A_1^\top)^{-1}} \norm{\hat{A}_1 - A_1}\\
 & \leq \norm{v_k- \bar v}+ \frac{\norm{y}}{(1-q)^2} \norm{\partial_1 G(\bar{T}(\hz),\lambda) \partial_1 T(\hz) -
 B_1 C_1}\\
 & \leq \norm{v_k- \bar v}+ \frac{\norm{y}}{(1-q)^2}\Big[ \norm{\partial_1 G(\bar{T}(\hz),\lambda) \partial_1 T(\hz) - 
 B_1 \partial_1 T(\hz)} 
+ \norm{B_1 \partial_1 T(\hz)- 
 B_1 C_1\big)} \Big] \\
  & \leq \norm{v_k- \bar v}+ \frac{\norm{y}}{(1-q)^2}\Big[ \norm{\partial_1 T(\hz)}\norm{\partial_1 G(\bar{T}(\hz),\lambda) - 
 B_1} + \norm{B_1} \norm{  \partial_1 T(\hz)- 
C_1}\Big]\\
  & \leq \norm{v_k- \bar v}+ \frac{\norm{y}}{(1-q)^2}\Big[ \norm{\partial_1 G(\bar{T}(\hz),\lambda) - 
 B_1}+ \norm{\partial_1 T(\hz) - 
C_1}\Big]\\
  & \overset{(*)}{\leq} \norm{v_k- \bar v}+ \frac{\norm{y}}{(1-q)^2}\big[ C_{G,\lambda} (\norm{\bar{T}(\hz)-T(\hz)} + \norm{T(\hz)-T(\sz)})
+ C_{T,\lambda} \norm{w_t(\lambda)- w(\lambda)} \big]\\
  & \leq \norm{v_k- \bar v}+ \frac{\norm{y}}{(1-q)^2}\big[ C_{G,\lambda} \norm{\bar{T}(\hz)-T(\hz)} 
+ (C_{G,\lambda}+C_{T,\lambda}) \Delta_t\big],
\end{align*}
where in ${\mathrm{(*)}}$ we used  \eqref{eq:boundab}. Therefore,
\begin{align*}
\EE\big[\!\gap(v_k, &(I- A_1^\top)^{-1} y)^2 \,\big\vert\, w_t(\lambda),y, \bm{\xi}^{(1)}\big] \\
&\leq 3 \bigg( \norm{y}^2\sigma_\lambda(k)+ \frac{\norm{y}^2}{(1-q)^4}\big[ C^2_{G,\lambda} \norm{\bar{T}(\hz)-T(\hz)}^2 
+ (C_{G,\lambda}+C_{T,\lambda})^2 \Delta_t^2\big] \bigg)
\end{align*}
and hence, taking the expectation over $\bm{\xi}^{(1)}$ we obtain
\begin{align*}
\EE\big[\!\gap(v_k, &(I- A_1^\top)^{-1} y)^2 \,\big\vert\, w_t(\lambda),y\big]\\[0.8ex]
&\leq 3 \norm{y}^2 \bigg( \sigma_\lambda(k)+ \frac{1}{(1-q)^4}\big[ C^2_{G,\lambda} \Var[\bar{T}(\hz)\,\vert\, w_t(\lambda),y]
+ (C_{G,\lambda}+C_{T,\lambda})^2 \Delta_t^2\big] \bigg)\\
&  = 3 \norm{y}^2 \bigg( \sigma_\lambda(k)+ \frac{1}{(1-q)^4}\Big( C^2_{G,\lambda} \frac{\Var[\hat T_\xi(\hz)\,\vert\, w_t(\lambda),y]}{J}
+ (C_{G,\lambda}+C_{T,\lambda})^2 \Delta_t^2\Big) \bigg)\\
&  \leq 3 \norm{y}^2\bigg( \sigma_\lambda(k)+ \frac{1}{(1-q)^4}\Big( C^2_{G,\lambda} \frac{b_{\lambda}(\Delta_t^2)}{J}
+ (C_{G,\lambda}+C_{T,\lambda})^2 \Delta_t^2\Big) \bigg).
\end{align*}
In the end we have
\begin{equation*}
\EE[(2)\,\vert\, w_t(\lambda),y] \leq 6\norm{ \D{\Phi,2}(w(\lambda), \lambda)}^2_{\sup}\norm{y}^2\bigg( \sigma_\lambda(k)+ \frac{1}{(1-q)^4}\Big( C^2_{G,\lambda} \frac{b_{\lambda}(\Delta_t^2)}{J}
+ (C_{G,\lambda}+C_{T,\lambda})^2 \Delta_t^2\Big) \bigg).
\end{equation*}

\paragraph{Combined bound} By combining the above results we finally obtain 
\begin{align*}
&\EE [\gap(\partial_2 \bar{\Phi}(\hz)^\top  v_k, \D{w}^{\imp}(\lambda)^\top y)^2]\,\vert\, w_t(\lambda),y]\\
& \leq 12 \norm{y}^2\big(\sigma_\lambda(k) + \kappa^2  \big) 
\bigg(\frac{\sigma_2^\prime}{J} + [C_{T,\lambda}+C_{G,\lambda}M_{T,\lambda}]^2\Delta_t^2
+ C^2_{G,\lambda}M_{T,\lambda}^2 \frac{\sigma_1 +3\sigma_2(2\Delta_t^2 + a_\lambda^2)}{J}
 \bigg)\\
 &\qquad + 6\norm{y}^2\norm{\D{\Phi,2}(w(\lambda),\lambda)}^2_{\sup}\bigg( \sigma_\lambda(k)+ \kappa^{4}\Big( C^2_{G,\lambda} \frac{\sigma_1 +3\sigma_2(2\Delta_t^2 + a_\lambda^2)}{J}
+ (C_{G,\lambda}+C_{T,\lambda})^2 \Delta_t^2\Big) \bigg),
\end{align*}
where we used the expression for $b_{\lambda}(\Delta_t^2)$ and $\kappa = (1-q)^{-1}$. 
Taking the expectation $\EE[\cdot\,\vert\, w_t(\lambda)]$ and recalling the hypothesis on $\norm{y}^2$ and $\Delta^2_t$ in \Cref{ass:new}\ref{ass:new_i}\ref{ass:new_iii}, the statement follows.
\end{proof}

Before reporting the proof for the linear system rate, we rewrite for reader's convenience the following result from \citep{grazzi2021convergence}, which establishes a convergence rate for stochastic fixed-point iterations with a decreasing step size.

\begin{lemma}{\citep[Theorem 4.2]{grazzi2021convergence}}
\label{lm:20240310}
Let $\Psi\colon \R^d \to \R^d$ be a $q$-contraction ($0\leq q<1$), $\xi$ a random variable with values in $\Xi$ and $\hat{\Psi}\colon \R^d\times \Xi \to \R^d$
be such that for every $v \in \R^d$
\begin{equation*}
    \EE[\hat{\Psi}(v,\xi)]= \Psi(v)\quad\text{and}\quad
    \Var[\hat{\Psi}(v,\xi)] \leq \hat{\sigma}_1+ \hat{\sigma}_2\norm{\Psi(v)-v}^2,
\end{equation*}
for some $\hat{\sigma}_1,\hat{\sigma}_2>0$.
Let $\eta_i = \beta/(\gamma+i)$, with $\beta>1/(1-q^2)$ and $\gamma\geq \beta(1+\hat \sigma_2)$. Let $(\xi_i)_{i \in \N}$ be a sequence of i.i.d copies of $\xi$ and let $(v_i)_{i \in \N}$ be such that $v_0=0$ and for $i=0,1,\dots$
\begin{equation*}
    v_{i+1} = v_i + \eta_i ( \hat{\Psi}(v_i, \xi_i) - v_i).
\end{equation*}
Then for every $i \in \N$
\begin{equation*}
    \EE[\norm{v_i - \bar{v}}^2] \leq \frac{1}{\gamma+i} \max\Big\{ \gamma \norm{\bar{v}}^2, \frac{\beta^2 \hat\sigma_1}{\beta(1-q^2)-1} \Big\},
\end{equation*}
where $\bar{v}$ is the (unique) fixed point of $\Psi$.
\end{lemma}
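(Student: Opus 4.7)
The plan is to establish a one-step recursion of the form $E_{i+1} \leq (1-\eta_i(1-q^2)) E_i + \eta_i^2 \hat\sigma_1$ for $E_i := \EE[\norm{v_i - \bar v}^2]$, and then close an induction showing $E_i \leq C/(\gamma+i)$ for the constant $C$ appearing in the statement. Let $a_i = v_i - \bar v$, $b_i = \Psi(v_i) - \bar v$, and $\Delta_i = \hat\Psi(v_i, \xi_i) - \Psi(v_i)$, so that the iteration reads $a_{i+1} = (1-\eta_i) a_i + \eta_i b_i + \eta_i \Delta_i$. With $\mathcal{F}_i := \sigma(\xi_0, \ldots, \xi_{i-1})$, the unbiasedness hypothesis gives $\EE[\Delta_i \mid \mathcal{F}_i] = 0$, and the cross terms vanish upon taking conditional expectations:
\begin{equation*}
\EE[\norm{a_{i+1}}^2 \mid \mathcal{F}_i] = \norm{a_i + \eta_i(b_i - a_i)}^2 + \eta_i^2 \EE[\norm{\Delta_i}^2 \mid \mathcal{F}_i].
\end{equation*}
The contraction of $\Psi$ yields $\norm{b_i} \leq q \norm{a_i}$, and using $\Psi(v_i) - v_i = b_i - a_i$ the variance bound reads $\EE[\norm{\Delta_i}^2 \mid \mathcal{F}_i] \leq \hat\sigma_1 + \hat\sigma_2 \norm{b_i - a_i}^2$.

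The key technical step is to expand the deterministic contribution via the polarization identity $2\langle a_i, b_i - a_i\rangle = \norm{b_i}^2 - \norm{a_i}^2 - \norm{b_i - a_i}^2$, which gives $\norm{a_i + \eta_i(b_i - a_i)}^2 = (1-\eta_i)\norm{a_i}^2 + \eta_i \norm{b_i}^2 - \eta_i(1-\eta_i)\norm{b_i - a_i}^2$. Combining with the variance bound reassembles the right-hand side as
\begin{equation*}
(1-\eta_i)\norm{a_i}^2 + \eta_i\norm{b_i}^2 + \eta_i\bigl[\eta_i(1+\hat\sigma_2) - 1\bigr]\norm{b_i - a_i}^2 + \eta_i^2 \hat\sigma_1.
\end{equation*}
Here the hypothesis $\gamma \geq \beta(1+\hat\sigma_2)$ does exactly the right thing: it forces $\eta_i \leq \beta/\gamma \leq 1/(1+\hat\sigma_2)$, making the bracketed factor nonpositive so that the $\norm{b_i - a_i}^2$ term can be discarded; then $\norm{b_i}^2 \leq q^2\norm{a_i}^2$ produces the clean recursion $E_{i+1} \leq (1-\eta_i(1-q^2))E_i + \eta_i^2 \hat\sigma_1$ after taking total expectations. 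I expect this step to be the principal obstacle: a crude triangle bound $\norm{b_i - a_i}^2 \leq (1+q)^2\norm{a_i}^2$ would damage the contraction factor from $(1-q^2)$ to $(1-q)$ and wouldn't explain the precise form of either hypothesis on $\beta$ and $\gamma$; the polarization identity, together with $\gamma \geq \beta(1+\hat\sigma_2)$, is exactly what makes the rate sharp.

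It remains to run the induction with $C := \max\{\gamma\norm{\bar v}^2,\,\beta^2 \hat\sigma_1/(\beta(1-q^2)-1)\}$, which is well defined thanks to the assumption $\beta(1-q^2) > 1$. The base case $E_0 = \norm{\bar v}^2 \leq C/\gamma$ is immediate from $v_0 = 0$. For the inductive step, plugging $E_i \leq C/(\gamma+i)$ and $\eta_i = \beta/(\gamma+i)$ into the recursion and clearing denominators, the target $E_{i+1} \leq C/(\gamma+i+1)$ reduces to the algebraic inequality
\begin{equation*}
\bigl[C(\beta(1-q^2)-1) - \beta^2\hat\sigma_1\bigr](\gamma+i) \geq \beta^2\hat\sigma_1 - C\beta(1-q^2),
\end{equation*}
whose left-hand side is nonnegative by the second branch of the max defining $C$, while the right-hand side is nonpositive for the same reason (since $\beta^2/(\beta(1-q^2)-1) \geq \beta/(1-q^2)$). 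Hence the induction closes and yields the stated bound.
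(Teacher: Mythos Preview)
The paper does not supply its own proof of this lemma: it is quoted verbatim as \cite[Theorem 4.2]{grazzi2021convergence} and used as a black box. So there is nothing in the paper to compare your argument against.

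That said, your proof is correct and is essentially the standard one. The polarization-identity expansion is the right move and explains both hypotheses cleanly: $\gamma \geq \beta(1+\hat\sigma_2)$ kills the $\norm{b_i-a_i}^2$ term, and $\beta(1-q^2)>1$ makes the inductive constant $C$ well defined and the final algebraic inequality hold. One small point you could make explicit is that the coefficient $1-\eta_i(1-q^2)$ is nonnegative (since $\eta_i \leq \beta/\gamma \leq 1/(1+\hat\sigma_2)\leq 1$), which is needed when substituting the inductive hypothesis $E_i \leq C/(\gamma+i)$ into the recursion; but this follows immediately from the same step-size bound you already invoked.
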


We now present the rate for the algorithm used to solve the linear system in \Cref{alg:nsid}. Consider the procedure in \Cref{alg:lssolver}
\begin{algorithm}[ht]
   \caption{Stochastic fixed point iterations}
   \label{alg:lssolver}
\begin{algorithmic}[1]
   \STATE {\bfseries Input:} $k\in \N$,  $u_1, u_2, y \in \R^d$, $\bm{\xi}^{(2)} = (\xi^{(2)}_i)_{1 \leq i \leq k}$.
   \STATE $\hat \Psi\colon (v,x) \mapsto  \partial_1 \hat T(u_1,\lambda, x)^\top \partial_1 G(u_2, \lambda)^\top v + y $
    \STATE $v_0 = 0$
    \FOR{$i=1$ {\bfseries to} $k$}
    \STATE $v_i \gets (1- \eta_i)v_{i-1} + \eta_i \hat \Psi(v_{i-1}, \xi^{(2)}_{i})$
    \ENDFOR
    \STATE {\bfseries Return}  $v_k$
\end{algorithmic}
\end{algorithm}

Note that $v_k$ in \Cref{alg:nsid} is exactly the output of \Cref{alg:lssolver} with $u_1 = w_t(\lambda)$, $u_2 = \bar{T}_t(\lambda)$. Moreover, we obtain the following convergence rate which is completely independent from the inputs $u_1$ and $u_2$.

\begin{lemma}[Linear system rate]\label{lm:linsysrate} Under \Cref{ass:contlip} and \ref{ass:variance},
let $\hat \sigma_2 = 2\sigma'_1(1-q)^{-2}$, $\hat \sigma_1 = \hat \sigma_2\norm{y}^2$,  
and consider the stochastic fixed point iterations in \Cref{alg:lssolver} 
with $\eta_i=\beta /(\gamma+i)$, with $\beta>1 /\left(1-q^2\right)$ and $\gamma \geq \beta\left(1+\hat\sigma_2\right)$. For any $u_2, u_1, y \in \R^d$ let the solution of the linear system be
\begin{align*}
    \bar v := (I - \partial_1 T(u_1,\lambda)^\top \partial_1 G(u_2, \lambda)^\top )^{-1} y.
\end{align*}
Then we have
\begin{equation}
\label{eq:20240311a}
    \EE[\norm{v_k - \bar v}^2] \leq  \frac{\norm{y}^2}{\gamma + k} \max \left\{ \frac{\gamma}{(1-q)^2}, \frac{\beta^2 \hat \sigma_2}{\beta\left(1-q^2\right)-1}\right\}.
\end{equation}
In particular, if we set $\beta=2/(1-q^2), \gamma=2(1+\hat \sigma_2)/(1-q^2)$, we obtain
\begin{equation*}
    \EE[\norm{v_k - \bar v}^2] \leq \frac{1}{k}\cdot \frac{2\norm{y}^2(1+ 4\sigma_1')}{(1-q)^5}. 
\end{equation*}
\end{lemma}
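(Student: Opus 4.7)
The plan is to reduce the statement to Lemma~\ref{lm:20240310} applied to the affine map $\Psi\colon v \mapsto \partial_1 T(u_1,\lambda)^\top \partial_1 G(u_2,\lambda)^\top v + y$ with stochastic approximation $\hat\Psi(v,x) = \partial_1 \hat T_x(u_1,\lambda)^\top \partial_1 G(u_2,\lambda)^\top v + y$. The unique fixed point of $\Psi$ is exactly $\bar v$. First I would verify the two hypotheses of Lemma~\ref{lm:20240310}: the contraction property and the variance bound.

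For the contraction, thanks to \Cref{ass:contlip}\ref{ass:contgt} (where either $T$ or $G$ satisfies \Cref{ass:lipsel}\ref{ass:lipsel_iii}, the other being nonexpansive), we get $\lVert \partial_1 T(u_1,\lambda)^\top \partial_1 G(u_2,\lambda)^\top\rVert \leq \lVert \partial_1 T(u_1,\lambda)\rVert\,\lVert\partial_1 G(u_2,\lambda)\rVert \leq q < 1$, so $\Psi$ is a $q$-contraction. By \Cref{ass:variance}, $\EE[\hat\Psi(v,\xi)] = \Psi(v)$, and
\begin{equation*}
\Var[\hat\Psi(v,\xi)] = \EE\lVert (\partial_1 \hat T_\xi(u_1,\lambda) - \partial_1 T(u_1,\lambda))^\top \partial_1 G(u_2,\lambda)^\top v \rVert^2 \leq \sigma'_1 \lVert v \rVert^2 .
\end{equation*}
To put this in the additive-multiplicative form required by Lemma~\ref{lm:20240310}, I would use the standard fact that the $q$-contraction property of $\Psi$ implies $\lVert v - \bar v \rVert \leq (1-q)^{-1} \lVert v - \Psi(v)\rVert$, together with $\lVert \bar v\rVert \leq (1-q)^{-1}\lVert y\rVert$. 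Combining $\lVert v\rVert^2 \leq 2\lVert v - \bar v\rVert^2 + 2\lVert \bar v\rVert^2$ yields $\Var[\hat\Psi(v,\xi)] \leq \hat\sigma_1 + \hat\sigma_2\lVert v - \Psi(v)\rVert^2$ with exactly the constants $\hat\sigma_2 = 2\sigma'_1/(1-q)^2$ and $\hat\sigma_1 = \hat\sigma_2\lVert y\rVert^2$ stated in the lemma.

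Applying Lemma~\ref{lm:20240310} directly (noting that the iteration in \Cref{alg:lssolver} coincides with the Mann-type averaging there) and using $\lVert\bar v\rVert^2 \leq \lVert y\rVert^2/(1-q)^2$ to bound $\gamma\lVert \bar v\rVert^2$, I factor $\lVert y\rVert^2$ out of both arguments of the maximum (recall $\hat\sigma_1 = \hat\sigma_2\lVert y\rVert^2$) to obtain the first inequality \eqref{eq:20240311a}.

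For the explicit constants, plug in $\beta = 2/(1-q^2)$ and $\gamma = 2(1+\hat\sigma_2)/(1-q^2)$: the admissibility conditions $\beta > 1/(1-q^2)$ and $\gamma \geq \beta(1+\hat\sigma_2)$ hold with equality in the latter, and the denominator $\beta(1-q^2)-1$ equals $1$. Then $\gamma/(1-q)^2 = 2(1+\hat\sigma_2)/[(1+q)(1-q)^3] \leq 2(1+2\sigma'_1)/(1-q)^5$ (using $(1+q)(1-q)^2 \geq (1-q)^2$) and $\beta^2\hat\sigma_2 = 8\sigma'_1/[(1+q)^2(1-q)^4] \leq 8\sigma'_1/(1-q)^5$. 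Bounding the maximum by the sum gives the prefactor $2(1+4\sigma'_1)/(1-q)^5$, and using $\gamma + k \geq k$ produces the announced rate. No step here is a real obstacle — the only mildly delicate point is choosing the right decomposition $\lVert v\rVert^2 \leq 2\lVert v-\bar v\rVert^2 + 2\lVert \bar v\rVert^2$ so that the multiplicative constant $\hat\sigma_2$ is controlled by $\sigma'_1$ and $(1-q)^{-2}$ rather than blowing up.
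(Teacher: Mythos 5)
Your proposal is essentially the paper's own proof: you reduce to Lemma~\ref{lm:20240310} applied to the same affine $\Psi$ and unbiased estimator $\hat\Psi$, verify the $q$-contraction from \Cref{ass:contlip}\ref{ass:contgt}, derive the variance bound via $\Var[\hat\Psi(v,\xi)]\leq\sigma_1'\lVert v\rVert^2$ followed by a contraction estimate to put it in the additive-multiplicative form with exactly $\hat\sigma_1,\hat\sigma_2$, and then plug in the same explicit $\beta,\gamma$. The paper bounds $\lVert v\rVert$ directly from $\lVert v\rVert \leq \lVert v-\Psi(v)\rVert + q\lVert v\rVert + \lVert y\rVert$, whereas you go through $\lVert v-\bar v\rVert \leq (1-q)^{-1}\lVert v-\Psi(v)\rVert$ and $\lVert\bar v\rVert\leq(1-q)^{-1}\lVert y\rVert$; these give identical constants and are interchangeable. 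One tiny wording slip at the end: bounding the maximum by the \emph{sum} of the two terms would yield the prefactor $2(1+6\sigma_1')$, not $2(1+4\sigma_1')$. What actually gives the announced constant (and what the paper does) is to bound each term individually by $2(1+4\sigma_1')/(1-q)^5$, noting that $\max\{2+4\sigma_1',\,8\sigma_1'\}\leq 2+8\sigma_1'=2(1+4\sigma_1')$. The argument is otherwise complete and matches the paper.
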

\begin{proof}
Let 
\begin{equation*}
\Psi(v) := \EE [\hat \Psi(v, \xi)] = \partial_1 T(u_1,\lambda)^\top \partial_1 G(u_2, \lambda)^\top  v + y.     
\end{equation*}
Since $\norm{\partial_1 T(u_1,\lambda)^\top \partial_1 G(u_2, \lambda)^\top} \leq q$,
 $\Psi$ is a $q$-contraction with fixed point $\bar v$. 
 It is immediate to see that
 \begin{equation*}
    \Var[\hat \Psi(v, \xi)] \leq
    \Var[\partial_1 \hat T_\xi(u_1,\lambda)] \norm{v}^2.
 \end{equation*}
 Moreover, we have
 \begin{equation*}
     \norm{v} \leq \norm{v-\Psi(v)} + \norm{\Psi(v)-\Psi(0)}+ \norm{\Psi(0)}
     \leq 
     \norm{v-\Psi(v)}+ q\norm{v} + \norm{y}
 \end{equation*}
 and hence 
 \begin{equation}
\label{eq:20240310a}     
(1-q)\norm{v} \leq \norm{v-\Psi(v)} + \norm{y}, 
 \end{equation}
 which, recalling Assumption~\ref{ass:variance} on the variance of $T_1'$, ultimately yields
\begin{equation*}
    \Var[\hat \Psi(v, \xi)] \leq \frac{2}{(1-q)^2}\Var[\partial_1 \hat T_\xi(u_1,\lambda)]\big(\norm{v- \Psi(v)}^2 + \norm{y}^2 \big)
    \leq \frac{2\sigma_1^\prime}{(1-q)^2}\norm{\Psi(v)-v}^2 + \frac{2\sigma_1^\prime \norm{y}^2}{(1-q)^2}.
\end{equation*}
Therefore, the first part of the statement follows from \Cref{lm:20240310}  and from $\norm{\bar{v}} \leq \norm{y}(1-q)^{-1}$ (a consequence of \eqref{eq:20240310a}). The last part follows 
by \eqref{eq:20240311a}, the equations
\begin{equation*}
    \gamma = \frac{2}{1-q^2}\Big(1 + \frac{2\sigma_1^\prime}{(1-q)^2}\Big) \leq \frac{2(1+2\sigma_1^\prime)}{(1-q^2)(1-q)^2} \quad\text{and}\quad \beta^2\hat{\sigma}_2= \frac{8\sigma_1^\prime}{(1-q)^2(1-q^2)^2}
\end{equation*}
and the fact that $(1-q^2)^{-1} \leq (1-q)^{-1}$ when $q<1$.
\end{proof}

\begin{proof}[\proofname{} of \Cref{cor:nsidrate}]
By applying \Cref{lm:linsysrate} with $u_1 = w_t(\lambda)$ and $u_2 = \bar{T}(w_t(\lambda),\lambda)$ we obtain that \Cref{ass:new}\ref{ass:new_ii} (the rate on the mean square error of $v_k$) is satisfied with $\sigma_\lambda(k) = O(\kappa^5 k^{-1})$. The statement follows by applying \Cref{th:nsidrate} and substituting the rates $\rho_\lambda(t)$ and $\sigma_\lambda(k)$.    
\end{proof}

\section{Bilevel Optimization}
\label{app:BO}

In this section we consider Problem~\eqref{eq:bilevel} and we make the following assumption.
\begin{assumption}\label{ass:elipgradsel}
The map $E$ satisfies Assumption~\ref{ass:lipsel}\ref{ass:lipsel_i} with constant $L_E$ and corresponding conservative derivative  $\D{E}$. 
\end{assumption}

Note that similarly to $\Phi$, since $E$ satisfies Assumption~\ref{ass:elipgradsel}, a direct application of \Cref{lm:glolip} to the map $E$ yields 
\begin{lemma}\label{lm:lipE}
Under \Cref{ass:elipgradsel}, for every $\lambda \in \Lambda$, there exist $R_{E,\lambda} > 0$ such that for every $u \in \R^d$ 
\begin{equation*}
\gap(\D{E}(u,\lambda),\D{E}(w(\lambda),\lambda)) \leq \LE\norm{u- w(\lambda)},
\end{equation*}
where $\LE := L_E +M_{E,\lambda}/R_{E,\lambda}$,
with $M_{E,\lambda} := \max_{i\in \{1,\dots,m\}}\min_{j \in I_E(w(\lambda),\lambda)} \norm{E'_i(w(\lambda),\lambda)-E'_j(w(\lambda),\lambda)}$.
\end{lemma}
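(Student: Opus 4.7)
The plan is to derive this lemma as a direct instance of \Cref{lm:glolip} applied to $E$, in complete analogy to how \Cref{lm:glolipphi} was obtained from \Cref{lm:glolip} for the map $\Phi$. By Assumption~\ref{ass:elipgradsel}, the map $E\colon \R^d \times O_\Lambda \to \R$ is a continuous definable selection of the definable Lipschitz smooth mappings $E_1,\dots,E_r$, each with Lipschitz constant bounded by $L_E$, so the hypotheses of \Cref{lm:glolip} are satisfied.

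The key observation is that, although $E$ is a function of the pair $(u,\lambda)$, the lemma fixes $\lambda$ and only varies the first argument. Thus, setting $p = d+m$, $x := (w(\lambda),\lambda)$ and $x' := (u,\lambda)$ inside the ambient Euclidean space, we have $\norm{x-x'} = \norm{u-w(\lambda)}$. Applying \Cref{lm:glolip} at the base point $x = (w(\lambda),\lambda)$ produces a radius $R_x > 0$, which we rename $R_{E,\lambda}$, together with the constant
\[
M_x = \max_{i\in[r]}\min_{j \in I_E(w(\lambda),\lambda)} \norm{E'_i(w(\lambda),\lambda) - E'_j(w(\lambda),\lambda)} =: M_{E,\lambda}.
\]

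The conclusion \eqref{eq:glolip} then gives
\[
\gap(\D{E}^s(u,\lambda), \D{E}^s(w(\lambda),\lambda)) \leq L_x(x')\,\norm{u-w(\lambda)},
\]
with $L_x(x') \in \{L_E,\, L_E + M_{E,\lambda}/R_{E,\lambda}\}$. Since $L_E \leq L_E + M_{E,\lambda}/R_{E,\lambda} = C_{E,\lambda}$ in both branches of the piecewise definition, the right-hand side is dominated by $C_{E,\lambda}\norm{u-w(\lambda)}$ regardless of which branch applies, yielding the claimed bound.

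There is no real obstacle here; the only step that requires a moment of care is verifying that instantiating $x' = (u,\lambda)$ is legitimate for any $u \in \R^d$ and that the Euclidean distance collapses to $\norm{u-w(\lambda)}$, which is immediate from the product structure of the norm on $\R^d \times \R^m$. The same template was already used to derive \Cref{lm:glolipphi} for $\Phi$, so invoking it here simply requires noting that Assumption~\ref{ass:elipgradsel} places $E$ in the exact same framework. No nontrivial computation is needed beyond the application of \Cref{lm:glolip}.
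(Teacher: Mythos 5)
Your proposal is correct and is exactly what the paper does: Lemma~\ref{lm:lipE} is stated in the paper as ``a direct application of \Cref{lm:glolip} to the map $E$,'' with no further proof given, and your instantiation at $x=(w(\lambda),\lambda)$, $x'=(u,\lambda)$ together with the observation that both branches of $L_x(x')$ are dominated by $L_E+M_{E,\lambda}/R_{E,\lambda}$ fills in the intended details. This is also the same majorization used to pass from the $u$-dependent constant of \Cref{lm:glolipphi} to the $u$-independent constant, as the paper remarks right after \Cref{lm:lipgt}.
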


\subsection{Deterministic Case}

\begin{theorem}\label{th:aiditdhgrates}
Let \Cref{ass:lipsel} and \ref{ass:elipgradsel} hold. Then for every $\lambda \in \Lambda$ and every $t,k \in \N$ we have that
for BAID-FP we get
\begin{equation*}
     \gap(\D{f_t}^k(\lambda), \D{f}^{\fix}(\lambda)) = O(\kappa e^{-k/\kappa} +\kappa^2 \Delta_t)
\end{equation*}
while if $w_t(\lambda) = \Phi(w_{t-1}(\lambda),\lambda)$, for BITD we get
\begin{equation*}
     \gap(\D{f_t}(\lambda), \D{f}^{\fix}(\lambda)) = O(\kappa t e^{-\kappa/t}).
\end{equation*}
\end{theorem}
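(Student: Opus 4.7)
The natural approach is to mimic the two-step excess decomposition that worked for \Cref{th:itdaidrates}. Writing $\A_t := \D{E}(w_t(\lambda),\lambda)$ and $\B := \D{E}(w(\lambda),\lambda)$, and viewing both $\D{f_t}^k(\lambda) = \A_t(\D{w_t}^k(\lambda))$ and $\D{f}^{\fix}(\lambda) = \B(\D{w}^{\fix}(\lambda))$ as images of affine set-valued maps (using the convention \eqref{eq:matrixaffine}), the triangle inequality for the excess (\Cref{lm:gapprop}\ref{eq:gapti}) gives
\begin{equation*}
\gap(\D{f_t}^k(\lambda),\D{f}^{\fix}(\lambda)) \leq \gap\!\big(\A_t(\D{w_t}^k(\lambda)),\,\A_t(\D{w}^{\fix}(\lambda))\big) + \gap\!\big(\A_t(\D{w}^{\fix}(\lambda)),\,\B(\D{w}^{\fix}(\lambda))\big),
\end{equation*}
and analogously for the BITD case with $\D{w_t}(\lambda)$ in place of $\D{w_t}^k(\lambda)$.

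\textbf{Bounding each piece.} For the first term I apply \Cref{lm:gapprop}\ref{lm:affineapp} with fixed outer matrix set $\A_t$ to obtain the factor $\norm{\D{E,1}(w_t(\lambda),\lambda)}_{\sup}$ times $\gap(\D{w_t}^k(\lambda),\D{w}^{\fix}(\lambda))$ (resp.\ with ITD iterates). The excess on $\D{w}$ is controlled by \Cref{th:itdaidrates}, giving $O(\kappa e^{-k/\kappa}+\kappa^2\Delta_t)$ for BAID-FP and $O(\kappa t\, e^{-t/\kappa})$ for BITD (using $q^t\le t q^{t-1}$ to absorb the leading term). For the second term, \Cref{lm:gapprop}\ref{lm:affineapp} yields the bound $(1+\norm{\D{w}^{\fix}(\lambda)}_{\sup})\,\gap(\A_t,\B)$; \Cref{lm:Jimpfixbound} gives $\norm{\D{w}^{\fix}(\lambda)}_{\sup}=O(\kappa)$ and \Cref{lm:lipE} controls $\gap(\A_t,\B) \leq \LE\Delta_t$, so the whole term is $O(\kappa\Delta_t)$.

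\textbf{Combining.} Adding the two bounds, the BAID-FP estimate becomes $O(\kappa e^{-k/\kappa}+\kappa^2\Delta_t) + O(\kappa\Delta_t) = O(\kappa e^{-k/\kappa}+\kappa^2\Delta_t)$, as claimed. In the BITD case, the assumption $w_t(\lambda)=\Phi(w_{t-1}(\lambda),\lambda)$ from \Cref{th:itdaidrates} gives $\Delta_t\le q^t\Delta_0$, so the second term is $O(\kappa q^t)$, which is absorbed into the first term $O(\kappa t\,e^{-t/\kappa})$ (since $q^t \le t q^{t-1}\cdot q^{-1}/(t/q)$... more simply $\kappa q^t\le \kappa t q^{t-1}$ for $t\ge 1$), yielding the stated rate.

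\textbf{Main obstacle.} The only delicate point is ensuring the prefactor $\norm{\D{E,1}(w_t(\lambda),\lambda)}_{\sup}$ does not blow up. Under \Cref{ass:elipgradsel}, each $E'_i$ is $L_E$-Lipschitz, so for any matrix $A\in\D{E}(w_t(\lambda),\lambda)$ one has $\norm{A}\le \max_i\norm{E'_i(w(\lambda),\lambda)} + L_E\Delta_t$; since $\Delta_t\to 0$, this quantity is uniformly bounded in $t$ by a constant depending only on $\lambda$, and is thus absorbed into the $O(\cdot)$ notation. Taking a single projection onto the first $d$ columns via \Cref{lm:gapprop}\ref{lm:gapprop_iv} yields the corresponding bound for $\norm{\D{E,1}(w_t(\lambda),\lambda)}_{\sup}$. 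With this established, the argument is essentially mechanical, tracking which powers of $\kappa$ appear in each contribution.
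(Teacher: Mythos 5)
Your proposal is correct and follows essentially the same route as the paper: the same triangle-inequality decomposition through $\A_t(\D{w}^{\fix}(\lambda))$, the same invocations of \Cref{lm:gapprop}\ref{eq:gapti}\ref{lm:affineapp}, \Cref{lm:Jimpfixbound}, \Cref{lm:lipE}, and \Cref{th:itdaidrates}, and the same absorption $\Delta_t \le \Delta_0 e^{-t/\kappa}$ for BITD. Your handling of the prefactor $\norm{\D{E,1}(w_t(\lambda),\lambda)}_{\sup}$ (via Lipschitzness of the pieces $E'_i$) is equivalent to the paper's bound $\norm{\D{E,1}(w(\lambda),\lambda)}_{\sup}+\LE\Delta_t$ obtained from the excess triangle inequality and \Cref{lm:lipE}.
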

\begin{proof}
For simplicity, let $\A = \D{E}(w(\lambda),\lambda)$, $\A_t = \D{E}(w_t(\lambda),\lambda)$ and recall that
\begin{align*}
    \D{f_t}(\lambda) = \A_t (\D{w_t}(\lambda)), \qquad \D{f}^{\fix}(\lambda) = \A (\D{w}^{\fix}(\lambda)).
\end{align*}
Using the properties of excess in \Cref{lm:gapprop} we obtain, for BITD:
\begin{align*}
    \gap(\D{f_t}(\lambda), \D{f}^{\fix}(\lambda) ) &\leq \gap(\A_t (\D{w_t}(\lambda)), \A_t (\D{w}^{\fix}(\lambda)) ) 
    + \gap(\A_t (\D{w}^{\fix}(\lambda)), \A (\D{w}^{\fix}(\lambda)) )  \\[0.8ex]
    &\leq
    \norm{\D{E,1}(w_t(\lambda),\lambda)}_{\sup}\gap(\D{w_t}(\lambda),\D{w}^{\fix}(\lambda))  \\[0.8ex]
    &\quad+\left(1+\norm{\D{w}^{\fix}(\lambda)}_{\sup}\right) \gap(\D{E}(w_t(\lambda),\lambda), \D{E}(w(\lambda),\lambda)) \\[0.8ex]
    &\leq (\norm{\D{E,1}(w(\lambda),\lambda)}_{\sup} + \LE\Delta_t) \gap(\D{w_t}(\lambda),\D{w}^{\fix}(\lambda)) \\[0.8ex]
    &+ \Big(\frac{\norm{\D{\Phi,2}(w(\lambda),\lambda)}_{\sup}}{1-q} + 1\Big)\LE\Delta_t \\ %
    &\leq (\norm{\D{E,1}(w(\lambda),\lambda)}_{\sup} + \LE \Delta_0) \times \underbrace{O(\kappa t e^{-t/\kappa})}_{(*)} \\
    &+ \Big(\frac{\norm{\D{\Phi,2}(w(\lambda),\lambda)}_{\sup}}{1-q} + 1\Big)\LE\Delta_0 e^{-t/\kappa},
\end{align*}
where we used $\Delta_t \leq \Delta_0 e^{-t/\kappa} < \Delta_0$, the ITD bound in \Cref{th:itdaidrates} and \Cref{lm:lipE}.
A very similar proof can be done for AID-FP by changing the $(*)$ term to $O(\kappa e^{-k/\kappa} +\kappa^2 e^{-t/\kappa}).$
\end{proof}

\subsection{Stochastic Case}
We consider the special case of  Problem~\eqref{eq:bilevel} with 
\begin{equation*}
    E(w,\lambda) = \EE[\hat E_\zeta(w,\lambda)], \ \Phi(w,\lambda) = G(\EE[\hat T_\xi(w,\lambda)], \lambda).
\end{equation*}
In addition to \Cref{ass:elip}, as for the smooth case in \citep{grazzi2023bilevel}, we consider the following assumption on $E$
\begin{assumption}\label{ass:elip}
For any $\lambda \in \Lambda$ there exists $B_{E,\lambda}\geq 0$ such that 
\begin{equation*}
\forall\,w \in \R^d\colon \ \norm{\D{E,1}(w,\lambda)}_{\sup} \leq B_{E,\lambda}.
\end{equation*}
\end{assumption}
The assumption above is verified e.g., for the logistic and for the cross-entropy loss.
Moreover, the assumptions on $\hat E$ are the following.
\begin{assumption}\label{ass:Estoch} $\zeta$ is a random variable with values in $\mathcal{Z}$ and for every $z \in \mathcal{Z}$  
\begin{enumerate}[label={\rm (\roman*)}]
\item $\hat{E}_z\colon \R^d\times O_{\Lambda} \to \R^d$,
$\EE[\hat{E}_\zeta(u,\lambda)] = E(u,\lambda)$.
\item 
$\hat{E}_z$ is path differentiable with conservative derivative $ D_{\hat E}$
and $E_z'$ is a selection of $ D_{\hat E}$
such that $\hat E_z'(u,\lambda) = [\partial_1 \hat E_z(u,\lambda), \partial_2 \hat E_\zeta(u,\lambda)]$ 
and there exist $\sigma_{E,1}, \sigma_{E,2} \geq 0$ such that for every $u \in \R^d$ and $\lambda\in \Lambda$ 
\begin{gather*}
\EE\big[E_\zeta'(u,\lambda)\big] = E'(u,\lambda)\in \D{E}(u,\lambda), \quad
\Var[\partial_1 \hat E_\zeta(u,\lambda)] \leq \sigma_{E,1}, \quad \Var[\partial_2 \hat E_\zeta(u,\lambda)] \leq \sigma_{E,2}.
\end{gather*}
\end{enumerate}
\end{assumption}

\begin{theorem}\label{th:nsidhgrates} Let \Cref{ass:contlip}, \ref{ass:variance}, \ref{ass:elipgradsel}, \ref{ass:elip}, \ref{ass:Estoch} hold
and let $\kappa = (1-q)^{-1}$. Also, suppose that $\EE[\norm{w_t(\lambda) - w(\lambda)}]\leq \rho_\lambda(t)$, for every $t \in \N$. Then the output $\hat{\nabla} f(\lambda)^\top$
 of NSID-Bilevel (\Cref{alg:nsid-bilevel}) where NSID uses step sizes $\eta_i = \Theta(i^{-1})$ satisfies
\begin{equation*}
    \EE[\gap(\hat{\nabla} f(\lambda)^\top, \D{f}^{\imp}(\lambda))^2] = O\left( \frac{\kappa^5}{k}+ \kappa^4 \left(\frac{1}{J_2} + \rho_\lambda(t)\right)+  \frac{\kappa^2}{J_1} \right). 
\end{equation*}
Furthermore, if $\rho_\lambda(t) = O(\kappa^{\alpha} t^{-1})$ ($\alpha > 0$), then
\begin{equation*}
    \EE[\gap(\hat{\nabla} f(\lambda)^\top, \D{f}^{\imp}(\lambda))^2] = O\big(\kappa^2 J_1^{-1} + \kappa^5(k^{-1} + J_2^{-1} + \kappa^{\alpha}t^{-1})\big). 
\end{equation*}
Therefore, by setting e.g., $t = k = J_1 = J_2$ we have
\begin{equation*}
    \EE[\gap(\hat{\nabla} f(\lambda)^\top, \D{f}^{\imp}(\lambda))^2] = O(\kappa^{5 + \alpha} t^{-1})
\end{equation*}
which has the same dependency on $t$ as stochastic gradient descent on strongly convex and Lipschitz smooth objectives \citep{bottou2018optimization}.
\end{theorem}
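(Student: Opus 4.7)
The plan is to reduce the bound to three independent pieces: the single-level NSID guarantee from Theorem~\ref{th:nsidrate} applied to the inner call, the mini-batch variance of the estimator $\bar E'$, and a path-differentiable Lipschitz-type control on the distance of $E'(w_t(\lambda),\lambda)$ to $\D{E}(w(\lambda),\lambda)$ provided by Lemma~\ref{lm:lipE}.

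First I would set up an error decomposition. For any $A \in \D{w}^{\imp}(\lambda)$ and any $[B_1,B_2] \in \D{E}(w(\lambda),\lambda)$ the vector $B_1 A + B_2$ lies in $\D{f}^{\imp}(\lambda)$, and with $y = \partial_1 \bar E(w_t(\lambda),\lambda)^\top$ and $r$ the output of the inner NSID call we can write
\begin{equation*}
\hat{\nabla} f(\lambda)^\top - (B_1 A + B_2) = (r^\top - y^\top A) + (\partial_1 \bar E(w_t(\lambda),\lambda) - B_1) A + (\partial_2 \bar E(w_t(\lambda),\lambda) - B_2).
\end{equation*}
Squaring, applying $(a+b+c)^2 \leq 3(a^2+b^2+c^2)$, and using the uniform estimate $\|A\|^2 = O(\kappa^2)$ from Lemma~\ref{lm:Jimpfixbound} decouples the three contributions.

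For the first summand I would apply Theorem~\ref{th:nsidrate} directly. The checks needed are that conditionally on $w_t(\lambda)$, $y$ is unbiased for $\partial_1 E(w_t(\lambda),\lambda)^\top$ whose norm is bounded by $B_{E,\lambda}$ by Assumption~\ref{ass:elip}, and its variance is at most $\sigma_{E,1}/J_1 \leq \sigma_{E,1}$ by Assumption~\ref{ass:Estoch}, so that Assumption~\ref{ass:new}\ref{ass:new_iii} holds with $b^2 = B_{E,\lambda}^2 + \sigma_{E,1}$. Since $\bm{\zeta}^{(1)}$ is independent of $\bm{\xi}^{(1)},\bm{\xi}^{(2)}$, $y$ is independent of the inner NSID iterates given $w_t(\lambda)$, and the theorem yields an $O(\sigma_\lambda(k) + \kappa^4(J_2^{-1} + \rho_\lambda(t)))$ contribution. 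For the other two summands I would select $[B_1,B_2]$ as the nearest element of $\D{E}(w(\lambda),\lambda)$ to $E'(w_t(\lambda),\lambda)$; Lemma~\ref{lm:lipE} then gives $\|E'(w_t(\lambda),\lambda) - [B_1,B_2]\| \leq \LE \|w_t(\lambda) - w(\lambda)\|$, and a bias-variance split together with Assumption~\ref{ass:Estoch} yields $\EE[\|\partial_j \bar E(w_t(\lambda),\lambda) - B_j\|^2] \leq 2\sigma_{E,j}/J_1 + 2\LE^2 \rho_\lambda(t)$ for $j \in \{1,2\}$. Multiplying the $j=1$ piece by $\|A\|^2 = O(\kappa^2)$ gives $O(\kappa^2(J_1^{-1} + \rho_\lambda(t)))$, while the $j=2$ piece stays at $O(J_1^{-1} + \rho_\lambda(t))$. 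Combining these with the first summand and substituting $\sigma_\lambda(k) = O(\kappa^5/k)$ from Lemma~\ref{lm:linsysrate} delivers the first bound; the two corollary forms follow by plugging in $\rho_\lambda(t) = O(\kappa^\alpha/t)$, absorbing smaller powers of $\kappa$ into $\kappa^5$, and equating indices.

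The main obstacle will be clean bookkeeping of the conditional expectations and independence structure. Theorem~\ref{th:nsidrate}'s proof picks a specific $A \in \D{w}^{\imp}(\lambda)$ depending on the algorithm's randomness, so I must ensure this implicit choice is compatible with the simultaneous choice of $[B_1,B_2]$ in the bias term; because $\|A\|$ is uniformly bounded on $\D{w}^{\imp}(\lambda)$ by Lemma~\ref{lm:Jimpfixbound}, no specific choice is privileged and the three contributions indeed decouple. Taking conditional expectations in the order $\bm{\xi}^{(2)}$, $\bm{\xi}^{(1)}$, $\bm{\zeta}^{(1)}$, $w_t(\lambda)$ lets each piece be estimated by the appropriate variance or smoothness bound, after which the remaining arithmetic is routine.
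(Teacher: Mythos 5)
Your proposal is correct and matches the paper's proof in all essential respects. Both proofs use the same three-way decomposition (inner NSID error, mini-batch variance of $\bar E'$, and the bias term controlled by Lemma~\ref{lm:lipE}), the same invocations of Theorem~\ref{th:nsidrate}, Lemma~\ref{lm:Jimpfixbound}, and Lemma~\ref{lm:linsysrate}, and the same bound $b^2 = O(B_{E,\lambda}^2 + \sigma_{E,1})$ for the random $y$; the only difference is that the paper carries out the split at the set level via the excess triangle inequality with the intermediate set $\{\bar E'(z_t)\}(\D{w}^{\imp}(\lambda))$ and the $(1+\|\D{w}^{\imp}(\lambda)\|_{\sup})$ factor from Lemma~\ref{lm:gapprop}\ref{lm:affineapp}, whereas you write the algebraically equivalent pointwise identity $\hat{\nabla} f(\lambda)^\top - (B_1 A + B_2) = (r^\top - y^\top A) + (\partial_1 \bar E - B_1) A + (\partial_2 \bar E - B_2)$, which is a matter of presentation rather than substance.
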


\begin{proof}
For simplicity, let $z_t = (w_t(\lambda), \lambda)$, $z = (w(\lambda), \lambda)$, $\A = \D{E}(w(\lambda),\lambda)$, $\B_t = \{\bar{E}'(\hz)\}$.
We also recall that
\begin{align*}
\D{f}^{\imp}(\lambda) := \A (\D{w}^{\imp}(\lambda)), \qquad
\hat{\nabla} f(\lambda)^\top = r(\hz)^\top + \partial_2\bar{E}(\hz),
\end{align*}
with $r(\hz)$ which is an estimator of $\D{w}^{\imp}(\lambda)^\top \partial_1 \bar{E}(\hz)$.
Then, using the properties in \Cref{lm:gapprop} and noting that $\B_t(\D{w}^{\imp}(\lambda)) = \partial_1 \bar{E}(\hz) \D{w}^{\imp}(\lambda) + \partial_2 \bar{E}(\hz)$, we have
\begin{align*}
\gap(&\hat{\nabla} f(\lambda)^\top, \D{f}^{\imp}(\lambda))\\[1ex]
&\leq \gap(\hat{\nabla} f(\lambda)^\top, \B_t(\D{w}^{\imp}(\lambda)) + \gap(\B_t(\D{w}^{\imp}(\lambda)), \A (\D{w}^{\imp}(\lambda))) \\[0.8ex]
&\leq \gap(r(\hz), \D{w}^{\imp}(\lambda)^\top\, \partial_1 \bar{E}(\hz))
+ (1 + \norm{\D{w}^{\imp}(\lambda)}_{\sup})\gap(\bar{E}'(\hz), \D{E}(\sz))  \\[0.8ex]
&\leq \gap(r(\hz), \D{w}^{\imp}(\lambda)^\top\, \partial_1 \bar{E}(\hz))
+ (1 + \norm{\D{w}^{\imp}(\lambda)}_{\sup})(\norm{\bar{E}'(\hz) - E'(\hz)} +\gap(E'(\hz), \D{E}(\sz)))  \\[0.8ex]
&\leq \gap(r(\hz), \D{w}^{\imp}(\lambda)^\top\, \partial_1 \bar{E}(\hz))
+ (1 + \norm{\D{w}^{\imp}(\lambda)}_{\sup})(\norm{\bar{E}'(\hz) - E'(\hz)} +\LE\Delta_t )  
\end{align*}
Moreover, let $\tilde{\EE} = \EE[\,\cdot \given w_t(\lambda)]$, we have that
\begin{align*}
\tilde{\EE}[\norm{\bar{E}'(\hz) - E'(\hz)}^2] &= \tilde{\EE}[\norm{\partial_1 \bar{E}(\hz) - \partial_1 E(\hz)}^2] + \tilde{\EE}[\norm{\partial_2\bar{E}(\hz) - \partial_2E(\hz)}^2]  \\
&\leq \frac{\Var[\partial_1 \hat{E}_\zeta(z_t)\given w_t(\lambda)] + \Var[\partial_2 \hat{E}_\zeta (z_t)\given w_t(\lambda)]}{J_1} \leq \frac{\sigma_{E,1} + \sigma_{E,2}}{J_1}.
\end{align*}
Hence
\begin{align}
\label{eq:20240314a}
\nonumber
\tilde{\EE}[&\gap(\hat{\nabla} f(\lambda)^\top, \D{f}^{\imp}(\lambda))^2]\\[1ex]
&\leq 3\left( \tilde{\EE}[\gap(r(\hz), \D{w}^{\imp}(\lambda)^\top\, \partial_1 \bar{E}(\hz))^2 ]
+ (1+ \norm{\D{w}^{\imp}(\lambda)}_{\sup})^2 \Big(C^2_{E,\lambda}\Delta^2_t + \frac{\sigma_{E,1} + \sigma_{E,2}}{J_1}\Big) \right).
\end{align}
We also note that $\norm{\D{w}^{\imp}(\lambda)}_{\sup} \leq \norm{\D{\Phi,2}(w(\lambda),\lambda)}_{\sup}/(1-q)$ and that
\begin{equation}\label{eq:ebaruseful}
    \tilde{\EE} \big[\norm{\partial_1\bar{E}(\hz)}^2\big] \leq 
        2\tilde{\EE}\big[\norm{\partial_1\bar{E}(\hz) - \partial_1 E(\hz)}^2\big] +
    2\norm{\D{E,1}(\hz)}^2_{\sup} \leq \frac{2\sigma_{E,1}}{J_1}+ 2 B_{E,\lambda}.
\end{equation}
Therefore, taking the total expectation in \eqref{eq:20240314a} and applying Theorem~\ref{th:nsidrate} with $y = \partial_1\bar{E}(\hz)$ we get
\begin{align*}
\EE[\gap(\hat{\nabla} &f(\lambda)^\top, \D{f}^{\imp}(\lambda))^2]\\[1ex]
&\leq O\left( \sigma_\lambda(k)+ \kappa^4 \bigg(\frac{2\sigma_{E,1}}{J_1}+ 2 B_{E,\lambda} \bigg)\left(\frac{1}{J_2} + \rho_\lambda(t)\right)\right) \\[0.8ex]
&\qquad +  3 (1+ \kappa \norm{\D{\Phi,2}(w(\lambda),\lambda)}_{\sup})^2 \Big(C^2_{E,\lambda}\Delta^2_t + \frac{\sigma_{E,1} + \sigma_{E,2}}{J_1}\Big) \big)\\[0.8ex]
& = O\left( \sigma_\lambda(k)+ \kappa^4 \left(\frac{1}{J_2} + \rho_\lambda(t)\right)+  \frac{\kappa^2}{J_1} \right).
\end{align*}
The first part of the statement follows by noting that for NSID we have $\sigma_\lambda(k)  = O(\kappa^5/k)$, where $\kappa = 1/(1-q)$.
The second and last result are immediate.
\end{proof}

\section{Experimental Details}
\label{app:exps}
We give more information on the numerical experiments in \Cref{se:exp}.

\subsection{Computing the approximation Error.}\label{se:computingapproxerror} 
Let $c \in \R^{m}$, be the output of an algorithm approximating the jacobian vector product $\D{w}^{\fix}(\lambda)^\top y$. We call approximation error the quantity
\begin{equation*}
    \gap(c, \D{w}^{\fix}(\lambda)^\top y).
\end{equation*}
Since $\D{w}^{\fix}(\lambda)^\top y$ is set valued and each element is not available in closed form, we instead approximate an upper bound to this quantity using AID-FP for enough iterations $k$, which as we mention in \Cref{se:detitdaid}, generates a subsequence linearly converging to an element of $\D{w}^{\fix}(\lambda)^\top y$. Also, as a starting point to AID-FP we use $w_t(\lambda) = \Phi(w_{t-1}(\lambda),\lambda)$, with sufficiently large $t$ and starting from $w_0(\lambda) = 0$, so to be sufficiently close to the fixed point solution $w(\lambda)$, also not available in closed form.

\subsection{Constructing the fixed-point map.}
In all the experiments, we consider composite minimization problems in the form 
\begin{equation*}
    \min_{u} f_\lambda(u) + g_\lambda(u).
\end{equation*}
To convert it to fixed point we set a step size $\eta_\lambda > 0$ and set
\begin{equation*}
    \Phi(u, \lambda) = G(T(u,\lambda), \lambda), 
\end{equation*}
with
\begin{equation*}
    G(u,\lambda) = \prox_{\eta_\lambda g_\lambda}(u) \qquad 
    T(w,\lambda) = u -\eta_\lambda \nabla f_\lambda(u)
\end{equation*}
In particular, since in our case $g_\lambda$ is always the an elastic net regularization, $\prox$ is the soft-thresholding and we set 
\begin{equation*}
    \eta_\lambda = \frac{2}{c (L+ \mu) + 2\lambda_2},
\end{equation*} where $\lambda_2$ is the L2 regularization parameter and $L, \mu$ are the largest and smallest eigenvalues of $n^{-1} X^\top X$, where $X$ is the design matrix of the training set (which contains the corrupted points in the case of data poisoning). We set $c=1$ for the elastic net experiments with synthetic data, while we set $c=0.1$ for the poisoning experiments. The choice $c=1$ yields the optimal theoretical value for the step-size when using the square loss, while we used a difference choice for data poisoning (using the cross-entropy loss) since we found the optimal theoretical value for the step-size too conservative. To set the stochastic step-size schedules we also set the (estimated) contraction constant as $q = \max\{\lvert 1- \eta_{\lambda} (c L +\lambda_2) \rvert, \lvert 1- \eta_{\lambda} (c \mu +\lambda_2) \rvert\}$.

\subsection{Details for the AID and ITD Experiments}
We construct the synthetic dataset by sampling each element of the matrix $X \in \R^{n \times d}$ and the vector $w$ from a normal distribution. Subsequently, we set the non-informative features of $w$ to zero and we compute the vector $y$ as $y = Xw + \epsilon$, where $\epsilon_i$ is Gaussian noise with mean $0.1$ and unit variance. For this experiment we set $n = 100$ and $p=100$ of which $30$ are informative. We use $200$ hold-out examples for the validation set.

\subsection{Details for the Stochastic Experiments}
We start by noting that for each point in the plots for the stochastic experiments in \Cref{fig:hdstoch} $w_t(\lambda)$ is fixed as the last iterate of deterministic iterative soft-thresholding, so that the focus lies entirely on the computation of the derivative.

For elastic net, we enhance the setup used for the deterministic methods by sampling the population covariance matrix randomly for the informative features. To do so, we first sample a matrix $A_1$ from a standard normal, then we normalize all eigenvalues by diving all of them by their maximum obtaining $A_2$, finally we use the normalized $A_2^\top A_2$ as the covariance matrix of a Gaussian distribution for the informative features. This introduces correlations among the features, thereby increasing the complexity of the problem. We also increase the number of training points from $100$ to $10K$ and the number of validation points from $200$ to $20K$.

For the data poisoning setup we use the MNIST dataset. We split the MNIST original train set into $30K$ example for training and $30K$ examples for validation. 
Additionally, we perform a random split of the training set into $X \in \R^{n \times d}$ and $\tilde{X} \in \R^{n'\times p}$, with $p=784$ representing the number of features for MNIST images. Notably, $n' = 9K$ denotes the number of corrupted examples. It is essential to highlight that $\Gamma \in \R^{n' \times p}$ and $n'p$ is approximately $7$ million, posing a significant challenge for derivative estimation using zero-order methods.
We set the regularization parameters $\lambda = (0.02, 0.1)$ since with this setup, the final uncorrputed linear model achieves a validation accuracy of around $80\%$ with around $90\%$ of components set to zero.

We note that NSID and SID require to choose the step sizes $(\eta_i)$, which we found to be difficult, since the theoretical values are often conservative estimates for this problem. We try two policies: constant and decreasing (as $\Theta(1/i)$) step sizes, indicated with ``const'' and ``dec'' after the method name respectively. Note that only when the step sizes are decreasing NSID is guaranteed to converge. To simplify the setup, we always have the same $\eta_0$ for both constant and decreasing step-size policies. Moreover, we set the step size of SID equal to that of NSID, when they use the same step sizes policy. 
More speifically, we set $\eta_i = a_1/(a_2 + i)$ for (N)SID dec and $\eta = a_1/a_2$ for (N)SID const, where $a_1 = b_1 \beta$ and $a_2 = b_2 \beta$, where beta is set to the theoretical value suggested in \Cref{lm:linsysrate} ($2/(1-q^2)$). We tuned $a_1, a_2$ manually for each setting. In particular we set $a_1=0.5, a_2=2$ for the synthetic Elastic net experiment and $a_1=2, a_2=0.01$ for Data poisoning.

\end{document}